\documentclass{article}

\usepackage{microtype}
\usepackage{graphicx}
\usepackage{subcaption}
\usepackage{booktabs} 

\usepackage{hyperref}

\usepackage[noend]{algorithmic}

\usepackage[preprint]{icml2026}

\usepackage{amsmath}
\usepackage{amssymb}
\usepackage{mathtools}
\usepackage{amsthm}

\usepackage{xcolor, colortbl}
\usepackage{enumitem}
\usepackage{multirow, hhline}
\usepackage{titletoc}

\colorlet{tabcolor}{brown!20}
\newcolumntype{g}{>{\columncolor{tabcolor}}c}

\DeclareMathOperator*{\argmin}{arg\,min}
\DeclareMathOperator*{\diag}{diag}
\DeclareMathOperator*{\tr}{tr}
\DeclareMathOperator*{\Beta}{Beta}
\newcommand{\bs}[1]{\boldsymbol{#1}}

\newcommand{\revised}[1]{\textcolor{black}{#1}}

\usepackage[capitalize,noabbrev]{cleveref}

\theoremstyle{plain}
\newtheorem{theorem}{Theorem}[section]
\newtheorem{proposition}[theorem]{Proposition}
\newtheorem{lemma}[theorem]{Lemma}
\newtheorem{corollary}[theorem]{Corollary}
\theoremstyle{definition}
\newtheorem{definition}[theorem]{Definition}
\newtheorem{assumption}[theorem]{Assumption}
\theoremstyle{remark}

\usepackage[textsize=tiny]{todonotes}

\icmltitlerunning{Sporadic Gradient Tracking over Directed Graphs}

\begin{document}

\twocolumn[
  \icmltitle{Sporadic Gradient Tracking over Directed Graphs: A Theoretical \\ Perspective on Decentralized Federated Learning}



  \icmlsetsymbol{equal}{*}

  \begin{icmlauthorlist}
    \icmlauthor{Shahryar Zehtabi}{yyy}
    \icmlauthor{Dong-Jun Han}{comp}
    \icmlauthor{Seyyedali Hosseinalipour}{sch}
    \icmlauthor{Christopher Brinton}{yyy}
  \end{icmlauthorlist}

  \icmlaffiliation{yyy}{School of Electrical and Computer Engineering, Purdue University, West Lafayette, IN, USA}
  \icmlaffiliation{comp}{Department of Computer Science and Engineering, Yonsei University, Seoul, South Korea}
  \icmlaffiliation{sch}{Department of Electrical Engineering, University at Buffalo -- SUNY, Buffalo, NY, USA}

  \icmlcorrespondingauthor{Shahryar Zehtabi}{szehtabi@purdue.edu}

  \icmlkeywords{Machine Learning, ICML}

  \vskip 0.3in
]



\printAffiliationsAndNotice{}  

\begin{abstract}
    Decentralized Federated Learning (DFL) enables clients with local data to collaborate
    in a peer-to-peer manner to train a generalized model.
    In this paper, we unify two branches of work that have separately solved important challenges in DFL: (i) gradient tracking techniques for mitigating data heterogeneity and (ii) accounting for diverse availability of resources across clients. We
    propose \textit{Sporadic Gradient Tracking} (\texttt{Spod-GT}), the first DFL algorithm
    that incorporates these factors over general directed graphs
    by allowing (i) client-specific gradient computation frequencies and (ii) heterogeneous and asymmetric communication frequencies. We conduct a rigorous convergence analysis of our methodology
    with relaxed assumptions on gradient estimation variance and gradient diversity of clients, providing consensus and optimality guarantees for GT over directed graphs despite intermittent client participation. Through numerical experiments on image classification datasets, we demonstrate the efficacy of \texttt{Spod-GT} compared to well-known GT baselines.
\end{abstract}

\section{Introduction} \label{sec:intro}

Federated learning (FL) has been well explored as a privacy-preserving distributed training method over the past decade \citep{kairouz2021advances}. While much work has studied FL's conventional server-client system architecture, there has been recent interest in decentralized federated learning (DFL) over peer-to-peer networks, i.e., for scenarios where no central server is available \citep{koloskova2020unified}. In DFL, the model aggregation step traditionally conducted at the server occurs instead through a consensus algorithm executed over the client network graph.

Similar to FL, DFL algorithms face heterogeneity challenges in multiple aspects, including
diversity in client communication and computation capabilities, and variation in local dataset statistics across
clients \citep{li2020federated}. Moreover, in DFL, the connectivity and structure (i.e., directed or undirected) of the network graph becomes a critical factor influencing training performance \cite{zehtabi2025decentralized}. This has given rise to two parallel branches of work in the DFL literature. The first has focused on resource and data heterogeneity, simplifying inter-client communications to an undirected graph \citep{liu2025decentralized}. However, in many scenarios, client communications will be asymmetric, particularly in wireless communication networks where devices may transmit at different power levels, experience different noise levels, or other factors causing unidirectional channels \citep{nedic2025ab}. Along these lines, the second branch of work has modeled inter-client communications via a directed graph, with an emphasis on improving the convergence rate of DFL via gradient tracking (GT) methods \citep{nguyen2024decentralized}. However, this branch has largely overlooked resource heterogeneity considerations covered in the first line of work.

In this paper, we aim to unify these two branches of work by answering the following research question:
\vspace{-1.5mm}
\begin{itemize}[leftmargin=*]
    \item[] \textit{How can we develop a consensus-based gradient tracking algorithm for DFL over directed graphs that is aware of heterogeneities in computations and communications of the clients while maintaining convergence guarantees?}
\end{itemize}
\vspace{-1.5mm}

\textbf{Contributions.} Our main contributions are as follows:
\vspace{-1.5mm}
\begin{itemize}[leftmargin=*]
    \item We develop \textit{Sporadic Gradient Tracking} (\texttt{Spod-GT}), the first computation- and communication-aware algorithm for consensus-based DFL over directed graphs. By allowing each client to participate in (i) gradient computation based on its processing resources and (ii) inter-client communication based on its communication resources, \texttt{Spod-GT} achieves target accuracies faster than its non-heterogeneity-aware counterparts. In \texttt{Spod-GT}, clients with lower available resources (stragglers) participate in DFL with less frequency than resource-abundant clients, balancing the training load in a client-specific way.

    \item We provide a rigorous convergence analysis of GT-enhanced DFL for non-convex loss functions when clients are connected through a directed graph. Unique to our setup, we factor in the effect of sporadicity in both (i) gradient calculations and (ii) inter-client communications, and illustrate how our analysis reverts back to conventional GT methods as special cases. We overcome novel challenges in characterizing consensus and optimality through joint analysis of sporadically occurring GT updates and sporadic communications over directed graphs.

    \item By relaxing assumptions on the gradient estimation variance and gradient diversity compared to existing literature, our theoretical results elucidate novel DFL dynamics. To the best of our knowledge, our work is the first to show a clear dependence of (i) minimum allowed gradient computation probability on non-uniform gradient diversity, (ii) minimum allowed inter-client communication probability on spectral characteristics of the underlying directed graph, (iii) minimum allowed batch size on non-uniform gradient estimation variance.

    \item We validate the efficacy of our proposed methodology by comparing it against state-of-the-art GT methods that are designed for directed graphs.
\end{itemize}

See Appendix~\ref{app:supp:not} for a summary of notation.

\section{Related Work}

Table~\ref{tab:comparison} summarizes key contributions of our work relative to closely related literature on GT algorithms for DFL. Our paper is the first to consider sporadic GT steps
and aggregations simultaneously, capturing heterogeneous resources in fully-decentralized directed networks. In the following, we discuss related work along key considerations of GT and sporadicity of communications in DFL.

\begin{table*}[t]
    \centering
    \setlength{\tabcolsep}{2pt}
    {\small
    \caption{Comparison of our work with representative related papers.}
    \begin{tabular}{c||c|c|c|c|c||c|c|c}
        \toprule
        \multirow{4}{*}{Paper} & \multicolumn{5}{g||}{Properties of Framework} & \multicolumn{3}{g}{Theoretical Assumptions}
        \\
        \hhline{~|-----|---}
        & \begin{tabular}{c} Directed \\ Graphs \end{tabular} & \begin{tabular}{c} Non-doubly \\ Stochasticable \end{tabular} & \begin{tabular}{c} Gradient \\ Tracking \end{tabular} & \begin{tabular}{c} Sporadic \\ Comp. \end{tabular} & \begin{tabular}{c} Sporadic \\ Comm. \end{tabular} & \begin{tabular}{c} General. \\ Gradient \\ Variance \end{tabular} & \begin{tabular}{c} General. \\ Gradient \\ Diversity \end{tabular} & \begin{tabular}{c} Non- \\ Convex \\ Loss \end{tabular}
        \\
        \midrule
        \citet{tang2018d}
        & & & \checkmark & & & & & \checkmark
        \\
        \hline
        \rowcolor{tabcolor} \citet{koloskova2020unified} & & & & & \checkmark & \checkmark & \checkmark & \checkmark
        \\
        \hline
        \citet{pu2020push} & \checkmark & \checkmark & \checkmark & & \checkmark & & &
        \\
        \hline
        \rowcolor{tabcolor} \citet{xin2021improved} & \checkmark & & \checkmark & & & & & \checkmark
        \\
        \hline
        \citet{liu2025decentralized} & & & \checkmark & & & & \checkmark & \checkmark
        \\
        \hline
        \rowcolor{tabcolor} \citet{zehtabi2025decentralized} & & & & \checkmark & \checkmark & & \checkmark & \checkmark
        \\
        \hline
        \citet{nedic2025ab}
        & \checkmark & \checkmark & \checkmark & & & & &
        \\
        \hline
        \rowcolor{tabcolor} \textbf{\texttt{Spod-GT} (Ours)} & \pmb{\checkmark} & \pmb{\checkmark} & \pmb{\checkmark} & \pmb{\checkmark} & \pmb{\checkmark} & \pmb{\checkmark} & \pmb{\checkmark} & \pmb{\checkmark}
        \\
        \bottomrule
    \end{tabular}
    \label{tab:comparison}}
    \vspace{-3mm}
\end{table*}

\textbf{GT over Undirected Graphs.} Early work in decentralized optimization over undirected graphs involved using stochastic (sub)gradient descent (SGD) on convex loss functions \citep{nedic2009distributed}. Later works such as D-PSGD \citep{lian2017can} extended this to general smooth non-convex loss functions.
Although SGD has a fast convergence rate for centralized/federated training, conducting consensus aggregations that are needed for decentralized optimization results in a slower rate for SGD on decentralized training \citep{nedic2018network}. To overcome this challenge,
\textit{gradient tracking} has been proposed that introduces an auxiliary set of parameters for each client that helps them track the global gradient. For example, EXTRA \citep{shi2015extra} applies GT for convex loss functions, NEXT \citep{di2016next} extends this to non-convex losses with bounded gradients, and D$^2$ \citep{tang2018d}
extends it to general non-convex smooth loss functions. However, a limitation of these works is the modeling of the network via an undirected graph, overlooking potential asymmetries.

Works such as GT-DSGD \cite{xin2021improved} work on directed graphs admitting a doubly-stochastic matrix, making their analysis similar to undirected graphs. Our focus here is general directed graphs that do not necessarily admit a doubly-stochastic matrix \cite{gharesifard2010does}.

\begin{figure}[t]
    \centering
    \includegraphics[width=\linewidth]{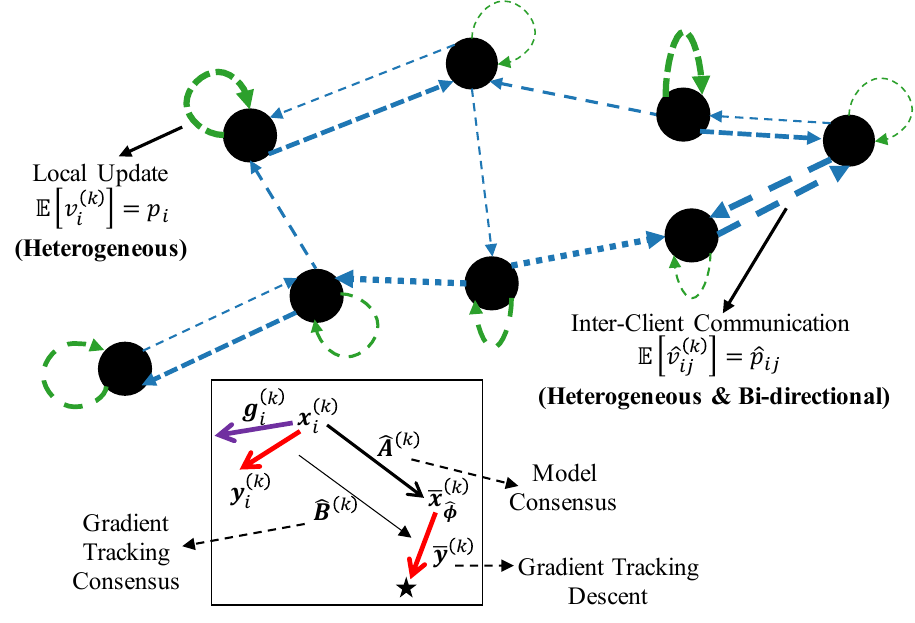}
    \caption{\small Illustration of GT-enhanced DFL over directed graphs with clients having heterogeneous computation and communication capabilities. While in conventional GT, local updates and inter-client communications occur at every iteration of training, \texttt{Spod-GT} applies sporadicity in both communications and computations (dashed lines, thickness representing relative frequency).} 
    \vspace{-0.15in}
    \label{fig:sys_diag}
\end{figure}

\textbf{GT over Directed Graphs.} Early attempts on decentralized optimization over directed graphs, such as
subgradient-push \citep{nedic2014distributed}, use the push-sum consensus strategy
in conjunction with SGD to minimize convex loss functions. However, SGD suffers from a slow convergence rate,
and the push-sum consensus strategy introduces non-linearity to iterative updates \cite{xin2020general}. Two improvements have been made since; (i) Employ GT to speed up convergence while using the push-sum strategy for consensus, as in
Push-DIGing \citep{nedic2017achieving} and FROST \cite{xin2019frost}; (ii) Instead of relying on the non-linear push-sum strategy for consensus, use a row-stochastic/column-stochastic mixing matrix pair for stable training, e.g., AB \cite{xin2018linear}, Push-Pull \cite{pu2020push}, and AB/Push-Pull
\cite{nedic2025ab}. However, despite improvements in the convergence rate of these algorithms, communication and computational resource heterogeneity of clients in DFL have not been fully studied.

\textbf{Sporadic Communications.}
Clients in DFL operate with limited computational and communication resources. An approach to reduce the number of communications is the randomized gossip algorithms \cite{boyd2006randomized}, e.g., DSGD \cite{koloskova2020unified} and DSpodFL \citep{zehtabi2025decentralized} when gossip is applied to SGD over undirected graphs, GSGT \cite{pu2021distributed} for GT methods over undirected graphs, SGP \cite{assran2019stochastic} for SGD with push-sum over directed graphs and G-Push-Pull \cite{pu2020push} for GT over directed graphs. Although DSGD and DSpodFL work with expected contraction of the mixing matrices over undirected graphs, work on directed graphs either make restrictive strong connectivity assumptions \cite{nguyen2024decentralized} or allow the clients only one edge at each iteration (G-Push-Pull). Our paper bridges this gap by proving expected contraction properties for general sporadic mixing matrices over directed graphs. Another approach to reduce the number of communications
is to take multiple local gradient steps between aggregations, as in DFedAvg \cite{sun2022decentralized} with SGD and K-GT \cite{liu2025decentralized} with GT. However, these approaches enforce a fixed number of local gradient steps for the clients, overlooking their computational heterogeneity. Our framework instead allows for an adaptive number of local steps
for each client.

\section{Methodology}

\subsection{Problem Formulation}

We consider a DFL system with $m$ clients $\mathcal{M}:=\{1,\ldots, m\}$ and a series of training iterations $k = 0, ..., K - 1$. Clients are connected through a \textit{directed graph} $\mathcal{G} = (\mathcal{M}, \mathcal{E})$, where $(i,j) \in \mathcal{E}$ if there exists a physical link between clients $i$ and $j$. Each client $i \in \mathcal{M}$ has a local dataset $\mathcal{D}_i = \{ \xi_{ij} = (x_{ij}^f, y_{ij}^c) \}_{j=1}^{D_i}$ of size $|\mathcal{D}_i| = D_i$ where $\xi_{ij}$ denotes a data sample with features $x_{ij}^f$ and class label $y_{ij}^c$. The goal of each client is to learn a classifier $h_i(x^f ; \bs{x})$ parameterized by $\bs{x} \in \mathbb{R}^n$ to correctly predict labels $y^c$, by processing its local dataset $\mathcal{D}_i$ and collaborating with its neighbors without the need to exchange data.

Let $\mathcal{L}(\bs{x} ; \xi)$ denote a loss function for the model $\bs{x}$ evaluated in the data sample $\xi$, e.g., binary cross-entropy loss for a two-label dataset $\mathcal{L}(\bs{x} ; (x^f, y^c)) = y^c \log{h(x^f ; \bs{x})} + (1 - y^c) \log{(1 - h(x^f ; \bs{x}))}$. We therefore define the local and the global loss functions as
\begin{equation} \label{eqn:local_global_loss}
    F_i(\bs{x}) = \frac1{D_i} \hspace{-1mm} \sum_{\xi_{ij} \in \mathcal{D}_i}{\hspace{-2mm} \mathcal{L}(\bs{x} ; \xi_{ij})}, \quad F(\bs{x}) = \frac1m \sum_{i=1}^m{F_i(\bs{x})},
\end{equation}
respectively, where $F_i(\bs{x}) : \mathbb{R}^n \to \mathbb{R}$ and $F(\bs{x}) : \mathbb{R}^n \to \mathbb{R}$. The goal of DFL is for clients to learn the optimal global model $\bs{x}^\star = \argmin_{x \in \mathbb{R}^n}{F{(\bs{x})}}$.

\subsection{Sporadic Gradient Tracking}

To minimize the global loss function in Eq.~\eqref{eqn:local_global_loss}, we employ \textit{gradient tracking}, which has been shown to have good convergence properties for directed graphs \citep{xin2018linear, pu2020push}. In gradient tracking methods, each client has an additional set of parameters $\bs{y}$ that tracks an estimate of the global gradient at each iteration of training. Although existing work on gradient tracking has assumed that each client computes gradients (or multiple local gradient steps) at each communication round, we focus on a relaxed and generalized extension of gradient tracking where the clients may opt out of computing gradients at some iterations. The iterative updates of our method, which we call \texttt{Spod-GT}, can be formally written in two steps as
\vspace{-1mm}
\begin{equation} \label{eqn:event_update_1}
    \begin{gathered}
        \bs{x}_i^{(k+0.5)} = \bs{x}_i^{(k)} + \sum_{j=1}^m{a_{ij} (\bs{x}_j^{(k)} - \bs{x}_i^{(k)}) \hat{v}_{ij}^{(k)}},
        \\
        \bs{y}_i^{(k+0.5)} = \bs{y}_i^{(k)} + \sum_{j=1}^m{(b_{ij} \bs{y}_j^{(k)} \hat{v}_{ij}^{(k)} - b_{ji} \bs{y}_i^{(k)} \hat{v}_{ji}^{(k)})},
    \end{gathered}
    \vspace{-1mm}
\end{equation}
which are the aggregation steps, and
\begin{equation} \label{eqn:event_update_2}
    \begin{gathered}
        \bs{x}_i^{(k+1)} = \bs{x}_i^{(k+0.5)} - \eta_i^{(k)} \bs{y}_i^{(k+0.5)},
        \\
        \bs{y}_i^{(k+1)} = \bs{y}_i^{(k+0.5)} + \bs{g}_i^{(k+1)} v_i^{(k+1)} - \bs{g}_i^{(k)} v_i^{(k)},
    \end{gathered}
\end{equation}
which are the update steps. Note that $\bs{x}_i^{(k)}$ and $\bs{y}_i^{(k)}$ are the model parameters and the global gradient estimate of the client $i \in \mathcal{M}$ in the communication round $k \geq 0$, respectively, and we initialize $\bs{y}_i^{(0)} = \bs{g}_i^{(0)} v_i^{(0)}$, where
\vspace{-1mm}
\begin{equation} \label{eqn:sgd}
    \bs{g}_i^{(k)} = \frac1{B_i} \sum_{\xi_{ij}^{(k)} \in \mathcal{B}_i^{(k)}}{\nabla{\mathcal{L}}(\bs{x}_i^{(k)} ; \xi_{ij}^{(k)})},
    \vspace{-1mm}
\end{equation}
is the local stochastic gradient evaluated using the mini-batch $\mathcal{B}_i^{(k)} \subseteq \mathcal{D}_i$ with $B_i = |\mathcal{B}_i^{(k)}|$, and $\eta_i^{(k)}$ is the learning rate of the client $i$ in the round $k$. Unique to our setup, $v_i^{(k)} \in \{ 0, 1 \}$ and $\hat{v}_{ij}^{(k)} \in \{ 0, 1 \}$ are indicator random variables that capture whether or not a client participates in the computation of gradients and inter-client communication in a given round, respectively. Finally, note that $a_{ij}$ and $b_{ij}$ are the mixing weights for the model parameters and the tracked gradients between clients $(i, j) \in \mathcal{E}$, respectively, and while $a_{ij}$ are chosen by the client receiving the model parameters (client $i$), $b_{ij}$ are determined by the client transmitting the tracked gradient estimates (client $j$).

To simplify the notation in Eqs.~\eqref{eqn:event_update_1} and \eqref{eqn:event_update_2}, let $\hat{a}_{ij}^{(k)} = a_{ij} \hat{v}_{ij}^{(k)}$ and $\hat{b}_{ij}^{(k)} = b_{ij} \hat{v}_{ij}^{(k)}$ for $i \neq j$, and $\hat{a}_{ii}^{(k)} = 1 - \sum_{\ell=1}^m{a_{i\ell} \hat{v}_{i\ell}^{(k)}}$ and $\hat{b}_{ii}^{(k)} = 1 - \sum_{\ell=1}^m{b_{\ell i} \hat{v}_{\ell i}^{(k)}}$ for $i = j$ to get
\vspace{-1mm}
\begin{equation} \label{eqn:vector_update}
    \begin{gathered}
        \bs{x}_i^{(k+1)} = \sum_{j=1}^m{\hat{a}_{ij}^{(k)} \bs{x}_j^{(k)}} - \eta_i^{(k)} \sum_{j=1}^m{\hat{b}_{ij}^{(k)} \bs{y}_j^{(k)}},
        \\
        \bs{y}_i^{(k+1)} = \sum_{j=1}^m{\hat{b}_{ij}^{(k)} \bs{y}_j^{(k)}} + \bs{g}_i^{(k+1)} v_i^{(k+1)} - \bs{g}_i^{(k)} v_i^{(k)},
    \end{gathered}
    \vspace{-1mm}
\end{equation}
where $[\hat{a}_{ij}^{(k)}]_{1 \le i,j \le m}$ and $[\hat{b}_{ij}^{(k)}]_{1 \le i,j \le m}$ represent sporadic mixing weights. Alg.~\ref{alg:spodgt} in Appendix~\ref{app:supp:alg} summarized our complete algorithm.


\revised{\textbf{Insights for Sporadicity and GT.} Resource availability is often heterogeneous and asymmetric in DFL, i.e., there are (i) variations in client computation capabilities,
and (ii) variations in communication link throughputs.
\texttt{Spod-GT} overcomes these limitations by relaxing (i) the client gradient computation frequencies to $p_i \in (0, 1]$ proportional to its processing capabilities, and (ii) the edge utilization frequency for parameter exchange to $\hat{p}_{ij} \in (0, 1]$ proportional to a link's throughput. This is in contrast to conventional DFL methods which enforce a restrictive $p_i = \hat{p}_{ij} = 1$, demanding the same work load from all clients irrespective of their heterogeneous and asymmetric capabilities. The notion of sporadicity was also explored in DSpodFL \citep{zehtabi2025decentralized} to run SGD over undirected graphs. In this paper, we focus on the more challenging paradigm decentralized optimization over non-doubly stochasticable directed graphs. We adopt GT in our work which has been shown to have good convergence properties over directed graphs, particularly when coupled with a row-stochastic/column-stochastic pair of mixing matrices to form consensus.}

\subsection{Matrix Representation} \label{ssec:mat_notation}

We collect the model parameters {\small $\bs{x}_i^{(k)}$}, the gradient tracking estimates {\small $\bs{y}_i^{(k)}$} and the local stochastic gradients {\small $\bs{g}_i^{(k)}$} of all clients to the matrices {\small $\bs{X}^{(k)} = [\bs{x}_1^{(k)} \, \cdots \, \bs{x}_m^{(k)}]^T \in \mathbb{R}^{m \times n}$}, {\small $\bs{Y}^{(k)} = [\bs{y}_1^{(k)} \, \cdots \, \bs{y}_m^{(k)}]^T \in \mathbb{R}^{m \times n}$} and {\small $\bs{G}^{(k)} = [\bs{g}_1^{(k)} \, \cdots \, \bs{g}_m^{(k)}]^T \in \mathbb{R}^{m \times n}$}, respectively. We also collect all mixing weights into matrices, i.e., {\small $\bs{A} = [ a_{ij} ]_{1 \le i,j \le m} \in \mathbb{R}^{m \times m}$}, {\small $\bs{B} = [ b_{ij} ]_{1 \le i,j \le m} \in \mathbb{R}^{m \times m}$}, {\small $\bs{\hat{A}}^{(k)} = [ \hat{a}_{ij}^{(k)} ]_{1 \le i,j \le m} \in \mathbb{R}^{m \times m}$} and {\small $\bs{\hat{B}}^{(k)} = [ \hat{b}_{ij}^{(k)} ]_{1 \le i,j \le m} \in \mathbb{R}^{m \times m}$}. For the learning rate {\small $\eta_i^{(k)}$} and the indicator variable {\small $v_i^{(k)}$} of each client, which are scalar variables, we define the diagonal matrices {\small $\bs{\Lambda}_{\bs{\eta}}^{(k)} = \diag(\bs{\eta}^{(k)}) \in \mathbb{R}^{m \times m}$} and {\small $\bs{\Lambda}_{\bs{v}}^{(k)} = \diag(\bs{v}^{(k)}) \in \mathbb{R}^{m \times m}$}, respectively. Using this matrix notation, we can collect the iterative update rules for each client in Eq.~\eqref{eqn:vector_update} as update rules for the whole decentralized network as
\begin{equation} \label{eqn:matrix_update}
	\begin{gathered}
	    \bs{X}^{(k+1)} = \bs{\hat{A}}^{(k)} \bs{X}^{(k)} - \bs{\Lambda}_{\bs{\eta}}^{(k)} \bs{\hat{B}}^{(k)} \bs{Y}^{(k)},
        \\
        \bs{Y}^{(k+1)} = \bs{\hat{B}}^{(k)} \bs{Y}^{(k)} + \bs{\Lambda}_{\bs{v}}^{(k+1)} \bs{G}^{(k+1)} - \bs{\Lambda}_{\bs{v}}^{(k)} \bs{G}^{(k)}.
	\end{gathered}
\end{equation}
We also define $\bs{F}_{\bs{X}}^{(k)} = [F_1(\bs{x}_1^{(k)}) \, \cdots \, F_m(\bs{x}_m^{(k)})]^T$ as the collection of local loss functions evaluated with individual models $\bs{x}_i^{(k)}$.


\section{Convergence Analysis}

In this section, we first lay out our Assumptions in Sec.~\ref{ssec:assumps}, and give the necessary definitions for our theoretical results in Sec.~\ref{ssec:defs}. We then discuss our key Propositions in Sec.~\ref{ssec:props} and conclude with our Theorem in Sec.~\ref{ssec:thm}. For brevity, we have relegated all supporting Lemmas to Appendices~\ref{ssec:intermed} and \ref{ssec:main_lemmas}, with all constant scalars summarized in Appendix~\ref{app:supp:table}.

\subsection{Assumptions and Immediate Implications} \label{ssec:assumps}

\begin{assumption}[Lipschitz-Continuous Gradients] \label{assump:lipschitz}
    Each local loss function $F_i : \mathbb{R}^n \to \mathbb{R}$ is continuously differentiable and its gradient $\nabla{F}_i : \mathbb{R}^n \to \mathbb{R}^n$ is $L_i$-Lipschitz continuous, i.e., $\| \nabla{F}_i(\bs{x}) - \nabla{F}_i(\bs{x}') \| \le L_i \| \bs{x} - \bs{x}' \|$, with $\{ \bs{x}, \bs{x}' \} \subseteq \mathbb{R}^n$ and $L_i > 0$ for all $i \in \mathcal{M}$.
\end{assumption}

Since the global loss function is defined as $F(\bs{x}) = (1/m) \sum_{i=1}^m{F_i(\bs{x})}$, it is also continuously differentiable as a convex combination of continuously differentiable functions. Furthermore, its gradient function is $\bar{L}$-Lipschitz with $\bar{L} = (1/m) \sum_{i=1}^m{L_i}$, and we have that
$F(\bs{x}) \le F(\bs{x}') + \left\langle \nabla{F}(\bs{x}'), \bs{x} - \bs{x}' \right\rangle + \frac12 \bar{L} \| \bs{x} - \bs{x}' \|^2$,
for all $\{ \bs{x}, \bs{x}' \} \subseteq \mathbb{R}^n$ \citep{bottou2018optimization}.

\begin{assumption}[Stochastic Gradient Estimation] \label{assump:sgd}
    The stochastic gradient $\nabla{\mathcal{L}} : \mathbb{R}^n \times \mathbb{R}^{|\xi|} \to \mathbb{R}^n$, where $\xi$ is a data sample, is an unbiased estimator, i.e., for each client $i \in \mathcal{M}$ with dataset $\mathcal{D}_i$, we have $\mathbb{E}_{\xi_{ij} \sim \mathcal{D}_i}\left[ \nabla{\mathcal{L}}(\bs{x}, \xi_{ij}) \right] = \nabla{F}_i(\bs{x})$, with $\bs{x} \in \mathbb{R}^n$. We further assume that the variance of the mini-batch gradient $\nabla{\mathcal{L}}$ is bounded as
    $\mathbb{E}_{\mathcal{B}_i \sim \mathcal{D}_i}[ \| \frac1{B_i} \sum_{\xi_{ij} \in \mathcal{B}_i}{\nabla{\mathcal{L}}(\bs{x}, \xi_{ij})} - \nabla{F}_i(\bs{x}) \|^2 ] \le \frac{1 - B_i / D_i}{B_i} ( \sigma_{0,i}^2 + \sigma_{1,i}^2 \| \nabla{F}(\bs{x}) \|^2 )$,
    with $\sigma_{0,i}, \sigma_{1,i} > 0$ where $\nabla{F}(\bs{x})$ is the global gradient.
\end{assumption}
Note that while most works in the literature consider a uniform bound on stochastic gradient estimation, i.e., $\sigma_{1,i}^2 = 0$, our Assumption~\ref{assump:sgd} relaxes it to the general case of $\sigma_{1,i}^2 > 0$ \citep{koloskova2020unified}. Furthermore, note that when using a full batch $B_i = D_i$, the gradient estimation becomes exact, and when $B_i = 1$, we get the conventional stochastic gradient estimation variance (since $1 - 1/D_i \approx 1$).


\begin{assumption}[Gradient Diversity] \label{assump:graddiv}
    Each local loss function $F_i : \mathbb{R}^n \to \mathbb{R}$ is continuously differentiable and the distance from its gradient $\nabla{F}_i : \mathbb{R}^n \to \mathbb{R}^n$ to the global loss gradient function $\nabla{F} = (1/m) \sum_{i=1}^m{\nabla{F}_i} : \mathbb{R}^n \to \mathbb{R}^n$ is bounded as $\| \nabla{F}_i(\bs{x}) \|^2 \le \delta_{0,i}^2 + \delta_{1,i}^2 \| \nabla{F}(\bs{x}) \|^2$,
    with $\bs{x} \in \mathbb{R}^n$ and $\delta_{0,i}, \delta_{1,i} > 0$.
\end{assumption}
While earlier papers worked with the restrictive bounded gradient assumption, i.e., $\delta_{1,i}^2 = 0$, Assumption~\ref{assump:graddiv} has been used more recently in the literature as a relaxed condition on gradient diversity \citep{koloskova2020unified}.


\begin{assumption}[Bernoulli Indicator Random Variables] \label{assump:bernoulli}
    The indicator random variables $v_i^{(k)}$ and $\hat{v}_{ij}^{(k)}$ that capture whether a client $i \in \mathcal{M}$ computes local gradient or participates in inter-client communication with its neighbor $j$ at iteration $k \geq 0$, respectively, are statistically independent Bernoulli random variables with non-zero probabilities. In other words, we have
    $\mathbb{E}[ v_i^{(k)} ] = p_i$ and $\mathbb{E}[ \hat{v}_{ij}^{(k)} ] = \hat{p}_{ij}$,
    where $0 < p_i, \hat{p}_{ij} \le 1$.
\end{assumption}


Let us define the expected mixing matrices as
\vspace{-1mm}
\begin{equation} \label{eqn:expected_mat}
    \bs{\hat{A}} = \mathbb{E}\left[ \bs{\hat{A}}^{(k)} \right], \qquad \bs{\hat{B}} = \mathbb{E}\left[ \bs{\hat{B}}^{(k)} \right],
    \vspace{-2mm}
\end{equation}
in which $\hat{a}_{ij} = a_{ij} p_{ij}$ and $\hat{b}_{ij} = b_{ij} p_{ij}$ if $i \neq j$, and $\hat{a}_{ii} = 1 - \sum_{j=1}^m{a_{ij} p_{ij}}$ and $\hat{b}_{ii} = 1 - \sum_{j=1}^m{b_{ij} p_{ij}}$ if $i = j$.

\begin{assumption}[Strongly Connected Directed Graph] \label{assump:graph}
    The directed graph $\mathcal{G}$ is strongly connected, i.e., for any pair of clients $\{ i, j \} \subseteq \mathcal{M}$, there exist a path from $i$ to $j$ using the edges in $\mathcal{E}$. Also, assume each client $i$ has a self-loop.
\end{assumption}

We design the mixing matrices $\bs{A}$ and $\bs{B}$ defined in Sec.~\ref{ssec:mat_notation} compatible with the graph $\mathcal{G}$, with $\bs{A}$ being row-stochastic and $\bs{B}$ being column-stochastic. In other words, $a_{ij} > 0$ if $j \in \mathcal{N}_i^{\text{out}} \cup \{ i \}$ and $a_{ij} = 0$ otherwise, and $b_{ij} > 0$ if $j \in \mathcal{N}_i^{\text{in}} \cup \{ i \}$ and $b_{ij} = 0$ otherwise. Note that if we design $\bs{A}$ as row-stochastic and $\bs{B}$ as column-stochastic, the sporadic mixing matrices {\small $\bs{\hat{A}}^{(k)}$} and {\small $\bs{\hat{B}}^{(k)}$} defined in Eq.~\eqref{eqn:matrix_update} and the expected mixing matrices $\bs{\hat{A}}$ and $\bs{\hat{B}}$ defined in Eq.~\eqref{eqn:expected_mat} will also be row-stochastic and column-stochastic, respectively. In matrix notation, we have the following.
\begin{equation} \label{eqn:stochastic}
    \begin{gathered}
        \bs{\hat{A}}^{(k)} \bs{1} = \bs{1}, \bs{\hat{A}} \bs{1} = \bs{1}, \quad
        \bs{1}^T \bs{\hat{B}}^{(k)} = \bs{1}^T, \bs{1}^T \bs{\hat{B}} = \bs{1}^T,
    \end{gathered}
\end{equation}
implying that the vector of all-ones $\bs{1}$ is the right-eigenvector of
{\small $\bs{\hat{A}}^{(k)}$} and $\bs{\hat{A}}$ and the left-eigenvector of
{\small $\bs{\hat{B}}^{(k)}$} and $\bs{\hat{B}}$ corresponding to the eigenvalue $1$. Moreover, since non-negative expected mixing matrices $\bs{\hat{A}}$ and $\bs{\hat{B}}$ characterize a strongly-connected graph, since they are scaled versions of non-negative mixing matrices $\bs{A}$ and $\bs{B}$ corresponding to a strongly-connected graph, they are irreducible and primitive as they contain self-loops. Hence, by the Perron-Frobenius Theorem, the eigenvalue $1$ is simple and is the spectral radius for all four of the matrices mentioned above \citep{pu2020push}. This also implies that there exists
a unique positive left-eigenvector $\bs{\hat{\phi}}$ for $\bs{\hat{A}}$ and a unique positive right-eigenvector $\bs{\hat{\pi}}$ for $\bs{\hat{B}}$. In other words,
\begin{equation} \label{eqn:eigenvectors}
    \bs{\hat{\phi}}^T \bs{\hat{A}} = \bs{\hat{\phi}}^T, \, \bs{\hat{B}} \bs{\hat{\pi}} = \bs{\hat{\pi}}, \qquad \bs{\hat{\phi}}^T \bs{1} = 1, \, \bs{1}^T \bs{\hat{\pi}} = 1,
\end{equation}
where we choose the eigenvectors
$\bs{\hat{\phi}}$ and $\bs{\hat{\pi}}$ to be stochastic.

\textbf{Limitations of Existing Analyses.} Note that the sporadic mixing matrices {\small $\bs{\hat{A}}^{(k)}$} and {\small $\bs{\hat{B}}^{(k)}$} do not necessarily characterize a strongly-connected graph, and we cannot use the Perron-Frobenius Theorem to deduce any positive eigenvectors for them (see \citet{touri2013product} for an in-depth discussion on dynamic random Markov chains). A key contribution of our work is obtaining a contraction property of sporadic aggregation matrices {\small $\bs{\hat{A}}^{(k)}$} and {\small $\bs{\hat{B}}^{(k)}$} in Lemma~\ref{lemma:sporadic_contraction}
by exploiting their expected behavior given in Eq.~\eqref{eqn:expected_mat}.

\subsection{Definitions} \label{ssec:defs}

First, we use the properties of the left and right eigenvectors given in Eqs.~\eqref{eqn:stochastic} and \eqref{eqn:eigenvectors} to give the following definition.
\begin{definition} \label{def:avg_vec}
    For the purpose of our theoretical analysis, we will consider the $\bs{\hat{\phi}}$-weighted average of the model parameters {\small $\bs{x}_i^{(k)}$}, the regular average of gradient tracking parameters {\small $\bs{y}_i^{(k)}$} and the weighted average of stochastic gradients {\small $\bs{g}_i^{(k)}$} given as
    $\bs{\bar{x}}_{\bs{\hat{\phi}}}^{(k)} = \sum_{i=1}^m{\hat{\phi}_i \bs{x}_i^{(k)}} = \bs{\hat{\phi}}^T \bs{X}^{(k)}$, $\bs{\bar{y}}^{(k)} = \frac1m \sum_{i=1}^m{\bs{y}_i^{(k)}} = \frac1m \bs{1}^T \bs{Y}^{(k)}$ and $\bs{\bar{g}}_{\bs{v}}^{(k)} = \frac1m \sum_{i=1}^m{\bs{g}_i^{(k)} v_i^{(k)}} = \frac1m \bs{1}^T \bs{\Lambda}_{\bs{v}}^{(k)} \bs{G}^{(k)}$.
\end{definition}
Applying the average vector notation of Definition~\ref{def:avg_vec} to the matrix form of updates given in Eq.~\eqref{eqn:matrix_update}, we get the following.
\begin{equation} \label{eqn:averages}
    \begin{gathered}
        \bs{\bar{x}}_{\bs{\hat{\phi}}}^{(k+1)} = \bs{\hat{\phi}}^T \bs{\hat{A}}^{(k)} \bs{X}^{(k)} - \bs{\hat{\phi}}^T \bs{\Lambda}_{\bs{\eta}}^{(k)} \bs{\hat{B}}^{(k)} \bs{Y}^{(k)},
        \\
        \bs{\bar{y}}^{(k+1)} = \bs{\bar{y}}^{(k)} + \bs{\bar{g}}_{\bs{v}}^{(k+1)} - \bs{\bar{g}}_{\bs{v}}^{(k)}.
    \end{gathered}
\end{equation}
Recursively expanding $\bs{\bar{y}}^{(k+1)}$ with the initialization discussed for Eq.~\eqref{eqn:vector_update} as $\bs{\bar{y}}^{(0)} = \bs{\bar{g}}_{\bs{v}}^{(0)}$, we will have
\begin{equation} \label{eqn:ybar}
    \bs{\bar{y}}^{(k+1)} = \bs{\bar{g}}_{\bs{v}}^{(k+1)},
\end{equation}
for all $k \geq 0$.
Note that in our notation, model parameters, tracked gradients and stochastic gradients {\small $\bs{x}_i^{(k)}, \bs{y}_i^{(k)}, \bs{g}_i^{(k)} \in \mathbb{R}^{n \times 1}$} are column vectors, but their corresponding (weighted) averages {\small $\bs{\bar{x}}_{\bs{\hat{\phi}}}^{(k)}, \bs{\bar{y}}^{(k)}, 
\bs{\bar{g}}_{\bs{v}}^{(k)} \in \mathbb{R}^{1 \times n}$} are row vectors.

All supporting Lemmas are given in Appendices~\ref{ssec:intermed} and \ref{ssec:main_lemmas}, with all constant scalars summarized in Appendix~\ref{app:supp:table}. Also, refer to Appendix~\ref{app:supp:prooftree} for an illustration of how our theoretical results are related to each other. We proceed with the following definition.
\begin{definition}[Consensus Error] \label{def:error_vec}
    Let us collect the expected error terms defined in Lemmas~\ref{lemma:xdispersion} and \ref{lemma:ydispersion} in a single vector $\bs{\varsigma}^{(k)} \in \mathbb{R}^{2 \times 1}$. We have
    \vspace{-1mm}
    \begin{equation}
        \bs{\varsigma}^{(k)} =
        \begin{bmatrix}
            \mathbb{E}\left[ \left\| \bs{X}^{(k)} - \bs{1} \bs{\bar{x}}_{\bs{\hat{\phi}}}^{(k)} \right\|_{\bs{\hat{\phi}}}^2 \right]
            \\
            \mathbb{E}\left[ \left\| \bs{\Lambda}_{\bs{\hat{\pi}}}^{-1} \bs{Y}^{(k)} - m \bs{1} \bs{\bar{y}}^{(k)} \right\|_{\bs{\hat{\pi}}}^2 \right]
        \end{bmatrix}.
    \end{equation}
\end{definition}
Applying the error vector notation of Definition~\ref{def:error_vec} jointly to Lemmas~\ref{lemma:xdispersion} and \ref{lemma:ydispersion}, we obtain the following
\begin{equation} \label{eqn:varsigma}
    \bs{\varsigma}^{(k+1)} \le \bs{\Psi}^{(k)} \bs{\varsigma}^{(k)} + \bs{\gamma}^{(k)} \mathbb{E}\left[ \left\| \nabla{F}(\bs{\bar{x}}_{\bs{\hat{\phi}}}^{(k)}) \right\|^2 \right] + \bs{\omega}^{(k)},
\end{equation}
where {\small $\bs{\Psi}^{(k)} = [\psi_{ij}^{(k)}]_{1 \le i,j \le m} \in \mathbb{R}^{2 \times 2}$}, {\small $\bs{\gamma}^{(k)} = [\gamma_1^{(k)}, \gamma_2^{(k)}]^T \in \mathbb{R}^{2 \times 1}$} and {\small $\bs{\omega}^{(k)} = [\omega_1^{(k)}, \omega_2^{(k)}]^T \in \mathbb{R}^{2 \times 1}$}.
\revised{A key analytical challenge detailed in Appendices~\ref{ssec:intermed} and \ref{ssec:main_lemmas} is to form a tight bound for Eq.~\eqref{eqn:varsigma} such that it would decrease over time.}

\subsection{Propositions} \label{ssec:props}

In our first proposition, we lay out the sufficient constraints on the learning rates $\eta_i^{(k)}$ to ensure that the consensus error in Eq.~\eqref{eqn:varsigma} decreases over time.
\begin{proposition}[Spectral Radius of Transition Matrix] \label{proposition:lr}
    Let Assumptions~\ref{assump:lipschitz}-\ref{assump:graph} hold, and the learning rates $\eta_i^{(k)}$ satisfy
        \vspace{-1mm}
    \begin{equation} \label{eqn:lr_constraint}
        \begin{aligned}
            & \begin{aligned}
                \eta_i^{(k)} < \frac1{2 \sqrt{\kappa_3}} \min\Bigg\lbrace & \frac1{4 \sqrt{m+1}}  \frac{1 - \tilde{\rho}_A}{\sqrt{1 + \tilde{\rho}_A}} \frac{1 - \tilde{\rho}_B}{\sqrt{1 + \tilde{\rho}_B}}
                \\
                & \frac1{\sqrt{(2 \kappa_4 + m \kappa_5 \hat{\rho}_{0,A})}},
            \end{aligned}
            \\
            & \quad \sqrt{1 - \tilde{\rho}_A} \frac1{ \sqrt{\kappa_5}}, \frac1{2 \sqrt{m (m + 1)}} \frac{1 - \tilde{\rho}_B}{\sqrt{1 + \tilde{\rho}_B}} \frac1{\sqrt{\kappa_5}} \Bigg\rbrace,
        \end{aligned}
    \end{equation}
    then the spectral radius of $\bs{\Psi}^{(k)}$ defined in Eq.~\eqref{eqn:varsigma} is strictly less than one, i.e., $\rho(\bs{\Psi}^{(k)}) < 1$. The values of $\kappa_3$, $\kappa_4$, $\kappa_5$, $\hat{\rho}_{0,A}$, $\tilde{\rho}_A$ and $\tilde{\rho}_B$ are given in Table~\ref{tab:symbols_lemmas} in Appendix~\ref{app:supp:table}.

    \begin{proof}
        See Appendix~\ref{app:proposition:lr}.
    \end{proof}
\end{proposition}
\revised{Proposition~\ref{proposition:lr} captures all parameters related to the non-convex loss and the directed graph in a constraint over the learning rate. To understand the terms in Eq.~\eqref{eqn:lr_constraint}, note that in the less complex non-sporadic centralized/FL training with uniform gradient estimation variance, we get $\hat{\rho}_{0,A} = \tilde{\rho}_A = \tilde{\rho}_B = 0$, $\kappa_3 = 1 / m^2$, $\kappa_4 = 5m\max(L_i)^2$ and $\kappa_5 = 5\max(L_i)^2$. Thus, the constraint in Eq.~\eqref{eqn:lr_constraint} simplifies to $\eta_i^{(k)} < \frac1{16 \sqrt{5} \max(L_i)}$, i.e., an inverse dependence of the learning rate on smoothness, which is a standard result for gradient-based iterative methods \cite{bottou2018optimization}.}

We provide two corollaries to Proposition~\ref{proposition:lr} in Appendix~\ref{app:corollaries}. Having established a decreasing consensus error in Proposition~\ref{proposition:lr},
we next analyze the global loss decrement.
\begin{proposition}[Loss Descent Constraints] \label{proposition:loss}
    Let Assumptions~\ref{assump:lipschitz}-\ref{assump:graph} hold, and each client $i \in \mathcal{M}$ use a constant learning rate $\eta_i^{(k)} = \eta_i^{(0)}$ that satisfies the conditions outlined in Corollary~\ref{corollary:nonincreasing}, to have $\rho(\bs{\Psi}^{(k)}) = \rho_\Psi < 1$ for all $k \geq 0$ according to Corollary~\ref{corollary:constant}. Let $\Gamma_1 = 1 + \frac{8 \Gamma_3}m \frac{\max(\hat{\phi}_i \hat{\pi}_i)}{\min(\hat{\phi}_i \hat{\pi}_i)} \frac{\Gamma_2 \kappa_7 \tilde{\rho}_B}{1 - \rho_\Psi}$ for arbitrary $\Gamma_2, \Gamma_3 > 1$. Let the gradient calculation probabilities, communication probabilities, batch sizes, and learning rate heterogeneities satisfy
    \begin{equation}
        \begin{aligned}
            & p_i \geq 1 - \frac{m \hat{\pi}_i}{9 \Gamma_1 \delta_{1,i}^2}, \qquad \hat{p}_{ij} \in [\max(\hat{r}_A, \hat{r}_B, \hat{r}'_B), 1],
            \\
            & B_i \geq \frac{D_i}{1 + \frac{m \hat{\pi}_i}{6 \Gamma_1 \sigma_{1,i}^2} D_i}, \qquad \frac{\bar{\eta}^{(0)}}{\eta_i^{(0)}} \geq \frac1{\Gamma_3},
        \end{aligned}
    \end{equation}
    for each client $i \in \mathcal{M}$, respectively. Note that $\hat{r}_A, \hat{r}_B < 1$ are defined in Lemma~\ref{lemma:sporadic_contraction}, and {\small $\hat{r}'_B = \frac12 (1 + \sqrt{\max(0, 1 - \hat{\tau}'_B)})$} with {\small $\hat{\tau}'_B = \frac{2m}{9 (m - 1) \kappa_2 \Gamma_1 b_{ij}^2} \frac{\min(\hat{\pi}_i)^2}{\max(\hat{\pi}_i)}$}. Then, If the learning rates further satisfies
    \begin{equation}
        \eta_i^{(0)} < \sqrt{\frac{1 - 1 / \Gamma_3}{\Gamma_1} \frac{\kappa_{10}}{\kappa_9} (1 - \rho_\Psi)},
    \end{equation}
    where {\small $\kappa_9 = ( \frac{m (1 + 3 \hat{\rho}_{0,B})}{3 \kappa_7 \tilde{\rho}_B} + 1 ) \kappa_4 \kappa_6 + \frac{\kappa_5}{m}$} and {\small $\kappa_{10} = \frac8{3 \kappa_2 \kappa_3} \frac{\max(\hat{\phi}_i \hat{\pi}_i)}{\max(\hat{\phi}_i)}$},
    the loss decrement (coefficients of {\small $\mathbb{E}[ \| \nabla{F}(\bs{\bar{x}}_{\bs{\hat{\phi}}}^{(r)}) \|^2 ]$} in Eq.~\eqref{eqn:loss_expanded} of Appendix~\ref{ssec:main_lemmas}) is lower bounded as {\small $q \max(\eta_i^{(0)})$}, where {\small $q = \frac{m^2}2 (1 - \frac1{\Gamma_1}) (1 - \frac1{\Gamma_2}) \frac1{\Gamma_3} \min(\hat{\phi}_i \hat{\pi}_i)$}.

    \begin{proof}
        See Appendix~\ref{app:proposition:loss}
    \end{proof}
\end{proposition}
Note how a small gradient diversity, i.e., $\delta_{1,i}^2 < \frac{m \hat{\pi}_i}{9 \Gamma_1}$, does not impose any constraint on the allowed computation probability, i.e., $p_i \in (0, 1]$. Also, having a uniform bound on the gradient estimation variance, i.e., $\sigma_{1,i}^2 = 0$, allows us to choose any desired batch size, i.e., $B_i \in [1, D_i]$. However, in extreme cases for gradient diversity ($\delta_{1,i}^2 \to \infty$) and non-uniform gradient estimation variance ($\sigma_{1,i}^2 \to \infty$), we must have $p_i = 1$ and $B_i = D_i$. Finally, using coordinated learning rates $\eta_i^{(0)} = \bar{\eta}^{(0)}$ for all clients automatically satisfies condition $\bar{\eta}^{(0)} / \eta_i^{(0)} \geq 1 / \Gamma_3$ since $\Gamma_3 > 1$.

\subsection{Main Theorem} \label{ssec:thm}

We establish the convergence rate of \texttt{Spod-GT} to stationary points and its optimality gap in the following theorem.
\begin{theorem}[Convergence for Non-Convex Loss] \label{theorem:constant}
    Let Assumptions~\ref{assump:lipschitz}-\ref{assump:graph} hold, and each client $i \in \mathcal{M}$ use a constant learning rate $\eta_i^{(k)} = \eta_i^{(0)}$ that satisfies the constraints in Propositions~\ref{proposition:lr} and \ref{proposition:loss}. If the gradient calculation probabilities $p_i$, batch sizes $B_i$ and learning rate heterogeneities $\bar{\eta}^{(0)} / \eta_i^{(0)}$ also satisfy the constraints of Proposition~\ref{proposition:loss}, the stationary point of our approach can be characterized as
    \begin{equation} \label{eqn:conv_rate}
        \begin{aligned}
            & \frac1{k + 1} \sum_{r=0}^k{\mathbb{E}\left[ \left\| \nabla{F}(\bs{\bar{x}}_{\bs{\hat{\phi}}}^{(r)}) \right\|^2 \right]} \le \frac{\overbrace{F(\bs{\bar{x}}_{\bs{\hat{\phi}}}^{(0)}) - F^\star}^{\revised{\text{Optimality error}}}}{\mathcal{O}\left( \max(\eta_i^{(0)}) (k + 1) \right)}
            \\
            & + \frac{\overbrace{\mathbb{E}\left[ \left\| \bs{X}^{(0)} \hspace{-0.5mm} - \hspace{-0.5mm} \bs{1} \bs{\bar{x}}_{\bs{\hat{\phi}}}^{(0)} \right\|_{\bs{\hat{\phi}}}^2 \right]}^{\revised{\text{Model Consensus Error}}}}{\mathcal{O}(k + 1)} + \frac{\overbrace{\mathbb{E}\left[ \left\| \bs{\Lambda}_{\bs{\hat{\pi}}}^{-1} \bs{Y}^{(0)} \hspace{-0.5mm} - \hspace{-0.5mm} m \bs{1} \bs{\bar{y}}^{(0)} \right\|_{\bs{\hat{\pi}}}^2 \right]}^{\revised{\text{GT Consensus Error}}}}{\mathcal{O}(k + 1)}
            \\
            & + \mathcal{O}\Big( \underbrace{\overline{\sigma_0^2 p B^{-1} (1 - B/D)}}_{\revised{\text{Gradient Estimation}}} + \underbrace{\overline{3 \overline{(1 - p) \delta_0^2}}}_{\revised{\text{Sporadicity}}} \Big) \max(\eta_i^{(0)})^2
            \\[-0.7ex]
            & + \mathcal{O}\Big( \overbrace{\overline{\sigma_0^2 p B^{-1} (1 - B/D)}}^{\revised{\text{Variance Error}}} + \overbrace{5 \overline{(1 - p) \delta_0^2}}^{\revised{\text{Error}}} \Big),
        \end{aligned}
    \end{equation}
    with a convergence rate of $\mathcal{O}(1 / k)$. By Letting $k \to \infty$, we obtain the asymptotic stationarity gap as follows:
    \begin{equation} \label{eqn:gap}
        \begin{aligned}
            & \lim_{k \to \infty}\frac1{k+1} \sum_{r=0}^k{\mathbb{E}\left[ \left\| \nabla{F}(\bs{\bar{x}}_{\bs{\hat{\phi}}}^{(r)}) \right\|^2 \right]} \le 
            \\
            & \mathcal{O}\left( \overline{\sigma_0^2 p B^{-1} (1 - B/D)} + 3 \overline{(1 - p) \delta_0^2} \right) \max(\eta_i^{(0)})^2
            \\
            & + \mathcal{O}\left( \overline{\sigma_0^2 p B^{-1} (1 - B/D)} + 5 \overline{(1 - p) \delta_0^2} \right).
        \end{aligned}
    \end{equation}
    \begin{proof}
        See Appendix~\ref{app:theorem:constant}.
    \end{proof}
\end{theorem}
\revised{Theorem~\ref{theorem:constant} states that \texttt{Spod-GT} achieves a stationary point of the non-convex loss, up to an asymptotic neighborhood controlled by design parameters of our method. Eq.~\eqref{eqn:conv_rate} shows how the errors for optimality, model consensus and GT consensus decrease over time.}
We see that if clients perform GT in every iteration and use a full batch, i.e., $p_i = 1$ and $B_i = D_i$, we get a zero gap per Eq.~\eqref{eqn:gap}. Next, we obtain a general result which shows that under a careful design, \texttt{Spod-GT} achieves a zero stationarity gap.

\begin{corollary}[Arbitrarily Small Bound]
    Let $K \geq 0$ denote the total number of training iterations. If constant learning rates, gradient  calculation probabilities and batch sizes are chosen as $\eta_i^{(0)} = \mathcal{O}(1 / \sqrt{K + 1})$, $p_i = 1 - \mathcal{O}(1 / \sqrt{K + 1})$ and $B_i = \frac{D_i}{1 + D_i / \mathcal{O}(\sqrt{K + 1})}$ by each client $i \in \mathcal{M}$, then Theorem~\ref{theorem:constant} implies that
        \vspace{-2mm}
    \begin{equation}
        \begin{aligned}
            & \frac1{K + 1} \sum_{r=0}^K{\mathbb{E}\left[ \left\| \nabla{F}(\bs{\bar{x}}_{\bs{\hat{\phi}}}^{(r)}) \right\|^2 \right]} \le \frac{F(\bs{\bar{x}}_{\bs{\hat{\phi}}}^{(0)}) - F^\star}{\mathcal{O}(\sqrt{K + 1})}
            \\
            & + \frac{\mathbb{E}\left[ \left\| \bs{X}^{(0)} \hspace{-0.5mm} - \hspace{-0.5mm} \bs{1} \bs{\bar{x}}_{\bs{\hat{\phi}}}^{(0)} \right\|_{\bs{\hat{\phi}}}^2 \right]}{\mathcal{O}(K + 1)} + \frac{\mathbb{E}\left[ \left\| \bs{\Lambda}_{\bs{\hat{\pi}}}^{-1} \bs{Y}^{(0)} \hspace{-0.5mm} - \hspace{-0.5mm} m \bs{1} \bs{\bar{y}}^{(0)} \right\|_{\bs{\hat{\pi}}}^2 \right]}{\mathcal{O}(K + 1)}
            \\
            & + \frac{\overline{\sigma_0^2 p} + 5 \overline{\delta_0^2}}{\mathcal{O}(\sqrt{K + 1})} + \frac{\overline{\sigma_0^2 p} + 5 \overline{\delta_0^2}}{\mathcal{O}((K + 1)^{3/2})}.
        \end{aligned}
    \end{equation}
    Therefore, by increasing $K$, we can decrease the stationarity gap to an arbitrarily small value, i.e.,
    \begin{equation}
        \lim_{K \to \infty}{\frac1{K+1} \sum_{r=0}^K{\mathbb{E}\left[ \left\| \nabla{F}(\bs{\bar{x}}_{\bs{\hat{\phi}}}^{(r)}) \right\|^2 \right]}} = 0.
    \end{equation}
    Note that asymptotically this means clients compute gradients at each iteration ($p_i \to 1$), use full-batch ($B_i \to D_i$) and use an arbitrarily small learning rate ($\eta_i^{(0)} \to 0$).
\end{corollary}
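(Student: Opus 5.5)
The plan is to obtain the corollary as a direct substitution into the bound of Theorem~\ref{theorem:constant}, Eq.~\eqref{eqn:conv_rate}, with $k = K$. Three things must be shown: that the prescribed choices are admissible, how each term of Eq.~\eqref{eqn:conv_rate} scales with $K$, and then the limit.

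\textbf{Step 1: admissibility.}
\begin{itemize}
\item Since $\eta_i^{(0)} = c_\eta/\sqrt{K+1} \to 0$, the learning-rate constraints of Proposition~\ref{proposition:lr} and Proposition~\ref{proposition:loss} hold for all $K$ beyond some threshold. Both upper bounds are positive constants independent of $K$, once $\rho_\Psi$ is fixed through Corollary~\ref{corollary:constant}; for small rates $\rho_\Psi$ stays bounded away from one.
\item Choosing the same $c_\eta$ for every client also satisfies $\bar{\eta}^{(0)}/\eta_i^{(0)} \ge 1/\Gamma_3$.
\item Since $p_i = 1 - c_p/\sqrt{K+1} \to 1$, the constraint $p_i \ge 1 - \frac{m\hat{\pi}_i}{9\Gamma_1\delta_{1,i}^2}$ holds for large $K$.
\item The choice $B_i = D_i/(1 + D_i/(c_B\sqrt{K+1}))$ is increasing to $D_i$, so it eventually exceeds the threshold $D_i/(1 + \frac{m\hat{\pi}_i}{6\Gamma_1\sigma_{1,i}^2}D_i)$ of Proposition~\ref{proposition:loss}.
\end{itemize}
The main subtlety is to check that $\Gamma_1$, $q$, and the hidden constants in Eq.~\eqref{eqn:conv_rate} do not blow up as the parameters move. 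This holds because they depend on $p_i$ only through bounded quantities and on the learning rate only through its maximum; one can fix these constants using the limiting values.

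\textbf{Step 2: term-by-term substitution.}
\begin{itemize}
\item \emph{Optimality error.} The denominator is $\mathcal{O}(\max(\eta_i^{(0)})(K+1)) = \mathcal{O}(\sqrt{K+1})$, which gives the first term.
\item \emph{Consensus errors.} These already decay as $\mathcal{O}(1/(K+1))$ and do not depend on the choices.
\item \emph{Variance terms.} Compute
\[
B_i^{-1}(1 - B_i/D_i) = \frac{1 + D_i/(c_B\sqrt{K+1})}{D_i}\cdot\frac{D_i/(c_B\sqrt{K+1})}{1 + D_i/(c_B\sqrt{K+1})} = \frac{1}{c_B\sqrt{K+1}}.
\]
So the variance contribution is $\overline{\sigma_0^2 p}\,\mathcal{O}(1/\sqrt{K+1})$.
\item \emph{Sporadicity terms.} Since $1 - p_i = \mathcal{O}(1/\sqrt{K+1})$, the sporadicity contribution is $\overline{\delta_0^2}\,\mathcal{O}(1/\sqrt{K+1})$.
\item \emph{Assembly.} The constant-coefficient group $\mathcal{O}(\cdots)$ produces the $1/\mathcal{O}(\sqrt{K+1})$ term. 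The group multiplied by $\max(\eta_i^{(0)})^2 = \mathcal{O}(1/(K+1))$ produces the $1/\mathcal{O}((K+1)^{3/2})$ term. The factors $3$ and $5$ are merged into the common constant $5\overline{\delta_0^2}$ as an upper bound.
\end{itemize}

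\textbf{Step 3: the limit.} Every term on the right-hand side is $o(1)$ as $K \to \infty$. The left-hand side is nonnegative, so it tends to zero.

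The only delicate point is the uniformity of the $\mathcal{O}(\cdot)$ constants of Theorem~\ref{theorem:constant} in $K$ raised in Step 1, which is handled by the monotonicity of the parameter choices toward their limits.
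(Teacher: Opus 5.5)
Your proposal is correct and takes the same route the paper intends: the paper gives no separate argument for this corollary and simply reads it off Eq.~\eqref{eqn:conv_rate} by substituting the prescribed $\eta_i^{(0)}$, $p_i$ and $B_i$, as you do, with your identity $B_i^{-1}(1-B_i/D_i)=1/(c_B\sqrt{K+1})$ and $1-p_i=\mathcal{O}(1/\sqrt{K+1})$ doing all the work. Your Step~1 admissibility and uniformity check goes beyond what the paper spells out; the only corrections are that the GT initial consensus error does depend on $p_i$ and $B_i$ through $\bs{Y}^{(0)}=\bs{\Lambda}_{\bs{v}}^{(0)}\bs{G}^{(0)}$, and that the threshold $\hat{r}'_B$ on $\hat{p}_{ij}$ moves with $K$ through $\kappa_2$ and $\Gamma_1$, but the former stays uniformly bounded in $K$ and the latter converges, which is all the limit requires.
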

    \vspace{-1mm}

\section{Experiments}
    \vspace{-1mm}

\begin{figure*}[t]
    \centering
    \begin{subfigure}[t]{\textwidth}
        \centering
        \includegraphics[width=0.7\textwidth]{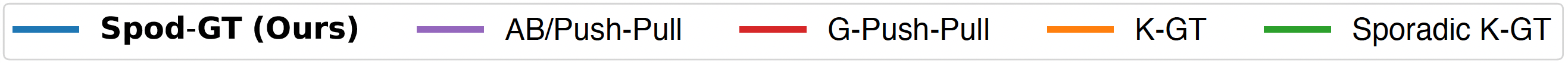}
    \end{subfigure}
    
    \begin{subfigure}[t]{0.24\textwidth}
        \centering
        \includegraphics[width=\textwidth]{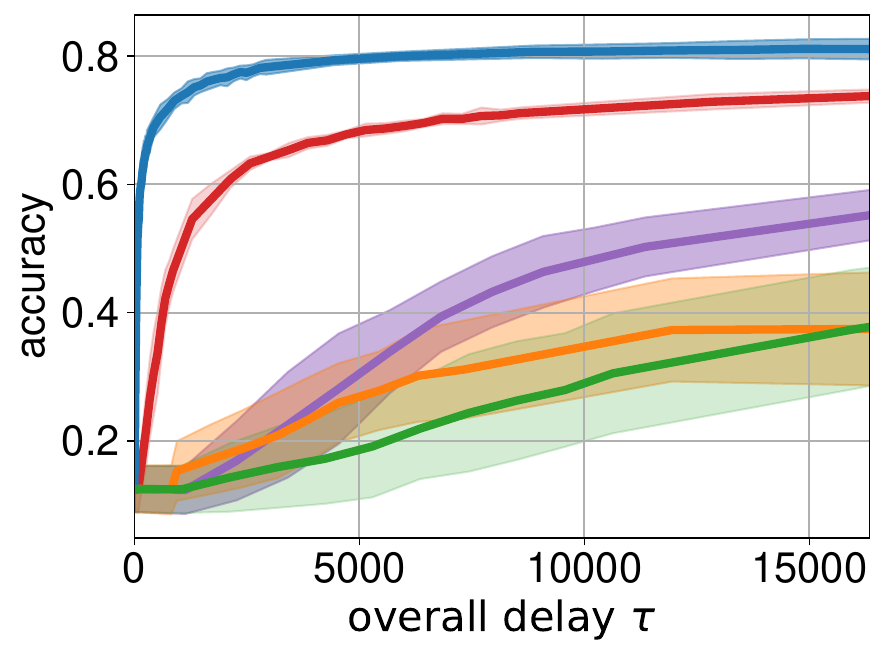}
        \caption{FMNIST, IID.}
        \label{fig:svm_eff:iid}
    \end{subfigure}
    \hfill
    \begin{subfigure}[t]{0.24\textwidth}
        \centering
        \includegraphics[width=\textwidth]{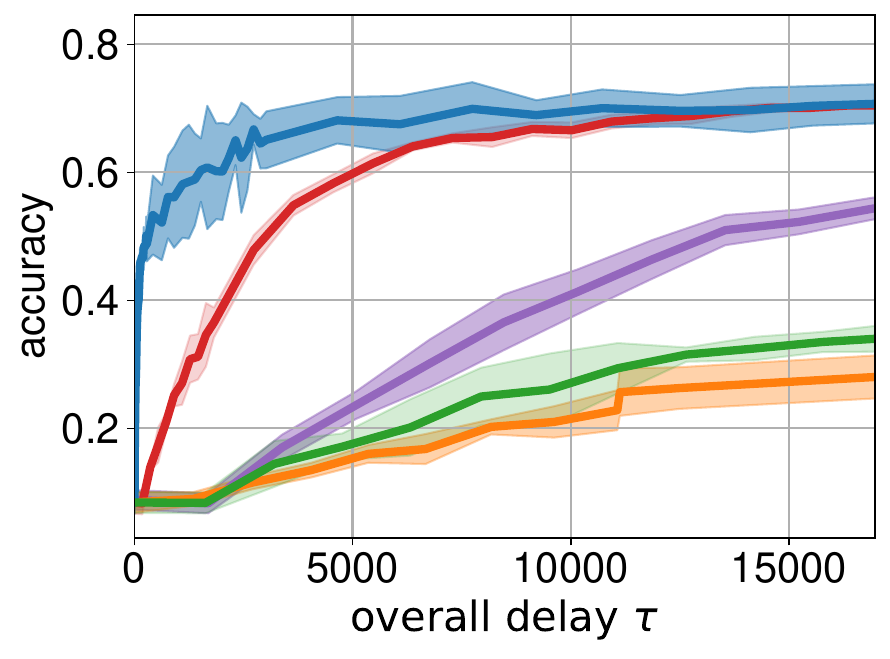}
        \caption{FMNIST, Non-IID.}
        \label{fig:svm_eff:noniid}
    \end{subfigure}
    \hfill
    \begin{subfigure}[t]{0.24\textwidth}
        \centering
        \includegraphics[width=\textwidth]{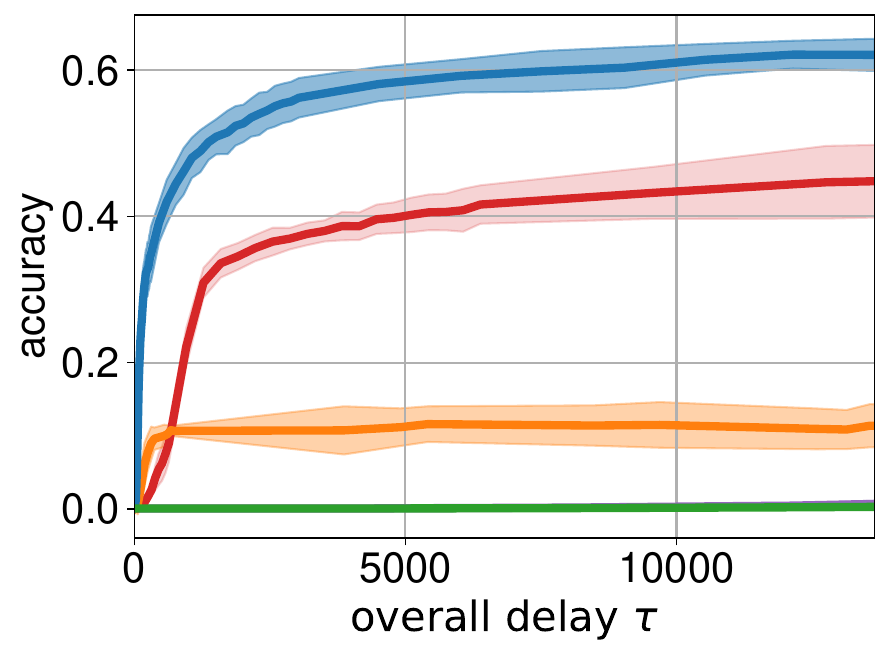}
        \caption{CIFAR-10, IID.}
        \label{fig:vgg_eff:iid}
    \end{subfigure}
    \hfill
    \begin{subfigure}[t]{0.24\textwidth}
        \centering
        \includegraphics[width=\textwidth]{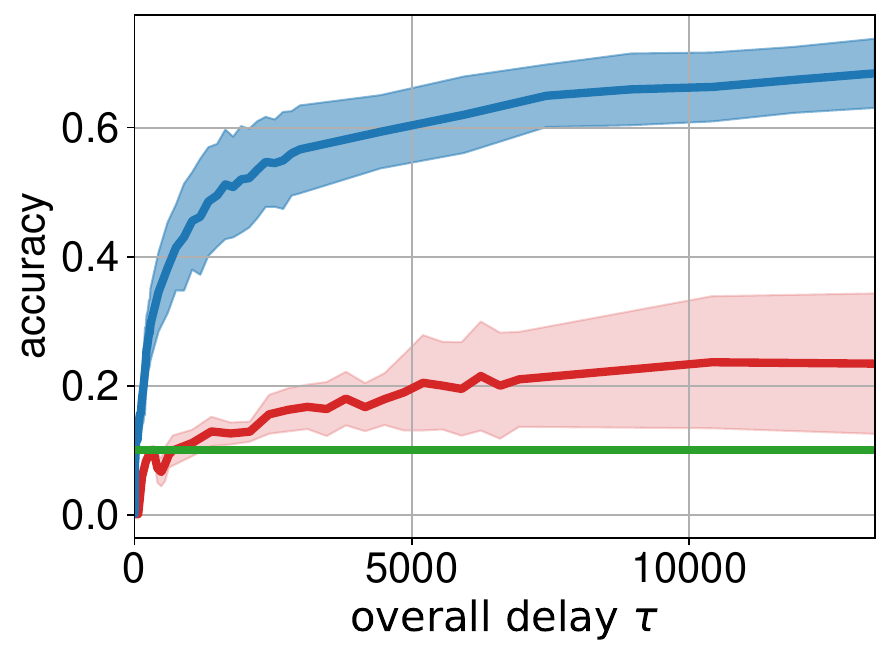}
        \caption{CIFAR-10, Non-IID.}
        \label{fig:vgg_eff:noniid}
    \end{subfigure}
    \caption{\small Accuracy vs. latency plots. \texttt{Spod-GT} achieves the target accuracy much faster with less  delay, emphasizing the benefit of sporadicity in DFL over directed graphs for GT iterations and aggregations simultaneously.}
    \label{fig:eff}
\end{figure*}

\begin{figure*}[t]
    \centering
    \begin{subfigure}[t]{\textwidth}
        \centering
        \includegraphics[width=0.75\textwidth]{images/legend.png}
    \end{subfigure}
    \begin{subfigure}[t]{0.24\textwidth}
        \centering
        \includegraphics[width=\textwidth]{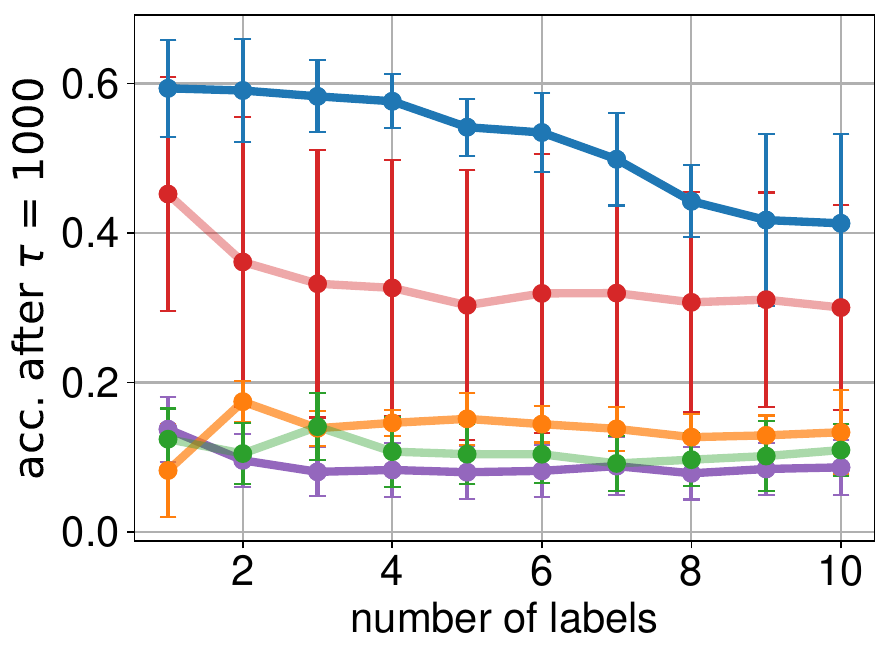}
        \caption{Varying number of labels per client.}
        \label{fig:svm_data_dist}
    \end{subfigure}
    \hfill
    \begin{subfigure}[t]{0.24\textwidth}
        \centering
        \includegraphics[width=\textwidth]{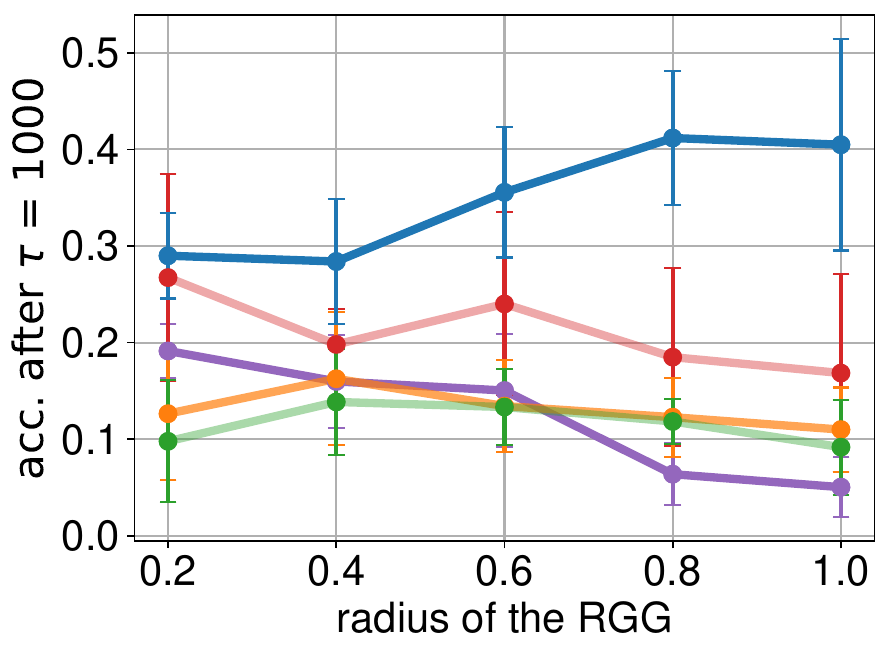}
        \caption{Varying the radius of random geometric  graph.}
        \label{fig:svm_graph_conn}
    \end{subfigure}
    \hfill
    \begin{subfigure}[t]{0.24\textwidth}
        \centering
        \includegraphics[width=\textwidth]{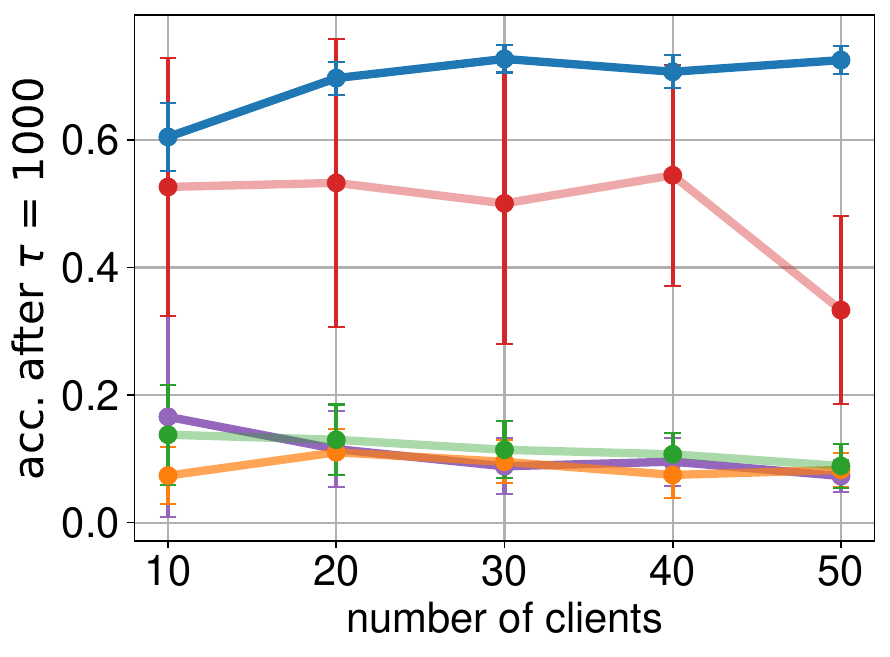}
        \caption{Varying the number of clients $m$ in the network.}
        \label{fig:svm_num_clients}
    \end{subfigure}
    \hfill
    \begin{subfigure}[t]{0.24\textwidth}
        \centering
        \includegraphics[width=\textwidth]{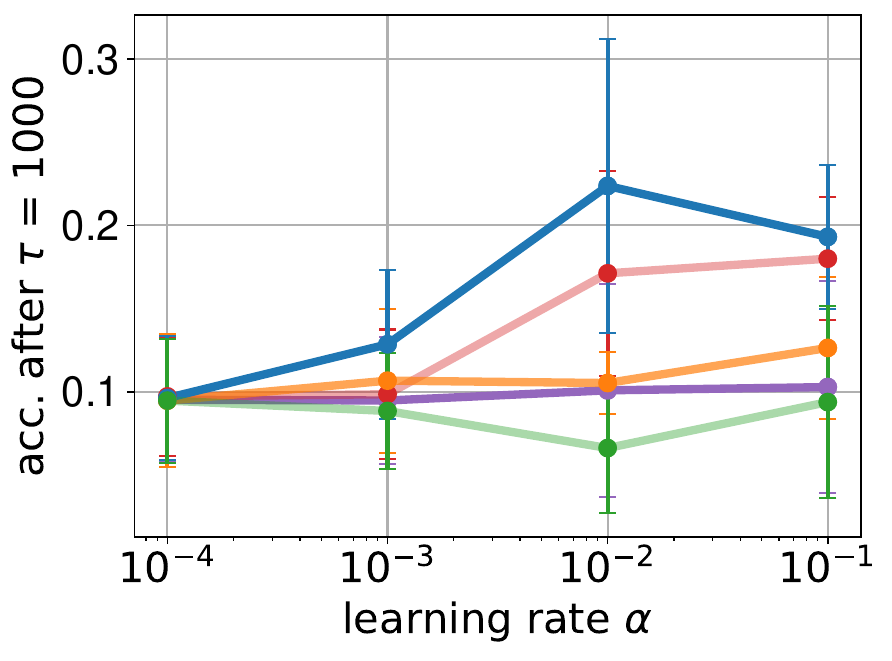}
        \caption{Varying the learning rate $\eta$ for all clients.}
        \label{fig:svm_learning_rate}
    \end{subfigure}
    \caption{\small Effects of system parameters on FMNIST. The overall results confirm the advantage of \texttt{Spod-GT}.}
    \label{fig:fmnist_ablations}
        \vspace{-3mm}
\end{figure*}

\subsection{Setup}
\textbf{Models and Datasets.} To evaluate our methodology, we consider image classification tasks using the Fashion-MNIST (FMNIST) \citep{xiao2017fashion} and CIFAR-10 \citep{krizhevsky2009learning} datasets. We train a Support Vector Machine (SVM)
for FMNIST, while adopting ResNet-18 \citep{he2016deep} to train for CIFAR-10.

\textbf{Settings.} By default, we consider $m = 10$ clients connected via a random geometric graph (RGG) \citep{penrose2003random}. We design our directed network graph $\mathcal{G}$ by first generating an undirected RGG, then making all edges bidirectional with heterogeneous weights for each link. We adopt a constant learning rate $\gamma = 0.01$ and a batch size of $B=16$ for all clients. The probability of gradient computation $p_i$ for each client and the probability of transmission $\hat{p}_{ij}$ through a link $(i,j) \in \mathcal{E}$ on the directed graph are randomly chosen according to the Beta distribution $\Beta(0.5, 0.5)$, yielding an inverted bell-shaped distribution, corresponding to significant resource heterogeneity. We consider two different data distribution scenarios: (i) IID, where each client receives samples from all $10$ classes in the dataset, and (ii) non-IID, where each client receives samples from just $1$ class. We measure the test accuracy of each scheme achieved over the average total delay incurred up to iteration $k$. Specifically, {\small $\bar{\tau}_{\text{total}}^{(k)} = \bar{\tau}_{\text{trans, in}}^{(k)} + \bar{\tau}_{\text{proc}}^{(k)} + \bar{\tau}_{\text{trans, out}}^{(k)}$}, in which {\small $\bar{\tau}_{\text{trans, in}}^{(k)} = \frac1m \sum_{i=1}^m{(1 / | \mathcal{N}_i^{\text{in}} |) \sum_{j=1}^m{(\hat{v}_{ij}^{(k)} / \hat{p}_{ij})}}$}, {\small $\bar{\tau}_{\text{trans, out}}^{(k)} = \frac1m \sum_{i=1}^m{(1 / | \mathcal{N}_i^{\text{out}} |) \sum_{j=1}^m{(\hat{v}_{ij}^{(k)} / \hat{p}_{ij})}}$} and {\small $\bar{\tau}_{\text{proc}}^{(k)} = \frac1m \sum_{i=1}^m{(v_i^{(k)} / p_i)}$} are the transmission delays per-client incurred across the links of in-neighbors and out-neighbors, and the processing delays incurred across clients in iteration $k$, respectively. For a fair comparison, we determine the aggregation interval $K$  for the K-GT algorithm based on these $p_i$, i.e., {\small $K = \lceil \frac1m \sum_{i=1}^m{(1/p_i)} \rceil$}.
We run the experiments five times in each setup and present the mean and one-sigma standard deviation error bars.

\textbf{Baselines.} We compare \texttt{Spod-GT} with four baselines that use GT in directed graphs: (a) AB/Push-Pull \citep{nedic2025ab},
where GT and local directed aggregations occur at every iteration; (b) the G-Push-Pull algorithm \citep{pu2020push}, which is a special case of our \texttt{Spod-GT} framework with sporadic aggregations but conducting GT at every iteration; (c) K-GT \citep{liu2025decentralized} with K local GT steps between aggregations; and (d) Sporadic K-GT, which is another special case of our \texttt{Spod-GT} framework with aggregations at every iteration but performing sporadic GT.

\subsection{Results and Discussion}

\textbf{Accuracy vs. Total Delay.} Fig.~\ref{fig:eff} illustrates the accuracy achieved per total delay for our \texttt{Spod-GT} algorithm and other GT baselines. We observe that when the clients in a DFL setup adopt our \texttt{Spod-GT} training algorithm, both for the Fashion-MNIST and the CIFAR-10 datasets, they are able to consistently achieve higher accuracies faster compared to their non-sporadic counterparts. For example, if we compare all algorithms after $\bar{\tau}_{\text{total}}^{(k)} = 5000$ in Figs.~\ref{fig:svm_eff:iid}-\ref{fig:vgg_eff:noniid}, we could see a $7$ - $40\%$ accuracy improvement for our \texttt{Spod-GT} algorithm compared to the second best baseline.

\textbf{Ablation Studies.} In Fig.~\ref{fig:fmnist_ablations}, we conduct ablation studies on key system parameters of our methodology for the FMNIST dataset. See Appendix~\ref{app:additional_exps} for a similar ablation study on the CIFAR-10 dataset. In Fig.~\ref{fig:svm_data_dist}, we vary the number of labels each client holds, with $1$ corresponding to the extreme non-IID case, and $10$ corresponding to the completely IID case. We see that our \texttt{Spod-GT} algorithm is capable of consistently outperforming all baselines at different levels of data heterogeneity, confirming the benefit of integrating the notion of sporadicity in both communications and computations. In Fig.~\ref{fig:svm_graph_conn}, we vary the radius of the RGG, with higher radius indicating a denser connectivity in the network, and observe that
\texttt{Spod-GT} performs the best for all choices of radii. In Fig.~\ref{fig:svm_num_clients}, we vary the number of clients $m$ from $10$ to $50$ to validate that our methodology maintains its superiority for larger network sizes. Finally, in Fig.~\ref{fig:svm_learning_rate}, we do a hyperparameter analysis of the learning rate $\eta$ from $10^{-4}$ to $10^{-1}$, which shows that the choice of $\eta = 0.01$ is the optimal choice for both datasets. In addition, we see that \texttt{Spod-GT} achieves a higher accuracy compared to all baselines for any choice of learning rate $\eta$.

\section{Conclusion}

We developed \textit{Sporadic Gradient Tracking} (\texttt{Spod-GT}), a resource-efficient DFL algorithm based on GT that designed to work on general non-doubly stochasticable directed graphs. By adjusting each client's participation in both (i) gradient computations and (ii) peer-to-peer communications according to its processing and transmission resource availabilities, respectively, our methodology achieves significant improvements over non-heterogeneity-aware optimization algorithms. We carried out a thorough analysis of our methodology, proving consensus and optimality guarantees despite intermittent client participation in DFL. We validate the efficacy of our approach through numerical experiments.








\section*{Impact Statement}


This paper presents work whose goal is to advance the field of Machine
Learning. There are many potential societal consequences of our work, none
which we feel must be specifically highlighted here.



\bibliography{example_paper}
\bibliographystyle{icml2026}

\newpage
\appendix
\onecolumn


\section{Supplementary Technical Details}

\subsection{Notation} \label{app:supp:not}

We use boldface lowercase $\bs{z} \in \mathbb{R}^{n \times 1}$ for vectors and boldface uppercase $\bs{Z} \in \mathbb{R}^{m \times n}$ for matrices. Subscripts and superscripts are used to denote the client and the iteration, respectively, i.e., $\bs{z}_i^{(k)}$ belongs to client $i$ at iteration $k$. We use the Frobenius norm and inner product for all matrices, i.e., $\| \bs{Z} \|^2 = \| \bs{Z} \|_F^2 = \sum_{i=1}^m{\| \bs{Z}_i \|^2}$ and $\langle \bs{Z}, \bs{W} \rangle = \langle \bs{Z}, \bs{W} \rangle_F = \sum_{i=1}^m{\langle \bs{z}_i, \bs{w}_i \rangle}$, where $\bs{Z} = [\bs{z}_1, ..., \bs{z}_m]^T$ and $\bs{W} = [\bs{w}_1, ..., \bs{w}_m]^T$, and we drop the subscript $F$ to avoid repetition. Given a non-negative vector $\bs{\phi} \in \mathbb{R}^{n \times 1}$, we define the row-based matrix norm and the inner product imposed by a vector as $\| \bs{Z} \|_{\bs{\phi}}^2 = \sum_{i=1}^m{\phi_i \| \bs{z}_i \|^2}$ and $\langle \bs{Z}, \bs{W} \rangle_{\bs{\phi}} = \sum_{i=1}^m{\phi_i \langle \bs{z}_i, \bs{w}_i \rangle}$, respectively, which is a generalization of the Frobenius norm and inner products if $\bs{\phi} = \bs{1}_m$. For a matrix $\bs{Z} = [\bs{z}_1, ..., \bs{z}_m]^T$, we define its row-based average as $\bs{\bar{z}} = \frac1m \sum_{i=1}^m{\bs{z}_i}$, and its weighted average imposed by a vector $\bs{\phi}$ as $\bs{\bar{z}}_{\bs{\phi}} = \sum_{i=1}^m{\phi_i \bs{z}_i}$. Also, given $N$ vectors $\bs{z}_1, ..., \bs{z}_N$, we define their element-wise average as $\overline{z_1...z_N} = \frac1m \sum_{i=1}^m{z_{1,i}...z_{N,i}}$. We use the notation $\bs{\Lambda}_{\bs{z}} = \diag(\boldsymbol{z}) \in \mathbb{R}^{n \times n}$ to indicate a diagonal matrix whose diagonal entries are the elements of the vector $\bs{z}$.

Let $\mathcal{F}_\sigma^{(k)}$ the $\sigma$-algebra generated by the set of random variables {\small $\{ \mathcal{B}_i^{(r)} \}_{\substack{0 \le r \le k-1 \\ i \in \mathcal{M}}} \cup \{ v_i^{(r)} \}_{\substack{0 \le r \le k-1 \\ i \in \mathcal{M}}} \cup \{ \hat{v}_{ij}^{(r)} \}_{\substack{0 \le r \le k-1 \\ i, j \in \mathcal{M}}}$}. If $z^{(k)}$ is an event in the filtered probability space with filtration $\mathcal{F}_\sigma^{(k)}$, we use the notation $\mathbb{E}[z^{(k)}]$ for brevity to denote the conditional expectation $\mathbb{E}[z^{(k)} \,|\, \mathcal{F}_\sigma^{(k)}]$. Let $D(\mathcal{G})$ and $K(\mathcal{G})$ denote the graph diameter and maximal edge utility in the digraph $\mathcal{G}$, respectively (see \cite{nedic2025ab} for more details).

\subsection{Summary of Scalars} \label{app:supp:table}

We define several constant scalars throughout our theoretical analysis that depend on the parameters of our Assumptions given in Sec.~\ref{ssec:assumps}. Table~\ref{tab:symbols_lemmas} outlines the auxiliary constant scalars used in our Intermediary Lemmas and Propositions that are presented in Appendix~\ref{ssec:intermed} and Sec.~\ref{ssec:props}, respectively. Moreover, Table~\ref{tab:symbols_props} includes the constant scalars used in our Main Lemmas that are discussed in Appendix.~\ref{ssec:main_lemmas}, which are in turn are central to the proofs of our Propositions.

\begin{table}[h]
    \centering
    \caption{Auxiliary constant scalars defined throughout our intermediary Lemmas (Sec.~\ref{ssec:intermed}) and Propositions (Sec.~\ref{ssec:props}).}
    \begin{tabular}{c|c || c|c}
        \toprule
        Symbol & Value & Symbol & Value
        \\
        \midrule
        $\kappa_1$ & $\max( \frac{p_i}{\hat{\phi}_i} (2 \frac{(1 - B_i / D_i) \sigma_{1,i}^2}{B_i} \bar{L}^2 + 3 L_i^2))$ & $\tilde{\rho}_A$ & $\hat{\rho}_A + 2 \hat{\rho}_{0,A}$
        \\
        $\kappa_2$ & $2 \overline{\sigma_1^2 p B^{-1} (1 - B/D)} + 3 \overline{(1 - p) \delta_1^2} + 3$ & $\tilde{\rho}_B$ & $\hat{\rho}_B + \hat{\rho}_{0,B}$
        \\
        $\kappa_3$ & $\max(\hat{\phi}_i) ( \frac{\max(\hat{\pi}_i)^2}{\min(\hat{\pi}_i)} + 2m \max(\hat{\pi}_j \overline{b_{:j}^2 \hat{p}_{:j} (1 - \hat{p}_{:j})}) )$ & $\kappa_6$ & $\frac{1 + \hat{\rho}_A}{1 - \tilde{\rho}_A}$
        \\
        $\kappa_4$ & $\max\left( \frac{p_i}{\hat{\phi}_i} \left( 2 \frac{(1 - B_i / D_i) \sigma_{1,i}^2}{B_i} \bar{L}^2 + 5 L_i^2 \right) \right)$ & $\kappa_7$ & $2 (m + 1) \frac{1 + \tilde{\rho}_B}{1 - \tilde{\rho}_B}$
        \\
        $\kappa_5$ & $2 \overline{p \left( 2 \sigma_1^2 B^{-1} (1 - B/D) + 5 (1 - p) \delta_1^2 \right)} \bar{L}^2 + 5 \overline{p^2 L^2}$ & $\kappa_8$ & $\max( p_i ( 2 \frac{(1 - B_i / D_i) \sigma_{1,i}^2}{B_i} \bar{L}^2 + 3 L_i^2 ))$
        \\
        $\hat{\tau}_A$ & $16 m^2 (m - 1) D(\mathcal{G}) K(\mathcal{G}) \frac{\max(a_{ij}^+)^2}{\min(a_{ij}^+)^{2(m+1)}}$ & $\kappa_9$ & $( \frac{m (1 + 3 \hat{\rho}_{0,B})}{3 \kappa_7 \tilde{\rho}_B} + 1 ) \kappa_4 \kappa_6 + \frac{\kappa_5}{m}$
        \\
        $\hat{\tau}_B$ & $4 m^3 (m - 1) D(\mathcal{G}) K(\mathcal{G}) \frac{\max(b_{ij}^+)^2}{\min(b_{ij}^+)^{3m+2}}$ & $\kappa_{10}$ & $\frac8{3 \kappa_2 \kappa_3} \frac{\max(\hat{\phi}_i \hat{\pi}_i)}{\max(\hat{\phi}_i)}$
        \\
        $\hat{\rho}_{0,A}$ & $4 (m - 1) \frac{\max(\hat{\phi}_i)}{\min(\hat{\phi}_i)} \max( a_{ij}^2 \hat{p}_{ij} (1 - \hat{p}_{ij}))$ & $\hat{\tau}'_B$ & $\frac{2m}{9 (m - 1) \kappa_2 \Gamma_1 b_{ij}^2} \frac{\min(\hat{\pi}_i)^2}{\max(\hat{\pi}_i)}$
        \\
        $\hat{\rho}_{0,B}$ & $2 (m - 1) \frac{\max(\hat{\pi}_i)}{\min(\hat{\pi}_i)} \max( b_{ij}^2 \hat{p}_{ij} (1 - \hat{p}_{ij}) )$ & $\hat{r}'_B$ & $\frac12 (1 + \sqrt{\max(0, 1 - \hat{\tau}'_B)})$
        \\
        \bottomrule
    \end{tabular}
    \label{tab:symbols_lemmas}
\end{table}

\begin{table}[t]
    \centering
    \caption{Core constant scalars central to our main Lemmas given in Appendix~\ref{ssec:main_lemmas}.}
    \begin{tabular}{c|c}
        \toprule
        Symbol & Value
        \\
        \midrule
        $\psi_{11}^{(k)}$ & $\frac{1 + \tilde{\rho}_A}2 + m \kappa_1 \kappa_6 \kappa_3 \max(\eta_i^{(k)})^2$
        \\
        $\psi_{12}^{(k)}$ & $\kappa_6 \kappa_3 \max(\eta_i^{(k)})^2$
        \\
        $\gamma_1^{(k)}$ & $m^2 \kappa_2 \kappa_6 \kappa_3 \max(\eta_i^{(k)})^2$
        \\
        $\omega_1^{(k)}$ & $m^2 (\overline{\sigma_0^2 p B^{-1} (1 - B/D)} + 3 \overline{(1 - p) \delta_0^2}) \kappa_6 \kappa_3 \max(\eta_i^{(k)})^2$
        \\
        $\psi_{21}^{(k)}$ & $\kappa_7 \kappa_4 (1 + \psi_{11}^{(k)}) + \kappa_7 \kappa_5 \kappa_3 \kappa_1 \max(\eta_i^{(k)})^2 + m \kappa_7 \kappa_5 \hat{\rho}_{0,A}$
        \\
        $\psi_{22}^{(k)}$ & $\frac{1 + \tilde{\rho}_B}2 + \kappa_7 \kappa_4 \psi_{12}^{(k)} + m \kappa_7 \kappa_5 \kappa_3 \max(\eta_i^{(k)})^2$
        \\
        $\gamma_2^{(k)}$ & $\kappa_7 \kappa_4 \gamma_1^{(k)} + 3m \kappa_7 \overline{p (2 \sigma_1^2 B^{-1} (1 - B/D) + 5 (1 - p) \delta_1^2)} + m \kappa_7 \kappa_5 \kappa_3 \kappa_2 \max(\eta_i^{(k)})^2$
        \\
        $\omega_2^{(k)}$ & $\begin{aligned}
            \kappa_7 \kappa_4 \omega_1^{(k)} & + 2m \kappa_7 \overline{p ( \sigma_0^2 B^{-1} (1 - B/D) + 5 (1 - p) \delta_0^2)}
            \\
            & + m (\overline{\sigma_0^2 p B^{-1} (1 - B/D)} + 3 \overline{(1 - p) \delta_0^2}) \kappa_7 \kappa_5 \kappa_3 \max(\eta_i^{(k)})^2
        \end{aligned}$
        \\
        $\psi_{01}^{(k)}$ & $\frac12 (1 + 3 \hat{\rho}_{0,B}) \kappa_8 \max(\eta_i^{(k)})$
        \\
        $\psi_{02}^{(k)}$ & $\frac{3 \max(\hat{\phi}_i) \tilde{\rho}_B}{2m} \max(\eta_i^{(k)})$
        \\
        $\gamma_0^{(k)}$ & $\frac12 m \overline{\hat{\phi} \eta^{(k)} \left( m \hat{\pi} - 2 \sigma_1^2 p B^{-1} (1 - B/D) - 3 (1 - p) \delta_1^2 - 3 \hat{\rho}_{0,B} \kappa_2 \right)}$
        \\
        $\omega_0^{(k)}$ & $\frac12 m (1 + 3 \hat{\rho}_{0,B}) \overline{\hat{\phi} \eta^{(k)} \left( \sigma_0^2 p B^{-1} (1 - B/D) + 3 (1 - p) \delta_0^2 \right)}$
        \\
        \bottomrule
    \end{tabular}
    \label{tab:symbols_props}
\end{table}

\subsection{Algorithm} \label{app:supp:alg}

We present the complete steps the \texttt{Spod-GT} algorithm in Alg.~\ref{alg:spodgt}, explicitly demonstrating what each client does when it decides to participate or not to participate in DFL training at a given iteration.

\begin{algorithm}[t]
    \caption{Sporadic Gradient Tracking (\texttt{Spod-GT})}
    \label{alg:spodgt}
    \begin{algorithmic}[1]
        \STATE {\bfseries Input:} $K$, $\mathcal{G} = (\mathcal{M}, \mathcal{E})$, ${\lbrace \mathcal{D}_i \rbrace}_{i \in \mathcal{M}}$, ${\lbrace B_i \rbrace}_{i \in \mathcal{M}}$, ${\lbrace p_i \rbrace}_{i \in \mathcal{M}}$, ${\lbrace \hat{p}_{ij} \rbrace}_{(i,j) \in \mathcal{E}}$, ${\lbrace \eta_i^{(k)} \rbrace}_{\substack{0 \le k \le K - 1 \\ i \in \mathcal{M}}}$
        
        \STATE {\bfseries Output:} ${\lbrace \bs{x}_i^{(K)} \rbrace}_{i \in \mathcal{M}}$
        
        \STATE {\bfseries Initialize:} $k \gets 0$, ${\lbrace \bs{x}_i^{(0)} \in \mathbb{R}^n \rbrace}_{i \in \mathcal{M}}$, ${\lbrace \bs{y}_i^{(0)} \gets \bs{g}_i^{(0)} = \bs{0}_n \rbrace}_{i \in \mathcal{M}}$, $\{ v_i^{(0)} \in \{ 0, 1 \} \}_{i \in \mathcal{M}}$, $\{ \hat{v}_{ij}^{(0)} \in \{ 0, 1 \} \}_{(i, j) \in \mathcal{E}}$

        \FORALL{$i \in \mathcal{M}$}
            \STATE \textbf{if} $v_i^{(0)} = 1$ \textbf{then} $\bs{y}_i^{(0)} \gets \bs{g}_i^{(0)} = \frac1{B_i} \sum_{\xi_{ij}^{(0)} \in \mathcal{B}_i^{(0)}}{\nabla{\mathcal{L}}(\bs{x}_i^{(0)} ; \xi_{ij}^{(0)})}$ where $\mathcal{B}_i^{(0)} \sim \mathcal{D}_i$

            \textbf{for all} $j \in \mathcal{N}_i^{\text{out}}$ \textbf{do}: $a_{ij} \gets 1 / (1 + |\mathcal{N}_i^{\text{in}}|)$, $b_{ji} \gets 1 / (1 + |\mathcal{N}_i^{\text{out}}|)$
        \ENDFOR
        
        \WHILE{$k \le K - 1$}
            \FORALL{$i \in \mathcal{M}$}
                \STATE $\bs{x}_i^{(k+0.5)} \gets \bs{x}_i^{(k)}$, $\bs{y}_i^{(k+0.5)} \gets \bs{y}_i^{(k)}$
                
                \STATE \textbf{for all} $j \in \mathcal{N}_i^{\text{in}}$ \textbf{do}: \textbf{if} $\hat{v}_{ij}^{(k)} = 1$ \textbf{then} $\bs{x}_i^{(k+0.5)} \gets \bs{x}_i^{(k+0.5)} + a_{ij} \left( \bs{x}_j^{(k)} - \bs{x}_i^{(k)} \right)$, $\bs{y}_i^{(k+0.5)} \gets \bs{y}_i^{(k+0.5)} + b_{ij} \bs{y}_j^{(k)}$

                \STATE \textbf{for all} $j \in \mathcal{N}_i^{\text{out}}$ \textbf{do}: \textbf{if} $\hat{v}_{ij}^{(k)} = 1$ \textbf{then} $\bs{y}_i^{(k+0.5)} \gets \bs{y}_i^{(k+0.5)} - b_{ji} \bs{y}_i^{(k)}$
            \ENDFOR
        
            \STATE \textbf{for all} $i \in \mathcal{M}$ \textbf{do}: $\bs{x}_i^{(k+1)} \gets \bs{x}_i^{(k+0.5)} - \eta_i^{(k)} \bs{y}_i^{(k+0.5)}$, $\bs{y}_i^{(k+1)} \gets \bs{y}_i^{(k+0.5)} - \bs{g}_i^{(k)}$

            \FORALL{$i \in \mathcal{M}$}
                \IF{$U_i^{(k+1)} \le p_i$ where $U_i \sim \mathrm{Uniform}(0,1]$}
                    \STATE $v_i^{(k+1)} = 1$

                    \STATE $\bs{g}_i^{(k+1)} = \frac1{B_i} \sum_{\xi_{ij}^{(k+1)} \in \mathcal{B}_i^{(k+1)}}{\nabla{\mathcal{L}}(\bs{x}_i^{(k+1)} ; \xi_{ij}^{(k+1)})}$ where $\mathcal{B}_i^{(k+1)} \sim \mathcal{D}_i$

                    \STATE $\bs{y}_i^{(k+1)} \gets \bs{y}_i^{(k+1)} + \bs{g}_i^{(k+1)}$
                \ENDIF

                \STATE \textbf{for all} $j \in \mathcal{N}_i^{\text{out}}$ \textbf{do}: \textbf{if} $\hat{U}_{ij}^{(k+1)} \le \hat{p}_{ij}$ where $\hat{U}_{ij}^{(k+1)} \sim \mathrm{Uniform}(0,1]$ \textbf{then} $\hat{v}_{ij}^{(k+1)} \gets 1$
            \ENDFOR
            
            \STATE $k \gets k + 1$
       \ENDWHILE
    \end{algorithmic}
\end{algorithm}

\subsection{Proof Tree} \label{app:supp:prooftree}

In Fig.~\ref{fig:prooftree}, we present a bird’s-eye schematic of our theoretical results, illustrating how each individual lemma contributes to the overall argument culminating in our final theorem.

\begin{figure}[t]
    \centering
    \includegraphics[width=0.6\linewidth]{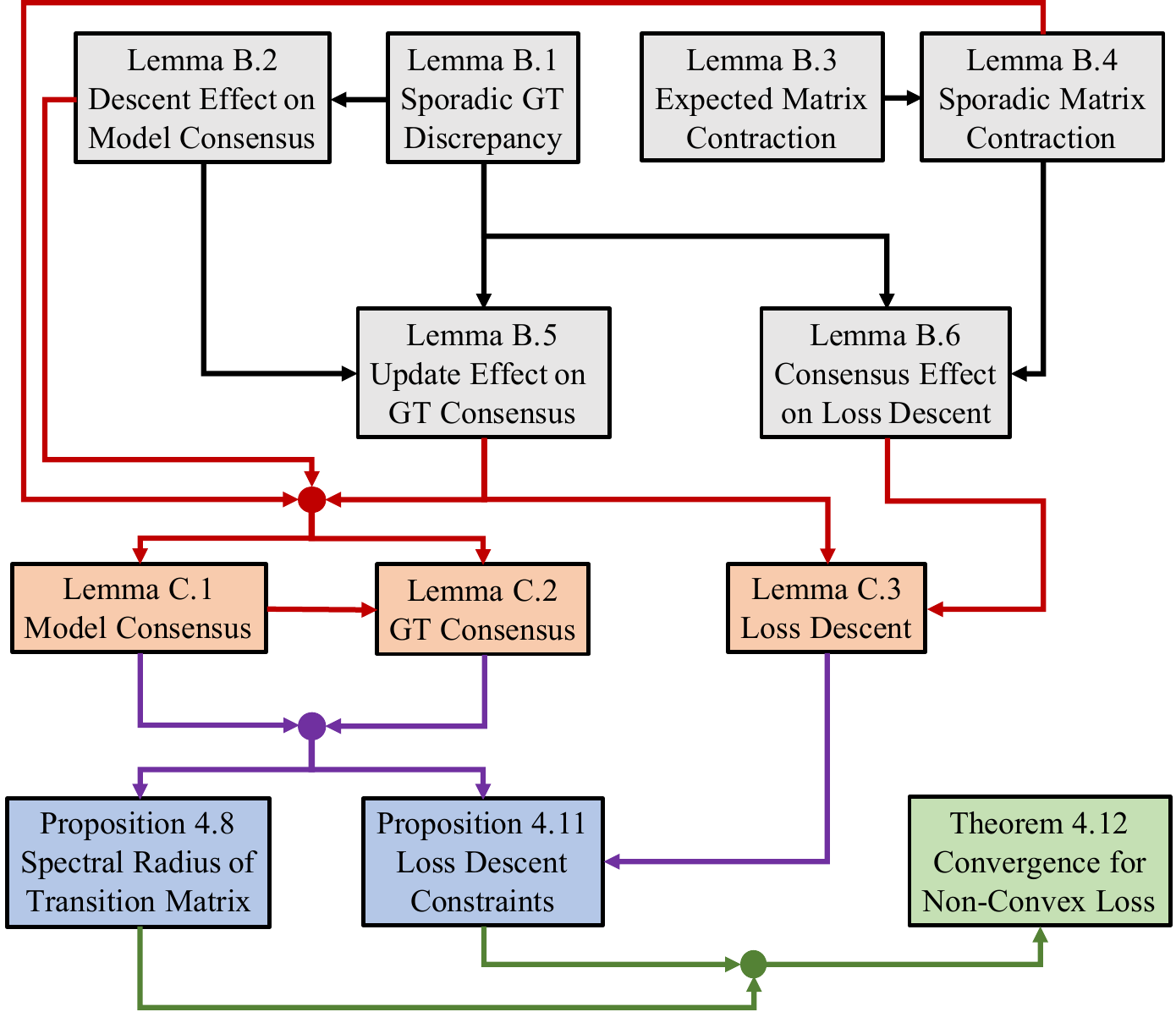}
    \caption{A schematic of the sequence of our theoretical results and their dependence on each other.}
    \label{fig:prooftree}
\end{figure}

\section{Intermediary Lemmas} \label{ssec:intermed}

In Definition~\eqref{def:avg_vec} we see that while we took the regular average of $\bs{Y}^{(k)}$, we needed to take the $\bs{\hat{\phi}}$-weighted average of $\bs{X}^{(k)}$ to arrive at the average iterative updates in Eq.~\eqref{eqn:averages}. To analyze the convergence of our methodology under these expressions, we need to obtain an upper bound on $\| \sum_{j=1}^m{b_{ij} \bs{y}_j^{(k)}} - \nabla{F}(\bar{\bs{x}}_{\bs{\hat{\phi}}}^{(k)}) \|^2$, which is the difference between the aggregated gradient tracking at client $i \in \mathcal{M}$ and the global gradient in any iteration $k \geq 0$. In order to obtain this bound using Assumptions~\ref{assump:lipschitz}-\ref{assump:graph}, we need to break it down to multiple steps, i.e.,
\begin{equation} \label{eqn:grad_steps}
    \sum_{j=1}^m{b_{ij} \bs{y}_j^{(k)}} \overset{(a)}{\to} \bs{\bar{y}}^{(k)} \overset{(b)}{\to} \bs{\bar{g}}_{\bs{v}}^{(k)} \overset{(c)}{\to} \overline{\nabla{\bs{F}}(\bs{X}^{(k)})}_{\bs{v}^{(k)}} \overset{(d)}{\to} \overline{\nabla{\bs{F}}(\bs{1} \bs{\bar{x}}_{\bs{\hat{\phi}}}^{(k)})}_{\bs{v}^{(k)}} \overset{(e)}{\to} \nabla{F}(\bs{\bar{x}}_{\bs{\hat{\phi}}}^{(k)}).
\end{equation}
We will postpone discussing $(a)$ to Lemma~\ref{lemma:gradconsensus}, and the equality for $(b)$ was obtained in Eq.~\eqref{eqn:ybar}.
\begin{lemma}[Sporadic Gradient Tracking Discrepancy] \label{lemma:graderror}
    Let $\bs{\hat{\phi}}$ be a positive stochastic vector. The following bounds hold for the steps $(c)$-$(e)$ of Eq.~\eqref{eqn:grad_steps}:
    \begin{enumerate}[label=(\roman*)]
        \item \label{lemma:graderror:sgd} Let Assumptions~\ref{assump:lipschitz}, \ref{assump:sgd} and \ref{assump:bernoulli} hold. We have
        \begin{equation}
            \begin{aligned}
                \mathbb{E}\left[ \left\| \bar{\bs{g}}_v^{(k)} - \overline{\nabla{\bs{F}}(\bs{X}^{(k)})}_{\bs{v}^{(k)}} \right\|^2 \right] & \le \overline{\sigma_0^2 p B^{-1} (1 - B_i / D_i)}
                \\
                & + \frac{2\bar{L}^2}m \max\left( \frac{\sigma_{1,i}^2 p_i (1 - B_i / D_i)}{B_i \hat{\phi}_i} \right) \mathbb{E}\left[ \left\| \bs{X}^{(k)} - \bs{1} \bs{\bar{x}}_{\bs{\hat{\phi}}}^{(k)} \right\|_{\bs{\hat{\phi}}}^2 \right]
                \\
                & + 2 \overline{\sigma_1^2 p B^{-1} (1 - B/D)} \mathbb{E}\left[ \left\| \nabla{F}(\bs{\bar{x}}_{\bs{\hat{\phi}}}^{(k)}) \right\|^2 \right].
            \end{aligned}
        \end{equation}
        
        \item \label{lemma:graderror:lipschitz} Let Assumptions~\ref{assump:lipschitz}, \ref{assump:graddiv} and \ref{assump:bernoulli} hold. We have
        \begin{equation}
            \mathbb{E}\left[ \left\| \overline{\nabla{\bs{F}}(\bs{X}^{(k)})}_{\bs{v}^{(k)}} - \overline{\nabla{\bs{F}}(\bs{1} \bs{\bar{x}}_{\bs{\hat{\phi}}}^{(k)})}_{\bs{v}^{(k)}} \right\|^2 \right] \le \frac1m \max\left( \frac{p_i}{\hat{\phi}_i} L_i^2 \right) \mathbb{E}\left[ \left\| \bs{X}^{(k)} - \bs{1} \bs{\bar{x}}_{\bs{\hat{\phi}}}^{(k)} \right\|_{\bs{\hat{\phi}}}^2 \right].
        \end{equation}

        \item \label{lemma:graderror:bernoulli} Let Assumptions~\ref{assump:graddiv} and \ref{assump:bernoulli} hold. We have
        \begin{equation}
            \mathbb{E}\left[ \left\| \overline{\nabla{\bs{F}}(\bs{1} \bs{\bar{x}}_{\bs{\hat{\phi}}}^{(k)})}_{\bs{v}^{(k)}} - \nabla{F}(\bar{\bs{x}}_{\bs{\hat{\phi}}}^{(k)}) \right\|^2 \right] \le \overline{(1 - p) \delta_0^2} + \overline{(1 - p) \delta_1^2} \mathbb{E}\left[ \left\| \nabla{F}(\bar{\bs{x}}_{\bs{\hat{\phi}}}^{(k)}) \right\|^2 \right].
        \end{equation}
    \end{enumerate}

    \begin{proof}
        See Appendix~\ref{app:lemma:graderror}
    \end{proof}
\end{lemma}
We observe that when clients conduct GT in every training iteration, i.e., $p_i = 1$ for all $i \in \mathcal{M}$, the upper bound in Lemma~\ref{lemma:graderror}-\ref{lemma:graderror:bernoulli} becomes zero. On the other hand, if the gradient estimation variance is uniformly bounded, i.e., $\sigma_{1,i}^2 = 0$, we get a uniform bound for Lemma~\ref{lemma:graderror}-\ref{lemma:graderror:sgd}. Also, note that when clients use a full batch, i.e., $B_i = D_i$, the bound in Lemma~\ref{lemma:graderror}-\ref{lemma:graderror:sgd} becomes zero.

Next, we obtain several key upper bounds involving the gradient tracking parameters $\bs{y}_i^{(k)}$.
\begin{lemma}[Descent Effect on Model Consensus] \label{lemma:descenteffect}
    Let $\bs{\hat{\phi}}$ and $\bs{\hat{\pi}}$ be positive stochastic vectors. Then, the following bounds hold:
    
    \begin{enumerate}[label=(\roman*)]
        \item \label{lemma:descenteffect:ybar} Let Assumptions~\ref{assump:lipschitz}-\ref{assump:bernoulli} hold. We will then have according to Lemma~\ref{lemma:graderror} that
        \begin{equation}
            \mathbb{E}\left[ \| \bs{\bar{y}}^{(k)} \|^2 \right] \le \overline{\sigma_0^2 p B^{-1} (1 - B/D)} + 3 \overline{(1 - p) \delta_0^2} + \frac{\kappa_1}m \mathbb{E}\left[ \left\| \bs{X}^{(k)} - \bs{1} \bs{\bar{x}}_{\bs{\hat{\phi}}}^{(k)} \right\|_{\bs{\hat{\phi}}}^2 \right] + \kappa_2 \mathbb{E}\left[ \left\| \nabla{F}(\bs{\bar{x}}_{\bs{\hat{\phi}}}^{(k)}) \right\|^2 \right],
        \end{equation}
        where $\kappa_1 = \max( \frac{p_i}{\hat{\phi}_i} (2 \frac{(1 - B_i / D_i) \sigma_{1,i}^2}{B_i} \bar{L}^2 + 3 L_i^2))$ and $\kappa_2 = 2 \overline{\sigma_1^2 p B^{-1} (1 - B/D)} + 3 \overline{(1 - p) \delta_1^2} + 3$.

        \item \label{lemma:descenteffect:ypiinv} The $\bs{\hat{\pi}}^{-1}$-weighted norm of gradient tracking parameters $\bs{y}_i^{(k)}$ can be upper bounded as 
        \begin{equation}
            \left\| \bs{Y}^{(k)} \right\|_{\bs{\hat{\pi}}^{-1}}^2 \le \left\| \bs{\Lambda}_{\bs{\hat{\pi}}}^{-1} \bs{Y}^{(k)} - m \bs{1} \bs{\bar{y}}^{(k)} \right\|_{\bs{\hat{\pi}}}^2 + m^2 \left\| \bs{\bar{y}}^{(k)} \right\|^2.
        \end{equation}

        \item \label{lemma:descenteffect:yconsphi} Let Assumption~\ref{assump:bernoulli} hold. The descent step deviation from its $\bs{\hat{\phi}}$-weighted average is bounded as
        \begin{equation}
            \mathbb{E}\left[ \left\| \bs{\Lambda}_{\bs{\eta}}^{(k)} \bs{\hat{B}}^{(k)} \bs{Y}^{(k)} - \bs{1} \bs{\hat{\phi}}^T \bs{\Lambda}_{\bs{\eta}}^{(k)} \bs{\hat{B}}^{(k)} \bs{Y}^{(k)} \right\|_{\bs{\hat{\phi}}}^2 \right] \le \kappa_3 \max(\eta_i^{(k)})^2 \mathbb{E}\left[ \left\| \bs{Y}^{(k)} \right\|_{\bs{\hat{\pi}}^{-1}}^2 \right],
        \end{equation}
        where $\kappa_3 = \max(\hat{\phi}_i) ( \frac{\max(\hat{\pi}_i)^2}{\min(\hat{\pi}_i)} + 2m \max(\hat{\pi}_j \overline{b_{:j}^2 \hat{p}_{:j} (1 - \hat{p}_{:j})}) )$.
    \end{enumerate}

    \begin{proof}
        See Appendix~\ref{app:lemma:descenteffect}
    \end{proof}
\end{lemma}


    

In Eq.~\eqref{eqn:matrix_update} where we obtained the matrix update equations for \texttt{Spod-GT}, we see that the aggregations are governed by the row-stochastic matrix $\bs{\hat{A}}^{(k)}$ and column-stochastic matrix $\bs{\hat{B}}^{(k)}$. Furthermore, in Eq.~\eqref{eqn:averages}, we obtained the weighted average of model parameters $\bs{\bar{x}}_{\bs{\hat{\phi}}}^{(k)}$ and the average of GT parameters $\bs{\hat{y}}^{(k)}$. In the following Lemma, we first characterize error upper bounds on the expected mixing matrices $\bs{\hat{A}}$ and $\bs{\hat{B}}$, which were defined in Eq.~\eqref{eqn:expected_mat}.
\begin{lemma}[Expected Matrix Contraction] \label{lemma:expected_contraction}
    Let Assumption~\ref{assump:graph} hold. Let the row-stochastic $\bs{\hat{A}}$ and column-stochastic $\bs{\hat{B}}$ expected mixing matrices (Eq.~\eqref{eqn:expected_mat}) be designed to be irreducible and primitive such that $\bs{\hat{\phi}}$ is the left-eigenvector of $\bs{\hat{A}}$ and $\bs{\hat{\pi}}$ is the right-eigenvector of $\bs{\hat{B}}$ (Eq.~\eqref{eqn:eigenvectors}). Then, the following bounds will hold for the vector-induced matrix norms:
    \begin{enumerate}[label=(\roman*)]
        \item \label{lemma:expected_contraction:A} $\left\| \bs{\hat{A}} \bs{X}^{(k)} - \bs{1} \bs{\bar{x}}_{\bs{\hat{\phi}}}^{(k)} \right\|_{\bs{\hat{\phi}}}^2 \le \hat{\rho}_A \left\| \bs{X}^{(k)} - \bs{1} \bs{\bar{x}}_{\bs{\hat{\phi}}}^{(k)} \right\|_{\bs{\hat{\phi}}}^2$,

        \item \label{lemma:expected_contraction:B} $\left\| \bs{\Lambda}_{\bs{\hat{\pi}}}^{-1} \bs{\hat{B}} \bs{Y}^{(k)} - m \bs{1} \bs{\bar{y}}^{(k)} \right\|_{\bs{\hat{\pi}}}^2 \le \hat{\rho}_B \left\| \bs{\Lambda}_{\bs{\hat{\pi}}}^{-1} \bs{Y}^{(k)} - m \bs{1} \bs{\bar{y}}^{(k)} \right\|_{\bs{\hat{\pi}}}^2$.
    \end{enumerate}
    where $\bs{X}^{(k)}, \bs{Y}^{(k)} \in \mathbb{R}^{m \times n}$ are a collection of $m$ arbitrary vectors and $0 \le \hat{\rho}_A, \hat{\rho}_B < 1$. In particular, we have $\hat{\rho}_A = 1 - \frac{\min(\hat{\phi}_i) \min(a_{ij}^+ \hat{p}_{ij})^2}{\max(\hat{\phi}_i)^2 D(\mathcal{G}) K(\mathcal{G})}$ and $\hat{\rho}_B = 1 - \frac{\min(\hat{\pi}_i)^2 \min(b_{ij}^+ \hat{p}_{ij})^2}{\max(\hat{\pi}_i)^3 D(\mathcal{G}) K(\mathcal{G})}$, in which $D(\mathcal{G})$ is the diameter of graph $\mathcal{G}$ and $K(\mathcal{G}$ is its maximal edge utility
    \begin{proof}
        See Lemmas~4.2 and 4.5 in \cite{nedic2025ab} applied to expected mixing matrices in Eq.~\eqref{eqn:expected_mat}, noting that $\hat{a}_{ij} = a_{ij} \hat{p}_{ij}$ and $\hat{b}_{ij} = b_{ij} \hat{p}_{ij}$.
    \end{proof}
\end{lemma}
Note that in the simpler case where the directed graph $\mathcal{G}$ admits a doubly-stochastic mixing matrix \cite{gharesifard2010does, xin2021improved}, we would have $\bs{\hat{\phi}} = \bs{\hat{\pi}} = \frac1m \bs{1}$, and $\hat{\rho}_A$ and $\hat{\rho}_B$ in Lemma~\ref{lemma:expected_contraction} would correspond to the spectral radius of the matrices $\bs{\hat{A}} - \frac1m \bs{1} \bs{1}^T$ and $\bs{\hat{B}} - \frac1m \bs{1} \bs{1}^T$, respectively. Furthermore, note that if $\mathcal{G}$ is a fully-connected graph, which results in $D(\mathcal{G}) = K(\mathcal{G}) = 1$, and we assign the mixing weights to $a_{ij} = b_{ij} = 1 / m$ for all $(i,j) \in \mathcal{E}$, the spectral radii simplify to $\hat{\rho}_A = \hat{\rho}_B = 1 - \min(\hat{p}_{ij})^2$. Notice how assigning $\hat{p}_{ij} = 1$ for all the edges further gives us $\hat{\rho}_A = \hat{\rho}_B = 0$.

Having established a contraction property for the expected mixing matrices in Lemma~\ref{lemma:expected_contraction}, we now extend it to sporadic mixing matrices $\bs{\hat{A}}^{(k)}$ and $\bs{\hat{B}}^{(k)}$.
\begin{lemma}[Sporadic Matrix Contraction] \label{lemma:sporadic_contraction}
    Let Assumptions~\ref{assump:bernoulli} and \ref{assump:graph} hold. Let the row-stochastic $\bs{\hat{A}}^{(k)}$ and column-stochastic $\bs{\hat{B}}^{(k)}$ sporadic mixing matrices (Eq.~\eqref{eqn:matrix_update}) be designed such that their respective expected values $\bs{\hat{A}}$ and $\bs{\hat{B}}$ (Eq.~\eqref{eqn:expected_mat}) are irreducible and primitive. Then, Lemmas~\ref{lemma:descenteffect} and \ref{lemma:expected_contraction} imply the following bounds for the vector-induced sporadic matrix norms:
    \begin{enumerate}[label=(\roman*)]
        \item \label{lemma:sporadic_contraction:A} If the communication probabilities satisfy $\min(\hat{p}_{ij}) \in [\hat{r}_A, 1]$ where $\hat{r}_A \in (0, 1)$ is the solution of the equation $f_A(x) = x^{2(m+1)} + \hat{\tau}_A x - \hat{\tau}_A$ with $\hat{\tau}_A = 16 m^2 (m - 1) D(\mathcal{G}) K(\mathcal{G}) \frac{\max(a_{ij}^+)^2}{\min(a_{ij}^+)^{2(m+1)}}$, then
        \begin{equation}
            \begin{aligned}
                \mathbb{E}\left[ \left\| \bs{\hat{A}}^{(k)} \bs{X}^{(k)} - \bs{1} \bs{\bar{x}}_{\bs{\hat{\phi}}}^{(k)} \right\|_{\bs{\hat{\phi}}}^2 \right] \le (\hat{\rho}_A + \hat{\rho}_{0,A}) \mathbb{E}\left[ \left\| \bs{X}^{(k)} - \bs{1} \bs{\bar{x}}_{\bs{\hat{\phi}}}^{(k)} \right\|_{\bs{\hat{\phi}}}^2 \right],
            \end{aligned}
        \end{equation}
        where $\hat{\rho}_{0,A} = 4 (m - 1) \frac{\max(\hat{\phi}_i)}{\min(\hat{\phi}_i)} \max( a_{ij}^2 \hat{p}_{ij} (1 - \hat{p}_{ij})) \le \frac{1 - \hat{\rho}_A}4$.

        \item \label{lemma:sporadic_contraction:B} If the communication probabilities satisfy $\min(\hat{p}_{ij}) \in [\hat{r}_B, 1]$ where $\hat{r}_B \in (0, 1)$ is the solution of the equation $f_B(x) = x^{3m+2} + \hat{\tau}_B x - \hat{\tau}_B$ with $\hat{\tau}_B = 4 m^3 (m - 1) D(\mathcal{G}) K(\mathcal{G}) \frac{\max(b_{ij}^+)^2}{\min(b_{ij}^+)^{3m+2}}$, then
        \begin{equation}
            \mathbb{E}\left[ \left\| \bs{\Lambda}_{\bs{\hat{\pi}}}^{-1} \bs{\hat{B}}^{(k)} \bs{Y}^{(k)} - m \bs{1} \bs{\bar{y}}^{(k)} \right\|_{\bs{\hat{\pi}}}^2 \right] \le (\hat{\rho}_B + \hat{\rho}_{0,B}) \mathbb{E}\left[ \left\| \bs{\Lambda}_{\bs{\hat{\pi}}}^{-1} \bs{Y}^{(k)} - m \bs{1} \bs{\bar{y}}^{(k)} \right\|_{\bs{\hat{\pi}}}^2 \right] + m^2 \hat{\rho}_{0,B} \mathbb{E}\left[ \| \bs{\bar{y}}^{(k)} \|^2 \right],
        \end{equation}
        where $\hat{\rho}_{0,B} = 2 (m - 1) \frac{\max(\hat{\pi}_i)}{\min(\hat{\pi}_i)} \max( b_{ij}^2 \hat{p}_{ij} (1 - \hat{p}_{ij}) ) \le \frac{1 - \hat{\rho}_B}2$.
    \end{enumerate}
    Note that $\bs{X}^{(k)}, \bs{Y}^{(k)} \in \mathbb{R}^{m \times n}$ are a collection of $m$ arbitrary $n$-dimensional vectors.
    \begin{proof}
        See Appendix~\ref{app:lemma:sporadic_contraction}.
    \end{proof}
\end{lemma}
Lemma~\ref{lemma:sporadic_contraction} is one of the main contributions of our paper, as we show that the row-stochastic $\bs{\hat{A}}^{(k)}$ and column-stochastic $\bs{\hat{B}}^{(k)}$ transition matrices have a contraction property, despite them being sporadic. In other words, $\bs{\hat{A}}^{(k)}$ and $\bs{\hat{B}}^{(k)}$ contract in expectation despite not corresponding to a strongly-connected digraph. Moreover, we observe that when clients participate in peer-to-peer communications in every training iteration, i.e., $\hat{p}_{ij} = 1$ for all $(i, j) \in \mathcal{E}$, we will have both $\hat{\rho}_{0,A} = \hat{\rho}_{0,B} = 0$, and Lemma~\ref{lemma:sporadic_contraction} reduces back to Lemma~\ref{lemma:expected_contraction}.

We see in Eqs.~\eqref{eqn:vector_update} and \eqref{eqn:matrix_update} that each client $i \in \mathcal{M}$ updates its gradient tracking parameters $\bs{y}_i^{(k+1)}$ by subtracting the old local stochastic gradient from it and adding the new local stochastic gradient, i.e., $\bs{g}_i^{(k+1)} v_i^{(k+1)} - \bs{g}_i^{(k)} v_i^{(k)}$. Therefore, we need to characterize how much this update affects the gradient tracking consensus.

\begin{lemma}[Update Effect on Gradient Tracking Consensus] \label{lemma:trackingeffect}
    Let $\bs{\hat{\phi}}$ and $\bs{\hat{\pi}}$ be positive stochastic vectors. Then, the following bounds hold:
    \begin{enumerate}[label=(\roman*)]
        \item \label{lemma:trackingeffect:sporadic_modeldiv} Let Assumption~\ref{assump:bernoulli} hold. Then, an upper bound on the sporadic consensus variance can be obtained as
        \begin{equation}
            \mathbb{E}\left[ \left\| \left( \bs{\hat{A}}^{(k)} - \bs{\hat{A}} \right) \bs{X}^{(k)} \right\|_{\bs{\hat{\phi}}}^2 \right] \le \hat{\rho}_{0,A} \mathbb{E}\left[ \left\| \bs{X}^{(k)} - \bs{1} \bs{\bar{x}}_{\bs{\hat{\phi}}}^{(k)} \right\|_{\bs{\hat{\phi}}}^2 \right].
        \end{equation}
    
        \item \label{lemma:trackingeffect:modelonestep} Let Assumption~\ref{assump:bernoulli} hold. Then, an upper bound on the one-step average model update can be obtained as
        \begin{equation}
            \mathbb{E}\left[ \left\| \bs{\bar{x}}_{\bs{\hat{\phi}}}^{(k+1)} - \bs{\bar{x}}_{\bs{\hat{\phi}}}^{(k)} \right\|^2 \right] \le \hat{\rho}_{0,A} \mathbb{E}\left[ \left\| \bs{X}^{(k)} - \bs{1} \bs{\bar{x}}_{\bs{\hat{\phi}}}^{(k)} \right\|_{\bs{\hat{\phi}}}^2 \right] + \kappa_3 \max(\eta_i^{(k)})^2 \mathbb{E}\left[ \left\| \bs{Y}^{(k)} \right\|_{\bs{\hat{\pi}}^{-1}}^2 \right].
        \end{equation}
        
        \item \label{lemma:trackingeffect:sporadic_sgddiv} Let Assumptions~\ref{assump:lipschitz}-\ref{assump:bernoulli} hold. Then, an upper bound on the sporadic stochastic gradient diversity can be obtained as
        \begin{equation}
            \begin{aligned}
                \mathbb{E} & \left[ \left\| \bs{\Lambda}_{\bs{v}}^{(k+1)} \bs{G}^{(k+1)} - \bs{\Lambda}_{\bs{v}}^{(k)} \bs{G}^{(k)} \right\|^2 \right] \le 2 m \overline{p \left( \sigma_0^2 B^{-1} (1 - B/D) + 5 (1 - p) \delta_0^2 \right)}
                \\
                & + \kappa_4 \left( \mathbb{E}\left[ \left\| \bs{X}^{(k+1)} - \bs{1} \bs{\bar{x}}_{\bs{\hat{\phi}}}^{(k+1)} \right\|_{\bs{\hat{\phi}}}^2 \right] + \mathbb{E}\left[ \left\| \bs{X}^{(k)} - \bs{1} \bs{\bar{x}}_{\bs{\hat{\phi}}}^{(k)} \right\|_{\bs{\hat{\phi}}}^2 \right] \right)
                \\
                & + 3m \overline{p \left( 2 \sigma_1^2 B^{-1} (1 - B/D) + 5 (1 - p) \delta_1^2 \right)} \mathbb{E}\left[ \left\| \nabla{F}(\bs{\bar{x}}_{\bs{\hat{\phi}}}^{(k)}) \right\|^2 \right] + m \kappa_5 \mathbb{E}\left[ \left\| \bs{\bar{x}}_{\bs{\hat{\phi}}}^{(k+1)} - \bs{\bar{x}}_{\bs{\hat{\phi}}}^{(k)} \right\|^2 \right],
        	\end{aligned}
        \end{equation}
        where $\kappa_4 = \max\left( \frac{p_i}{\hat{\phi}_i} \left( 2 \frac{(1 - B_i / D_i) \sigma_{1,i}^2}{B_i} \bar{L}^2 + 5 L_i^2 \right) \right)$ and $\kappa_5 = 2 \overline{p \left( 2 \sigma_1^2 B^{-1} (1 - B/D) + 5 (1 - p) \delta_1^2 \right)} \bar{L}^2 + 5 \overline{p^2 L^2}$.
    \end{enumerate}
    
    \begin{proof}
        See Appendix~\ref{app:lemma:trackingeffect}.
    \end{proof}
\end{lemma}
We observe that when clients participate in peer-to-peer communications in every training iteration, i.e., $\hat{p}_{ij} = 1$ for all $(i,j) \in \mathcal{E}$ resulting in $\hat{\rho}_{0,A} = 0$, the upper bound of Lemma~\ref{lemma:trackingeffect}-\ref{lemma:trackingeffect:sporadic_modeldiv} becomes zero, and the upper bound of Lemma~\ref{lemma:trackingeffect}-\ref{lemma:trackingeffect:modelonestep} only involves the term $\kappa_3 \max(\eta_i^{(k)})^2 \mathbb{E}[ \| \bs{Y}^{(k)} \|_{\bs{\hat{\pi}}^{-1}}^2 ]$. On the other hand, if clients conduct GT in every training iteration, i.e., $p_i = 1$ for all $i \in \mathcal{M}$, and also the gradient estimation variance is uniformly bounded, i.e., $\sigma_{1,i}^2 = 0$, the bound of Lemma~\ref{lemma:trackingeffect}-\ref{lemma:trackingeffect:sporadic_sgddiv} will no longer depend on $\mathbb{E}[ \| \nabla{F}(\bs{\bar{x}}_{\bs{\hat{\phi}}}^{(k)}) \|^2 ]$.

Equipped with the previous Lemmas, we can finally obtain the desired upper bound for Eq.~\eqref{eqn:grad_steps}.
\begin{lemma}[Consensus Effect on Loss Descent] \label{lemma:gradconsensus}
    Let Assumptions~\ref{assump:lipschitz}-\ref{assump:graph} hold, and $\bs{\hat{\phi}}$ and $\bs{\hat{\pi}}$ be positive stochastic vectors. Then, the following bounds hold:
    \begin{equation}
        \begin{aligned}
            \mathbb{E} & \left[ \left\| \bs{\Lambda}_{\bs{\hat{\pi}}}^{-1} \bs{\hat{B}}^{(k)} \bs{Y}^{(k)} - \bs{1} \nabla{F}(\bs{\bar{x}}_{\bs{\hat{\phi}}}^{(k)}) \right\|_{\bs{\hat{\pi}}}^2 \right] \le 3 (\hat{\rho}_B + \hat{\rho}_{0,B}) \mathbb{E}\left[ \left\| \bs{\Lambda}_{\bs{\hat{\pi}}}^{-1} \bs{Y}^{(k)} - m \bs{1} \bs{\bar{y}}^{(k)} \right\|_{\bs{\hat{\pi}}}^2 \right]
            \\
            & + m^2 (1 + 3 \hat{\rho}_{0,B}) \left( \overline{\sigma_0^2 p B^{-1} (1 - B/D)} + 3 \overline{(1 - p) \delta_0^2} \right) + m (1 + 3 \hat{\rho}_{0,B}) \kappa_1 \mathbb{E}\left[ \left\| \bs{X}^{(k)} - \bs{1} \bs{\bar{x}}_{\bs{\hat{\phi}}}^{(k)} \right\|_{\bs{\hat{\phi}}}^2 \right]
            \\
            & + m^2 \left( 2 \overline{\sigma_1^2 p B^{-1} (1 - B/D)} + 3 \overline{(1 - p) \delta_1^2} + 3 \hat{\rho}_{0,B} \kappa_2 \right) \mathbb{E}\left[ \left\| \nabla{F}(\bs{\bar{x}}_{\bs{\hat{\phi}}}^{(k)}) \right\|^2 \right].
        \end{aligned}
    \end{equation}
    
    \begin{proof}
        See Appendix~\ref{app:lemma:gradconsensus}.
    \end{proof}
\end{lemma}
We observe that when clients conduct GT in every training iteration, i.e., $p_i = 1$ for all $i \in \mathcal{M}$, clients participate in peer-to-peer communications in every training iteration, i.e., $\hat{p}_{ij} = 1$ for all $(i,j) \in \mathcal{E}$ resulting in $\hat{\rho}_{0,B} = 0$, and also the gradient estimation variance is uniformly bounded, i.e., $\sigma_{1,i}^2 = 0$, the bound of Lemma~\ref{lemma:gradconsensus} will no longer depend on $\mathbb{E}[ \| \nabla{F}(\bs{\bar{x}}_{\bs{\hat{\phi}}}^{(k)}) \|^2 ]$.

\section{Main Lemmas} \label{ssec:main_lemmas}

Our first main lemma is a consensus upper bound of model parameters $\bs{x}_i^{(k)}$ for each client compared to the weighted average of all model parameters in the directed network graph $\bs{\bar{x}}_{\bs{\phi}}^{(k)}$.
\begin{lemma}[Model Consensus] \label{lemma:xdispersion}
    Let Assumptions~\ref{assump:lipschitz}-\ref{assump:graph} hold. Then, Lemmas~\ref{lemma:expected_contraction}-\ref{lemma:expected_contraction:A} and \ref{lemma:descenteffect} imply that
    \begin{equation}
        \begin{aligned}
            \mathbb{E}\left[ \left\| \bs{X}^{(k+1)} - \bs{1} \bs{\bar{x}}_{\bs{\hat{\phi}}}^{(k+1)} \right\|_{\bs{\hat{\phi}}}^2 \right] \le \psi_{11}^{(k)} \mathbb{E}\left[ \left\| \bs{X}^{(k)} - \bs{1} \bs{\bar{x}}_{\bs{\hat{\phi}}}^{(k)} \right\|_{\bs{\hat{\phi}}}^2 \right] & + \psi_{12}^{(k)} \mathbb{E}\left[ \left\| \bs{\Lambda}_{\bs{\hat{\pi}}}^{-1} \bs{Y}^{(k)} - m \bs{1} \bs{\bar{y}}^{(k)} \right\|_{\bs{\hat{\pi}}}^2 \right]
            \\
            & + \gamma_1^{(k)} \mathbb{E}\left[ \left\| \nabla{F}(\bs{\bar{x}}_{\bs{\hat{\phi}}}^{(k)}) \right\|^2 \right] + \omega_1^{(k)},
        \end{aligned}
    \end{equation}
    where $\psi_{11}^{(k)} = \frac{1 + \tilde{\rho}_A}2 + m \kappa_1 \kappa_6 \kappa_3 \max(\eta_i^{(k)})^2$, $\psi_{12}^{(k)} = \kappa_6 \kappa_3 \max(\eta_i^{(k)})^2$, $\gamma_1^{(k)} = m^2 \kappa_2 \kappa_6 \kappa_3 \max(\eta_i^{(k)})^2$ and $\omega_1^{(k)} = m^2 (\overline{\sigma_0^2 p B^{-1} (1 - B/D)} + 3 \overline{(1 - p) \delta_0^2}) \kappa_6 \kappa_3 \max(\eta_i^{(k)})^2$, in which $\kappa_6 = \frac{1 + \hat{\rho}_A}{1 - \tilde{\rho}_A}$ and $\tilde{\rho}_A = \hat{\rho}_A + 2 \hat{\rho}_{0,A}$.
    
    \begin{proof}
        See Appendix~\ref{app:lemma:xdispersion}.
    \end{proof}
\end{lemma}
We observe that when clients conduct GT in every training iteration, i.e., $p_i = 1$ for all $i \in \mathcal{M}$, and use a full batch, i.e., $B_i = D_i$, we will have $\omega_1^{(k)} = 0$.

Our next main lemma is a consensus upper bound of scaled GT parameters $\frac1{\hat{\pi}_i} \bs{y}_i^{(k)}$ for each client compared to the sum of all GT parameters in the directed network graph $m \bs{\bar{y}}^{(k)}$.
\begin{lemma}[Gradient Tracking Consensus] \label{lemma:ydispersion}
    Let Assumptions~\ref{assump:lipschitz}-\ref{assump:graph} hold. Then, Lemmas~\ref{lemma:expected_contraction}-\ref{lemma:expected_contraction:B} and \ref{lemma:trackingeffect}
    imply that
    \begin{equation}
        \begin{aligned}
            \mathbb{E}\left[ \left\| \bs{\Lambda}_{\bs{\hat{\pi}}}^{-1} \bs{Y}^{(k+1)} - m \bs{1} \bs{\bar{y}}^{(k+1)} \right\|_{\bs{\hat{\pi}}}^2 \right] \le \psi_{21}^{(k)} \mathbb{E}\left[ \left\| \bs{X}^{(k)} - \bs{1} \bs{\bar{x}}_{\bs{\hat{\phi}}}^{(k)} \right\|_{\bs{\hat{\phi}}}^2 \right] & + \psi_{22}^{(k)} \mathbb{E}\left[ \left\| \bs{\Lambda}_{\bs{\hat{\pi}}}^{-1} \bs{Y}^{(k)} - m \bs{1} \bs{\bar{y}}^{(k)} \right\|_{\bs{\hat{\pi}}}^2 \right]
            \\
            & + \gamma_2^{(k)} \mathbb{E}\left[ \left\| \nabla{F}(\bs{\bar{x}}_{\bs{\hat{\phi}}}^{(k)}) \right\|^2 \right] + \omega_2^{(k)},
        \end{aligned}
    \end{equation}
    where $\psi_{21}^{(k)} = \kappa_7 \kappa_4 (1 + \psi_{11}^{(k)}) + \kappa_7 \kappa_5 \kappa_3 \kappa_1 \max(\eta_i^{(k)})^2 + m \kappa_7 \kappa_5 \hat{\rho}_{0,A}$, $\psi_{22}^{(k)} = \frac{1 + \tilde{\rho}_B}2 + \kappa_7 \kappa_4 \psi_{12}^{(k)} + m \kappa_7 \kappa_5 \kappa_3 \max(\eta_i^{(k)})^2$, $\gamma_2^{(k)} = \kappa_7 \kappa_4 \gamma_1^{(k)} + 3m \kappa_7 \overline{p (2 \sigma_1^2 B^{-1} (1 - B/D) + 5 (1 - p) \delta_1^2)} + m \kappa_7 \kappa_5 \kappa_3 \kappa_2 \max(\eta_i^{(k)})^2$ and $\omega_2^{(k)} = \kappa_7 \kappa_4 \omega_1^{(k)} + 2m \kappa_7 \overline{p ( \sigma_0^2 B^{-1} (1 - B/D) + 5 (1 - p) \delta_0^2)} + m (\overline{\sigma_0^2 p B^{-1} (1 - B/D)} + 3 \overline{(1 - p) \delta_0^2}) \kappa_7 \kappa_5 \kappa_3 \max(\eta_i^{(k)})^2$, in which $\kappa_7 = 2 (m + 1) \frac{1 + \tilde{\rho}_B}{1 - \tilde{\rho}_B}$ and $\tilde{\rho}_B = \hat{\rho}_B + \hat{\rho}_{0,B}$.

    \begin{proof}
        See Appendix~\ref{app:lemma:ydispersion}.
    \end{proof}
\end{lemma}
We observe that if clients perform GT in every training iteration, i.e., $p_i = 1$ for all $i \in \mathcal{M}$, and the gradient estimation variance is uniformly bounded, i.e., $\sigma_{1,i}^2 = 0$, then {\small $\gamma_2^{(k)} \propto \max(\eta_i^{(k)})^2$}. On the other hand, when clients perform GT in every training iteration, i.e., $p_i = 1$ for all $i \in \mathcal{M}$, and clients use a full batch, i.e., $B_i = D_i$, we will have {\small $\omega_2^{(k)} = 0$}.

Recursively expanding Eq.~\eqref{eqn:varsigma}, we obtain the following explicit inequality for the error vector $\bs{\varsigma}^{(k)}$
\begin{equation} \label{eqn:varsigma_explicit}
    \bs{\varsigma}^{(k+1)} \le \bs{\Psi}^{(k:0)} \bs{\varsigma}^{(0)} + \sum_{r=0}^k{\bs{\Psi}^{(k-1:r)} \bs{\gamma}^{(r)} \mathbb{E}\left[ \left\| \nabla{F}(\bs{\bar{x}}_{\bs{\hat{\phi}}}^{(r)}) \right\|^2 \right]} + \sum_{r=0}^k{\bs{\Psi}^{(k-1:r)} \bs{\omega}^{(r)}},
\end{equation}
in which $\bs{\Psi}^{(k:r)} = \bs{\Psi}^{(k)} \bs{\Psi}^{(k-1)} ... \bs{\Psi}^{(r)}$ for $k > r$, $\bs{\Psi}^{(k:k)} = \bs{\Psi}^{(k)}$ for $k = r$ and $\bs{\Psi}^{(k:r)} = \bs{I}$ for $k < r$. 

Finally, we provide an upper bound on the global loss value at each training iteration.
\begin{lemma}[Loss Descent] \label{lemma:loss}
    Let Assumptions~\ref{assump:lipschitz}-\ref{assump:graph} hold. Then, Lemmas~\ref{lemma:trackingeffect}-\ref{lemma:trackingeffect:modelonestep} and \ref{lemma:gradconsensus} imply that
    \begin{equation}
        \begin{aligned}
            \mathbb{E}\left[ F(\bs{\bar{x}}_{\bs{\hat{\phi}}}^{(k+1)}) \right] \le \mathbb{E}\left[ F(\bs{\bar{x}}_{\bs{\hat{\phi}}}^{(k)}) \right] - \gamma_0^{(k)} \mathbb{E}\left[ \left\| \nabla{F}(\bs{\bar{x}}_{\bs{\hat{\phi}}}^{(k)}) \right\|^2 \right] & + \psi_{01}^{(k)} \mathbb{E} \left[ \left\| \bs{X}^{(k)} - \bs{1} \bs{\bar{x}}_{\bs{\hat{\phi}}}^{(k)} \right\|_{\bs{\hat{\phi}}}^2 \right]
            \\
            & + \psi_{02}^{(k)} \mathbb{E}\left[ \left\| \bs{\Lambda}_{\bs{\hat{\pi}}}^{-1} \bs{Y}^{(k)} - m \bs{1} \bs{\bar{y}}^{(k)} \right\|_{\bs{\hat{\pi}}}^2 \right] + \omega_0^{(k)},
        \end{aligned}
    \end{equation}
    in which we have $\gamma_0^{(k)} = \frac12 m \overline{\hat{\phi} \eta^{(k)} \left( m \hat{\pi} - 2 \sigma_1^2 p B^{-1} (1 - B/D) - 3 (1 - p) \delta_1^2 - 3 \hat{\rho}_{0,B} \kappa_2 \right)}$, $\psi_{01}^{(k)} = \frac12 (1 + 3 \hat{\rho}_{0,B}) \kappa_8 \max(\eta_i^{(k)})$, $\psi_{02}^{(k)} = \frac{3 \max(\hat{\phi}_i) \tilde{\rho}_B}{2m} \max(\eta_i^{(k)})$ and $\omega_0^{(k)} = \frac12 m (1 + 3 \hat{\rho}_{0,B}) \overline{\hat{\phi} \eta^{(k)} \left( \sigma_0^2 p B^{-1} (1 - B/D) + 3 (1 - p) \delta_0^2 \right)}$, in which $\kappa_8 = \max( p_i ( 2 \frac{(1 - B_i / D_i) \sigma_{1,i}^2}{B_i} \bar{L}^2 + 3 L_i^2 ))$.

    \begin{proof}
        See Appendix~\ref{app:lemma:loss}.
    \end{proof}
\end{lemma}
We observe that when clients conduct GT in every training iteration, i.e., $p_i = 1$ for all $i \in \mathcal{M}$, and they participate in peer-to-peer communications in every training iteration, i.e., $\hat{p}_{ij} = 1$ for all $(i,j) \in \mathcal{E}$ resulting in $\hat{\rho}_{0,B} = 0$, and also the gradient estimation variance is uniformly bounded, i.e., $\sigma_{1,i}^2 = 0$, then $\gamma_0^{(k)} = \frac12 m^2 \overline{\hat{\phi} \hat{\pi} \eta^{(k)}}$. Furthermore, if the underlying graph admits doubly stochastic matrices $\bs{\hat{A}}$ and $\bs{\hat{B}}$ to get $\bs{\hat{\phi}} = \bs{\hat{\pi}} = \frac1m \bs{1}$, this simplifies to $\gamma_0^{(k)} = \frac12 \overline{\eta^{(k)}}$. On the other hand, if clients conduct GT in every training iteration, i.e., $p_i = 1$ for all $i \in \mathcal{M}$, and they use a full batch, i.e., $B_i = D_i$, we will have $\omega_0^{(k)} = 0$.

Applying Definition~\ref{def:error_vec} further to Lemma~\ref{lemma:loss}, we get
\begin{equation} \label{eqn:loss_redefined}
    \mathbb{E}\left[ F(\bs{\bar{x}}_{\bs{\hat{\phi}}}^{(k+1)}) \right] \le \mathbb{E} \left[ F(\bs{\bar{x}}_{\bs{\hat{\phi}}}^{(k)}) \right] - \gamma_0^{(k)} \mathbb{E}\left[ \left\| \nabla{F}(\bs{\bar{x}}_{\bs{\hat{\phi}}}^{(k)}) \right\|^2 \right] + (\bs{\psi}_0^{(k)})^T \bs{\varsigma}^{(k)} + \omega_0^{(k)},
\end{equation}
and recursively expanding Eq.~\eqref{eqn:loss_redefined}, we obtain the following explicit inequality for the expected loss
\begin{equation} \label{eqn:loss_explicit}
    \mathbb{E}\left[ F(\bs{\bar{x}}_{\bs{\hat{\phi}}}^{(k+1)}) \right] \le F(\bs{\bar{x}}_{\bs{\hat{\phi}}}^{(0)}) - \sum_{r=0}^k{\gamma_0^{(r)} \mathbb{E}\left[ \left\| \nabla{F}(\bs{\bar{x}}_{\bs{\hat{\phi}}}^{(r)}) \right\|^2 \right]} + \sum_{r=0}^k{(\bs{\psi}_0^{(r)})^T \bs{\varsigma}^{(r)}} + \sum_{r=0}^k{\omega_0^{(r)}}.
\end{equation}
Finally, substituting Eq.~\eqref{eqn:varsigma_explicit} into Eq.~\eqref{eqn:loss_explicit} gives us
\begin{equation} \label{eqn:loss_expanded}
    \boxed{
    \begin{aligned}
        \mathbb{E}\left[ F(\bs{\bar{x}}_{\bs{\hat{\phi}}}^{(k+1)}) \right] \le F(\bs{\bar{x}}_{\bs{\hat{\phi}}}^{(0)}) & - \gamma_0^{(k)} \mathbb{E}\left[ \left\| \nabla{F}(\bs{\bar{x}}_{\bs{\hat{\phi}}}^{(k)}) \right\|^2 \right]
        \\
        & - \sum_{r=0}^{k-1}{\left( \gamma_0^{(r)} - \left( \sum_{s=r+1}^k{(\bs{\psi}_0^{(s)})^T \bs{\Psi}^{(s-2:r)}} \right) \bs{\gamma}^{(r)} \right) \mathbb{E}\left[ \left\| \nabla{F}(\bs{\bar{x}}_{\bs{\hat{\phi}}}^{(r)}) \right\|^2 \right]}
        \\
        & + \left( \sum_{r=0}^k{(\bs{\psi}_0^{(r)})^T \bs{\Psi}^{(r-1:0)}} \right) \bs{\varsigma}^{(0)} + \sum_{r=0}^{k-1}{\left( \sum_{s=r+1}^k{(\bs{\psi}_0^{(s)})^T \bs{\Psi}^{(s-2:r)}} \right) \bs{\omega}^{(r)}} + \sum_{r=0}^k{\omega_0^{(r)}}.
    \end{aligned}
    }
\end{equation}
Eq.~\eqref{eqn:loss_expanded} is a key equation that is used in both Propositions in Sec.~\ref{ssec:props}.

\section{Corollaries to Proposition~\ref{proposition:lr}} \label{app:corollaries}

We proceed with two corollaries to Proposition~\ref{proposition:lr} when further structure is placed on the learning rates.
\begin{corollary}[Non-Increasing Learning Rates] \label{corollary:nonincreasing}
    Let Assumptions~\ref{assump:lipschitz}-\ref{assump:graph} hold. Let each client $i \in \mathcal{M}$ use a non-increasing learning rate $\eta_i^{(k+1)} \le \eta_i^{(k)}$. Then, it is sufficient for the constraint of Proposition~\ref{proposition:lr} to hold only for the initial values $\eta_i^{(0)}$ to ensure $\rho(\bs{\Psi}^{(k)}) < 1$ for all $k \geq 0$.

\end{corollary}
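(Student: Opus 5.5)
The plan is to use monotonicity. Everything depending on the iteration in the constraint of Proposition~\ref{proposition:lr} and in $\bs{\Psi}^{(k)}$ enters only through $\max(\eta_i^{(k)})$. All other quantities are fixed by the graph, the probabilities $p_i,\hat{p}_{ij}$, the batch sizes and the assumption constants, and so do not depend on $k$. These are $\kappa_3,\kappa_4,\kappa_5,\hat{\rho}_{0,A},\tilde{\rho}_A,\tilde{\rho}_B$ on the right-hand side of Eq.~\eqref{eqn:lr_constraint}, and $\kappa_1,\kappa_3,\kappa_4,\kappa_5,\kappa_6,\kappa_7$ in the entries listed in Table~\ref{tab:symbols_props}.

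\textbf{Step 1 (constraint propagates).} Since $\eta_i^{(k+1)}\le\eta_i^{(k)}$ for every $i$, induction gives $\eta_i^{(k)}\le\eta_i^{(0)}$ for all $k\ge0$. The right-hand side of Eq.~\eqref{eqn:lr_constraint} is a $k$-independent constant, call it $\bar{\eta}_{\max}$. Hence $\eta_i^{(0)}<\bar{\eta}_{\max}$ implies $\eta_i^{(k)}<\bar{\eta}_{\max}$ for every $k$. The hypothesis of Proposition~\ref{proposition:lr} therefore holds at every iteration, and the proposition gives $\rho(\bs{\Psi}^{(k)})<1$ for all $k$. This is already the claim.

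\textbf{Step 2 (uniform bound).} For a stronger, uniform statement, useful for Corollary~\ref{corollary:constant}, I would add a comparison argument. Each entry $\psi_{11}^{(k)},\psi_{12}^{(k)},\psi_{21}^{(k)},\psi_{22}^{(k)}$ is a constant plus a nonnegative multiple of $\max(\eta_i^{(k)})^2$; for $\psi_{21}^{(k)}$ and $\psi_{22}^{(k)}$ this comes through $\psi_{11}^{(k)}$ and $\psi_{12}^{(k)}$. So each entry is nondecreasing in $\max(\eta_i^{(k)})$, which gives the entrywise bound $\bs{0}\le\bs{\Psi}^{(k)}\le\bs{\Psi}^{(0)}$. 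For nonnegative matrices the spectral radius is monotone under entrywise ordering (a Perron--Frobenius consequence). Thus $\rho(\bs{\Psi}^{(k)})\le\rho(\bs{\Psi}^{(0)})<1$ uniformly in $k$, with $\rho(\bs{\Psi}^{(0)})<1$ supplied by Proposition~\ref{proposition:lr} at $k=0$.

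\textbf{Expected obstacle.} The only delicate point is confirming that no coefficient in $\bs{\Psi}^{(k)}$ or in the bound of Eq.~\eqref{eqn:lr_constraint} depends on the individual $\eta_i^{(k)}$ other than through $\max(\eta_i^{(k)})$, for example through a weighted average like $\overline{\hat{\phi}\eta^{(k)}}$. Inspecting Table~\ref{tab:symbols_props}, such averages occur only in $\gamma_0^{(k)}$ and $\omega_0^{(k)}$, which are not entries of $\bs{\Psi}^{(k)}$. The monotonicity in Step 2 and the propagation in Step 1 therefore go through directly.
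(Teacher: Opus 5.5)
Your proof is correct and takes the same approach the paper relies on. The paper states this corollary without a separate proof; the point is that the right-hand side of Eq.~\eqref{eqn:lr_constraint} is $k$-independent, so $\eta_i^{(k)}\le\eta_i^{(0)}$ carries the hypothesis of Proposition~\ref{proposition:lr} to every iteration, exactly as in your Step 1. Your Step 2 is a valid strengthening that the paper never states: entrywise monotonicity of $\bs{\Psi}^{(k)}$ in $\max(\eta_i^{(k)})$, combined with monotonicity of the spectral radius for nonnegative matrices, gives the uniform bound $\rho(\bs{\Psi}^{(k)})\le\rho(\bs{\Psi}^{(0)})<1$.
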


\begin{corollary}[Constant Learning Rates] \label{corollary:constant}
    Let Assumptions~\ref{assump:lipschitz}-\ref{assump:graph} hold. Let each client $i \in \mathcal{M}$ use a constant learning rate, i.e., $\eta_i^{(k)} = \eta_i^{(0)}$. Then, the values defined in Lemmas~\ref{lemma:xdispersion}-\ref{lemma:loss} will all become constants, i.e., $\bs{\Psi}^{(k)} = \bs{\Psi}^{(0)}$, $\bs{\gamma}^{(k)} = \bs{\gamma}^{(0)}$, $\bs{\omega}^{(k)} = \bs{\omega}^{(0)}$, $\bs{\psi}_0^{(k)} = \bs{\psi}_0^{(0)}$, $\gamma_0^{(k)} = \gamma_0^{(0)}$ and $\omega_0^{(k)} = \omega_0^{(0)}$, for all $k \geq 0$. Furthermore, if constant learning rates satisfy the conditions outlined in Corollary~\ref{corollary:nonincreasing}, then there exists a scalar $0 < \rho_\Psi < 1$ such that $\rho(\bs{\Psi}^{(k)}) = \rho_\Psi$ for all $k \geq 0$.
\end{corollary}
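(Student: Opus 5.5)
The plan is to read everything off the explicit entries of $\bs{\Psi}^{(k)}$ in Table~\ref{tab:symbols_props}. The first step is to observe that, with $\eta_i^{(k)} = \eta_i^{(0)}$ for all $k$, every quantity in Lemmas~\ref{lemma:xdispersion}--\ref{lemma:loss} depends on the iteration index only through $\max(\eta_i^{(k)})$ or through the weighted average $\overline{\hat{\phi}\eta^{(k)}(\cdot)}$. This covers $\psi_{11}^{(k)},\psi_{12}^{(k)},\psi_{21}^{(k)},\psi_{22}^{(k)}$, $\gamma_1^{(k)},\gamma_2^{(k)}$, $\omega_1^{(k)},\omega_2^{(k)}$, $\psi_{01}^{(k)},\psi_{02}^{(k)}$, $\gamma_0^{(k)}$ and $\omega_0^{(k)}$. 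The remaining ingredients are $\kappa_1,\dots,\kappa_8$, $\hat{\rho}_A$, $\hat{\rho}_B$, $\hat{\rho}_{0,A}$, $\hat{\rho}_{0,B}$, $\tilde{\rho}_A$, $\tilde{\rho}_B$, $\hat{\phi}$ and $\hat{\pi}$. These are defined purely in terms of the fixed problem data $p_i,\hat{p}_{ij},B_i,D_i,L_i,\sigma_{\cdot,i},\delta_{\cdot,i},a_{ij},b_{ij}$ and the graph, so they are $k$-independent. Substituting $\eta_i^{(k)}=\eta_i^{(0)}$ into each formula therefore gives the stated equalities $\bs{\Psi}^{(k)}=\bs{\Psi}^{(0)}$, $\bs{\gamma}^{(k)}=\bs{\gamma}^{(0)}$, and so on. The argument is a direct check, entry by entry.

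For the second claim, note that a constant sequence is in particular non-increasing. So if $\eta_i^{(0)}$ satisfies the constraint of Proposition~\ref{proposition:lr}, Corollary~\ref{corollary:nonincreasing} gives $\rho(\bs{\Psi}^{(k)})<1$ for all $k$. Since $\bs{\Psi}^{(k)}=\bs{\Psi}^{(0)}$, the spectral radius is the same number for every $k$, and we set $\rho_\Psi := \rho(\bs{\Psi}^{(0)})<1$. Equivalently, one can apply Proposition~\ref{proposition:lr} directly at $k=0$.

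It remains to show $\rho_\Psi>0$. Every entry of $\bs{\Psi}^{(0)}$ is nonnegative. The diagonal entry $\psi_{11}^{(0)}\ge \frac{1+\tilde{\rho}_A}{2}\ge \frac12$, because $\tilde{\rho}_A\ge0$ (each of $\hat{\rho}_A,\hat{\rho}_{0,A}$ is nonnegative). For a nonnegative matrix, the spectral radius is at least the largest diagonal entry; in the $2\times2$ case this follows directly from the explicit Perron root $\tfrac12\big(\psi_{11}+\psi_{22}+\sqrt{(\psi_{11}-\psi_{22})^2+4\psi_{12}\psi_{21}}\big)\ge\max(\psi_{11},\psi_{22})$. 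Hence $\rho_\Psi\ge\frac12>0$.

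The only mild obstacle is making sure no hidden $k$-dependence remains. In particular, the expectations in the lemmas are conditional on $\mathcal{F}_\sigma^{(k)}$, but the coefficients themselves are deterministic constants, so this conditioning does not affect them. I would state this explicitly and then conclude.
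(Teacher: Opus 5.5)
Your proposal is correct and follows the approach the paper intends. The paper states this corollary without a separate proof, treating it as immediate from two facts: every coefficient in Lemmas~\ref{lemma:xdispersion}--\ref{lemma:loss} depends on $k$ only through $\eta_i^{(k)}$, and Corollary~\ref{corollary:nonincreasing} applies because a constant sequence is non-increasing. Your extra bound $\rho_\Psi \ge \psi_{11}^{(0)} \ge \frac{1+\tilde{\rho}_A}{2} \ge \frac12$, obtained from the explicit Perron root already used in the proof of Proposition~\ref{proposition:lr}, supplies the strict positivity $\rho_\Psi > 0$ that the paper asserts without justification.
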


\section{Proofs of Intermediary Lemmas}

\subsection{Proof of Lemma~\ref{lemma:graderror}} \label{app:lemma:graderror}

\ref{lemma:graderror:sgd} According to Jensen's inequality, we have
\begin{equation}
    \left\| \bar{\bs{g}}_v^{(k)} - \overline{\nabla{\bs{F}}(\bs{X}^{(k)})}_{\bs{v}^{(k)}} \right\|^2 = \left\| \frac1m \sum_{i=1}^m{(\bs{g}_i^{(k)} - \nabla{F}_i(\bs{x}_i^{(k)})) v_i^{(k)}} \right\|^2 \le \frac1m \sum_{i=1}^m{\left\| \bs{g}_i^{(k)} - \nabla{F}_i(\bs{x}_i^{(k)}) \right\|^2 v_i^{(k)}}.
\end{equation}
Taking the expected value of this expression and using Assumptions~\ref{assump:sgd} and \ref{assump:bernoulli}, we obtain the following.
\begin{equation} \label{eqn:graderror_crossref}
    \begin{aligned}
        \mathbb{E}\left[ \frac1m \sum_{i=1}^m{\left\| \bs{g}_i^{(k)} - \nabla{F}_i(\bs{x}_i^{(k)}) \right\|^2 v_i^{(k)}} \right] & = \frac1m \sum_{i=1}^m{\mathbb{E}\left[ \left\| \bs{g}_i^{(k)} - \nabla{F}_i(\bs{x}_i^{(k)}) \right\|^2 \right] \mathbb{E}[v_i^{(k)}]}
        \\
        & \le \frac1m \sum_{i=1}^m{\frac{p_i (1 - B_i / D_i)}{B_i} \left( \sigma_{0,i}^2 + \sigma_{1,i}^2 \mathbb{E}\left[ \left\| \nabla{F}(\bs{x}_i^{(k)}) \right\|^2 \right] \right)}
        \\
        & = \overline{\sigma_0^2 p B^{-1} (1 - B/D)} + \frac1m \sum_{i=1}^m{\frac{\sigma_{1,i}^2 p_i (1 - B_i / D_i)}{B_i} \mathbb{E}\left[ \left\| \nabla{F}(\bs{x}_i^{(k)}) \right\|^2 \right]},
    \end{aligned}
\end{equation}
where we used Assumption~\ref{assump:sgd}. We finally use the Lipschitz continuity of the gradients of the global loss function $F$ which is implied by Assumption~\ref{assump:lipschitz} to write
\begin{equation} \label{eqn:graderror_crossref:norm}
    \begin{aligned}
        \frac1m & \sum_{i=1}^m{\frac{\sigma_{1,i}^2 p_i (1 - B_i / D_i)}{B_i} \left\| \nabla{F}(\bs{x}_i^{(k)}) \right\|^2} = \frac1m \sum_{i=1}^m{\frac{\sigma_{1,i}^2 p_i (1 - B_i / D_i)}{B_i} \left\| \nabla{F}(\bs{x}_i^{(k)}) \mp \nabla{F}(\bs{\bar{x}}_{\bs{\hat{\phi}}}^{(k)}) \right\|^2}
        \\
        & \le \frac2m \sum_{i=1}^m \frac{\sigma_{1,i}^2 p_i (1 - B_i / D_i)}{B_i} \left( \left\| \nabla{F}(\bs{x}_i^{(k)}) - \nabla{F}(\bs{\bar{x}}_{\bs{\hat{\phi}}}^{(k)}) \right\|^2 + \left\| \nabla{F}(\bs{\bar{x}}_{\bs{\hat{\phi}}}^{(k)}) \right\|^2 \right)
        \\
        & \le \frac2m \sum_{i=1}^m \frac{\sigma_{1,i}^2 p_i (1 - B_i / D_i)}{B_i} \bar{L}^2 \left\| \bs{x}_i^{(k)} - \bs{\bar{x}}_{\bs{\hat{\phi}}}^{(k)} \right\|^2 + 2 \overline{\sigma_1^2 p B^{-1} (1 - B/D)} \left\| \nabla{F}(\bs{\bar{x}}_{\bs{\hat{\phi}}}^{(k)}) \right\|^2
        \\
        & = \frac{2\bar{L}^2}m \sum_{i=1}^m \frac{\sigma_{1,i}^2 p_i (1 - B_i / D_i)}{B_i} \frac{\hat{\phi}_i}{\hat{\phi}_i} \left\| \bs{x}_i^{(k)} - \bs{\bar{x}}_{\bs{\hat{\phi}}}^{(k)} \right\|^2 + 2 \overline{\sigma_1^2 p B^{-1} (1 - B/D)} \left\| \nabla{F}(\bs{\bar{x}}_{\bs{\hat{\phi}}}^{(k)}) \right\|^2
        \\
        & \le \frac{2\bar{L}^2}m \max\left( \frac{\sigma_{1,i}^2 p_i (1 - B_i / D_i)}{B_i \hat{\phi}_i} \right) \left\| \bs{X}^{(k)} - \bs{1} \bs{\bar{x}}_{\bs{\hat{\phi}}}^{(k)} \right\|_{\bs{\hat{\phi}}}^2 + 2 \overline{\sigma_1^2 p B^{-1} (1 - B/D)} \left\| \nabla{F}(\bs{\bar{x}}_{\bs{\hat{\phi}}}^{(k)}) \right\|^2.
    \end{aligned}
\end{equation}

\ref{lemma:graderror:lipschitz} According to Jensen's inequality, we have
\begin{equation}
    \begin{aligned}
        \left\| \overline{\nabla{\bs{F}}(\bs{X}^{(k)})}_{\bs{v}^{(k)}} - \overline{\nabla{\bs{F}}(\bs{1} \bs{\bar{x}}_{\bs{\hat{\phi}}}^{(k)})}_{\bs{v}^{(k)}} \right\|^2 & = \left\| \frac1m \sum_{i=1}^m{\left( \nabla{F}_i(\bs{x}_i^{(k)}) - \nabla{F}_i(\bs{\bar{x}}_{\bs{\hat{\phi}}}^{(k)}) \right) v_i^{(k)}} \right\|^2
        \\
        & \le \frac1m \sum_{i=1}^m{\left\| \nabla{F}_i(\bs{x}_i^{(k)}) - \nabla{F}_i(\bs{\bar{x}}_{\bs{\hat{\phi}}}^{(k)}) \right\|^2 v_i^{(k)}} \le \frac1m \sum_{i=1}^m{L_i^2 \left\| \bs{x}_i^{(k)} - \bs{\bar{x}}_{\bs{\hat{\phi}}}^{(k)} \right\|^2 v_i^{(k)}},
    \end{aligned}
\end{equation}
where we used Assumption~\ref{assump:lipschitz} in the end. Taking the expected value of this expression and using Assumption~\ref{assump:bernoulli}, we get
\begin{equation}
    \begin{aligned}
        \mathbb{E} & \left[ \frac1m \sum_{i=1}^m{L_i^2 \left\| \bs{x}_i^{(k)} - \bs{\bar{x}}_{\bs{\hat{\phi}}}^{(k)} \right\|^2 v_i^{(k)}} \right] = \frac1m \sum_{i=1}^m{L_i^2 \mathbb{E}\left[ \left\| \bs{x}_i^{(k)} - \bs{\bar{x}}_{\bs{\hat{\phi}}}^{(k)} \right\|^2 \right] \mathbb{E}[ v_i^{(k)} ]} = \frac1m \sum_{i=1}^m{L_i^2 p_i \mathbb{E}\left[ \left\| \bs{x}_i^{(k)} - \bs{\bar{x}}_{\bs{\hat{\phi}}}^{(k)} \right\|^2 \right]}
        \\
        & = \frac1m \sum_{i=1}^m{L_i^2 p_i \frac{\hat{\phi}_i}{\hat{\phi}_i} \mathbb{E}\left[ \left\| \bs{x}_i^{(k)} - \bs{\bar{x}}_{\bs{\hat{\phi}}}^{(k)} \right\|^2 \right]} \le \frac{\max(L_i^2 p_i \hat{\phi}_i^{-1})}m \sum_{i=1}^m{\hat{\phi}_i \mathbb{E}\left[ \left\| \bs{x}_i^{(k)} - \bs{\bar{x}}_{\bs{\hat{\phi}}}^{(k)} \right\|^2 \right]}
        \\
        & = \frac{\max(L_i^2 p_i \hat{\phi}_i^{-1})}m \mathbb{E}\left[ \left\| \bs{X}^{(k)} - \bs{1} \bs{\bar{x}}_{\bs{\hat{\phi}}}^{(k)} \right\|_{\bs{\hat{\phi}}}^2 \right].
    \end{aligned}
\end{equation}

\ref{lemma:graderror:bernoulli} According to Jensen's inequality, we have
\begin{equation}
    \begin{aligned}
        \left\| \overline{\nabla{\bs{F}}(\bs{1} \bs{\bar{x}}_{\bs{\hat{\phi}}}^{(k)})}_{\bs{v}^{(k)}} - \nabla{F}(\bar{\bs{x}}_{\bs{\hat{\phi}}}^{(k)}) \right\|^2 & = \left\| \frac1m \sum_{i=1}^m{\left( \nabla{F}_i(\bar{\bs{x}}_{\bs{\hat{\phi}}}^{(k)}) v_i^{(k)} - \nabla{F}_i(\bar{\bs{x}}_{\bs{\hat{\phi}}}^{(k)}) \right)} \right\|^2 = \left\| \frac1m \sum_{i=1}^m{\nabla{F}_i(\bar{\bs{x}}_{\bs{\hat{\phi}}}^{(k)}) (v_i^{(k)} - 1)} \right\|^2
        \\
        & \le \frac1m \sum_{i=1}^m{\left\| \nabla{F}_i(\bar{\bs{x}}_{\bs{\hat{\phi}}}^{(k)}) \right\|^2 (1 - v_i^{(k)})}
        \\
        & \le \frac1m \sum_{i=1}^m{\left( \delta_{0,i}^2 + \delta_{1,i}^2 \left\| \nabla{F}(\bar{\bs{x}}_{\bs{\hat{\phi}}}^{(k)}) \right\|^2 \right) (1 - v_i^{(k)})}.
    \end{aligned}
\end{equation}
Taking the expected value of this expression and using Assumptions~\ref{assump:graddiv} and \ref{assump:bernoulli}, we obtain the following.
\begin{equation}
    \begin{aligned}
        \mathbb{E} & \left[ \frac1m \sum_{i=1}^m{\left( \delta_{0,i}^2 + \delta_{1,i}^2 \left\| \nabla{F}(\bar{\bs{x}}_{\bs{\hat{\phi}}}^{(k)}) \right\|^2 \right) (1 - v_i^{(k)})} \right] = \frac1m \sum_{i=1}^m{\left( \delta_{0,i}^2 + \delta_{1,i}^2 \mathbb{E}\left[ \left\| \nabla{F}(\bar{\bs{x}}_{\bs{\hat{\phi}}}^{(k)}) \right\|^2 \right] \right) (1 - \mathbb{E}[ v_i^{(k)} ])}
        \\
        & = \frac1m \sum_{i=1}^m{(1 - p_i) \left( \delta_{0,i}^2 + \delta_{1,i}^2 \mathbb{E}\left[ \left\| \nabla{F}(\bar{\bs{x}}_{\bs{\hat{\phi}}}^{(k)}) \right\|^2 \right] \right)} = \overline{(1 - p) \delta_0^2} + \overline{(1 - p) \delta_1^2} \left\| \nabla{F}(\bar{\bs{x}}_{\bs{\hat{\phi}}}^{(k)}) \right\|^2.
    \end{aligned}
\end{equation}

\subsection{Proof of Lemma~\ref{lemma:descenteffect}} \label{app:lemma:descenteffect}

\ref{lemma:descenteffect:ybar} We derive this upper bound using steps $(b)$-$(e)$ of Eq.~\eqref{eqn:grad_steps}. We have
\begin{equation} \label{eqn:ynorm}
    \begin{aligned}
        & \left\| \bs{\bar{y}}^{(k)} \right\|^2 = \left\| \bar{\bs{g}}_{\bs{v}}^{(k)} \right\|^2 = \left\| \bar{\bs{g}}_{\bs{v}}^{(k)} \mp \overline{\nabla{\bs{F}}(\bs{X}^{(k)})}_{\bs{v}^{(k)}} \mp \overline{\nabla{\bs{F}}(\bs{1} \bs{\bar{x}}_{\bs{\hat{\phi}}}^{(k)})}_{\bs{v}^{(k)}} \mp \nabla{F}(\bar{\bs{x}}_{\bs{\hat{\phi}}}^{(k)}) \right\|^2
        \\
        & \begin{aligned}
            \,\, = \left\| \bar{\bs{g}}_{\bs{v}}^{(k)} - \overline{\nabla{\bs{F}}(\bs{X}^{(k)})}_{\bs{v}^{(k)}} \right\|^2 & + \left\| \overline{\nabla{\bs{F}}(\bs{X}^{(k)})}_{\bs{v}^{(k)}} \mp \overline{\nabla{\bs{F}}(\bs{1} \bs{\bar{x}}_{\bs{\hat{\phi}}}^{(k)})}_{\bs{v}^{(k)}} \mp \nabla{F}(\bar{\bs{x}}_{\bs{\hat{\phi}}}^{(k)}) \right\|^2
            \\
            & + 2 \bigg\langle \bar{\bs{g}}_{\bs{v}}^{(k)} - \overline{\nabla{\bs{F}}(\bs{X}^{(k)})}_{\bs{v}^{(k)}}, \overline{\nabla{\bs{F}}(\bs{X}^{(k)})}_{\bs{v}^{(k)}}  \mp \overline{\nabla{\bs{F}}(\bs{1} \bs{\bar{x}}_{\bs{\hat{\phi}}}^{(k)})}_{\bs{v}^{(k)}} \mp \nabla{F}(\bar{\bs{x}}_{\bs{\hat{\phi}}}^{(k)}) \bigg\rangle,
        \end{aligned}
    \end{aligned}
\end{equation}
where we separated the inner product term because it will be equal to $0$ when taking its expected value, since $\bs{g}_i^{(k)}$ is an unbiased estimate of $\nabla{F}_i(\bs{x}_i^{(k)})$ according to the Assumption~\ref{assump:sgd}. We further expand the bound as follows.
\begin{equation} \label{eqn:ynorm_rest}
    \begin{aligned}
        & \left\| \overline{\nabla{\bs{F}}(\bs{X}^{(k)})}_{\bs{v}^{(k)}}  \mp \overline{\nabla{\bs{F}}(\bs{1} \bs{\bar{x}}_{\bs{\hat{\phi}}}^{(k)})}_{\bs{v}^{(k)}} \mp \nabla{F}(\bar{\bs{x}}_{\bs{\hat{\phi}}}^{(k)}) \right\|^2
        \\
        & \le 3 \left( \left\| \overline{\nabla{\bs{F}}(\bs{X}^{(k)})}_{\bs{v}^{(k)}} - \overline{\nabla{\bs{F}}(\bs{1} \bs{\bar{x}}_{\bs{\hat{\phi}}}^{(k)})}_{\bs{v}^{(k)}} \right\|^2 + \left\| \overline{\nabla{\bs{F}}(\bs{1} \bs{\bar{x}}_{\bs{\hat{\phi}}}^{(k)})}_{\bs{v}^{(k)}} - \nabla{F}(\bar{\bs{x}}_{\bs{\hat{\phi}}}^{(k)}) \right\|^2 + \left\| \nabla{F}(\bar{\bs{x}}_{\bs{\hat{\phi}}}^{(k)}) \right\|^2 \right),
    \end{aligned}
\end{equation}
in which we used Young's inequality. We add Eq.~\eqref{eqn:ynorm_rest} back to Eq.~\eqref{eqn:ynorm}, which results in an upper bound involving the three expressions given in Lemma~\ref{lemma:graderror}. Taking the expected value of the result concludes the proof. Note that while the upper bounds in Lemmas~\ref{lemma:graderror}-\ref{lemma:graderror:sgd} and \ref{lemma:graderror}-\ref{lemma:graderror:lipschitz} each involve a $\max(\cdot)$ term, their proofs in Appendix~\ref{app:lemma:graderror} show that we can first combine their coefficients and then take their maximum.

\ref{lemma:descenteffect:ypiinv} See Lemma~4.6 in \citet{nedic2025ab}.

\ref{lemma:descenteffect:yconsphi} According to Corollary 1 of Lemma~1(b) in \citet{nguyen2025distributed}, we have $\sum_{i=1}^m{\gamma_i \| u_i - \sum_{\ell=1}^m{\gamma_\ell u_\ell} \|^2} \le \sum_{i=1}^m{\gamma_i \| u_i \|^2}$, which holds for any collection of vectors $\{ u_i \}_{i=1}^m$ and scalars $\{ \gamma_i \}_{i=1}^m$ such that $\sum_{i=1}^m{\gamma_i} = 1$. Thus, we have
\begin{equation} \label{eqn:yconsphi_crossref}
    \begin{aligned}
        & \left\| \bs{\Lambda}_{\bs{\eta}}^{(k)} \bs{\hat{B}}^{(k)} \bs{Y}^{(k)} - \bs{1} \bs{\hat{\phi}}^T \bs{\Lambda}_{\bs{\eta}}^{(k)} \bs{\hat{B}}^{(k)} \bs{Y}^{(k)} \right\|_{\bs{\hat{\phi}}}^2 \le \left\| \bs{\Lambda}_{\bs{\eta}}^{(k)} \bs{\hat{B}}^{(k)} \bs{Y}^{(k)} \right\|_{\bs{\hat{\phi}}}^2 = \sum_{i=1}^m{\hat{\phi}_i (\eta_i^{(k)})^2 \left\| \sum_{j=1}^m{\hat{b}_{ij}^{(k)} \bs{y}_j^{(k)}} \right\|^2}
        \\
        & = \sum_{i=1}^m{\hat{\phi}_i (\eta_i^{(k)})^2 \left\| \left( \sum_{\ell=1}^m{\hat{b}_{i\ell}^{(k)}} \right) \frac{\sum_{j=1}^m{\hat{b}_{ij}^{(k)} \bs{y}_j^{(k)}}}{\sum_{\ell=1}^m{b_{i\ell}^{(k)}}} \right\|^2} \le \sum_{i=1}^m{\hat{\phi}_i (\eta_i^{(k)})^2 \left( \sum_{\ell=1}^m{\hat{b}_{i\ell}^{(k)}} \right)^2 \frac{\sum_{j=1}^m{\hat{b}_{ij}^{(k)} \left\| \bs{y}_j^{(k)} \right\|^2}}{\sum_{\ell=1}^m{\hat{b}_{i\ell}^{(k)}}}}
        \\
        & = \sum_{i=1}^m{\hat{\phi}_i (\eta_i^{(k)})^2 \left( \sum_{\ell=1}^m{\hat{b}_{i\ell}^{(k)}} \right) \sum_{j=1}^m{\hat{b}_{ij}^{(k)} \left\| \bs{y}_j^{(k)} \right\|^2}}
        \\
        & \le \sum_{i=1}^m{\hat{\phi}_i (\eta_i^{(k)})^2 \Bigg( \bigg( \sum_{\substack{\ell=1 \\ \ell \neq i}}^m{\hat{b}_{ii}^{(k)} \hat{b}_{i\ell}^{(k)}} + (\hat{b}_{ii}^{(k)})^2 \bigg) \left\| \bs{y}_i^{(k)} \right\|^2 + \sum_{\substack{j=1 \\ j \neq i}}^m{\bigg( \sum_{\substack{\ell=1 \\ \ell \neq i \\ \ell \neq j}}^m{\hat{b}_{ij}^{(k)} \hat{b}_{i\ell}^{(k)}} + \hat{b}_{ij}^{(k)} \hat{b}_{ii}^{(k)} + (\hat{b}_{ij}^{(k)})^2 \bigg)} \left\| \bs{y}_j^{(k)} \right\|^2 \Bigg)},
    \end{aligned}
\end{equation}
where we used Jensen's inequality noting that $\bs{\hat{B}}^{(k)}$ is a column-stochastic matrix.
Taking the expected value of this expression and employing Assumption~\ref{assump:bernoulli}, we get
\begin{equation}
    \begin{aligned}
        & \mathbb{E}\left[ \left\| \bs{\Lambda}_{\bs{\eta}}^{(k)} \bs{\hat{B}}^{(k)} \bs{Y}^{(k)} \right\|_{\bs{\hat{\phi}}}^2 \right]
        \\
        & \begin{aligned}
            \le \sum_{i=1}^m & \hat{\phi}_i (\eta_i^{(k)})^2 \Bigg[ \Bigg( \sum_{\substack{\ell=1 \\ \ell \neq i}}^m{\bigg( 1 - \sum_{\substack{\ell=1 \\ \ell \neq i}}{b_{\ell i} \hat{p}_{\ell i}} \bigg) b_{i\ell} \hat{p}_{i\ell}} + \bigg( 1 - 2 \sum_{\substack{\ell=1 \\ \ell \neq i}}^m{b_{\ell i} \hat{p}_{\ell i}} + \sum_{\substack{\ell=1 \\ \ell \neq i}}^m{\sum_{\substack{\ell'=1 \\ \ell' \neq \ell}}^m{b_{\ell i} b_{\ell' i} \hat{p}_{\ell i} \hat{p}_{\ell' i}}} + \sum_{\substack{\ell=1 \\ \ell \neq i}}^m{b_{\ell i}^2 \hat{p}_{\ell i}} \bigg) \Bigg)
            \\
            & \mathbb{E}\left[ \left\| \bs{y}_i^{(k)} \right\|^2 \right] + \sum_{\substack{j=1 \\ j \neq i}}^m{\Bigg( \sum_{\substack{\ell=1 \\ \ell \neq i \\ \ell \neq j}}^m{b_{ij} b_{i\ell} \hat{p}_{ij} \hat{p}_{i\ell}} + b_{ij} \hat{p}_{ij} \bigg( 1 - \sum_{\substack{\ell=1 \\ \ell \neq i}}^m{b_{\ell i} \hat{p}_{\ell i}} \bigg) + b_{ij}^2 \hat{p}_{ij} \Bigg)} \mathbb{E}\left[ \left\| \bs{y}_j^{(k)} \right\|^2 \right] \Bigg]
        \end{aligned}
    \end{aligned}
\end{equation}
\begin{equation*}
    \begin{aligned}
        & \begin{aligned}
            = \sum_{i=1}^m & \hat{\phi}_i (\eta_i^{(k)})^2 \left( \sum_{\ell=1}^m{\hat{b}_{i\ell}} \right) \sum_{j=1}^m{\hat{b}_{ij} \mathbb{E}\left[ \left\| \bs{y}_j^{(k)} \right\|^2 \right]}
            \\
            & + \sum_{i=1}^m{\hat{\phi}_i (\eta_i^{(k)})^2 \Bigg[ \bigg( \sum_{\substack{\ell=1 \\ \ell \neq i}}^m{b_{\ell i}^2 \hat{p}_{\ell i} (1 - \hat{p}_{\ell i})} \bigg) \mathbb{E}\left[ \left\| \bs{y}_i^{(k)} \right\|^2 \right] + \sum_{\substack{j=1 \\ j \neq i}}^m{b_{ij}^2 \hat{p}_{ij} (1 - \hat{p}_{ij}) \mathbb{E}\left[ \left\| \bs{y}_j^{(k)} \right\|^2 \right]} \Bigg]}
        \end{aligned}
        \\
        & \le \sum_{i=1}^m{\hat{\phi}_i (\eta_i^{(k)})^2 \left( \sum_{\ell=1}^m{\frac{\hat{\pi}_\ell}{\hat{\pi}_\ell} \hat{b}_{i\ell}} \right) \sum_{j=1}^m{\hat{b}_{ij} \mathbb{E}\left[ \left\| \bs{y}_j^{(k)} \right\|^2 \right]}} + 2 \max(\hat{\phi}_i (\eta_i^{(k)})^2) \sum_{j=1}^m{\bigg( \sum_{\substack{\ell=1 \\ \ell \neq j}}^m{b_{\ell j}^2 \hat{p}_{\ell j} (1 - \hat{p}_{\ell j})} \bigg) \mathbb{E}\left[ \left\| \bs{y}_j^{(k)} \right\|^2 \right]}
        \\
        & \le \frac1{\min(\hat{\pi}_i)} \sum_{i=1}^m{\hat{\phi}_i (\eta_i^{(k)})^2 \hat{\pi}_i \sum_{j=1}^m{\hat{b}_{ij} \mathbb{E}\left[ \left\| \bs{y}_j^{(k)} \right\|^2 \right]}} + 2m \max(\hat{\phi}_i (\eta_i^{(k)})^2) \sum_{j=1}^m{\frac{\hat{\pi}_j}{\hat{\pi}_j} \overline{b_{:j}^2 \hat{p}_{:j} (1 - \hat{p}_{:j})} \mathbb{E}\left[ \left\| \bs{y}_j^{(k)} \right\|^2 \right]}
        \\
        & \le \frac{\max(\hat{\phi}_i (\eta_i^{(k)})^2 \hat{\pi}_i)}{\min(\hat{\pi}_i)} \sum_{j=1}^m{\mathbb{E}\left[ \left\| \bs{y}_j^{(k)} \right\|^2 \right] \sum_{i=1}^m{\hat{b}_{ij}}} + 2m \max(\hat{\phi}_i (\eta_i^{(k)})^2) \max(\hat{\pi}_j \overline{b_{:j}^2 \hat{p}_{:j} (1 - \hat{p}_{:j})}) \sum_{j=1}^m{\frac1{\hat{\pi}_j} \mathbb{E}\left[ \left\| \bs{y}_j^{(k)} \right\|^2 \right]}
        \\
        & \le \frac{\max(\hat{\phi}_i (\eta_i^{(k)})^2 \hat{\pi}_i)}{\min(\hat{\pi}_i)} \sum_{j=1}^m{\mathbb{E}\left[ \left\| \bs{y}_j^{(k)} \right\|^2 \right]} + 2m \max(\hat{\phi}_i (\eta_i^{(k)})^2) \max(\hat{\pi}_j \overline{b_{:j}^2 \hat{p}_{:j} (1 - \hat{p}_{:j})}) \mathbb{E}\left[ \left\| \bs{Y}^{(k)} \right\|_{\bs{\hat{\pi}}^{-1}}^2 \right]
        \\
        & \le \max(\hat{\phi}_i (\eta_i^{(k)})^2) \left( \frac{\max(\hat{\pi}_i)^2}{\min(\hat{\pi}_i)} + 2m \max(\hat{\pi}_j \overline{b_{:j}^2 \hat{p}_{:j} (1 - \hat{p}_{:j})}) \right) \mathbb{E}\left[ \left\| \bs{Y}^{(k)} \right\|_{\bs{\hat{\pi}}^{-1}}^2 \right],
    \end{aligned}
\end{equation*}
where we used the fact that $\bs{\hat{B}}^{(k)} \bs{\hat{\pi}} = \bs{\hat{\pi}}$ and $\bs{1}^T \bs{\hat{B}}^{(k)} = \bs{1}^T$ to simplify the first term.

\subsection{Proof of Lemma~\ref{lemma:sporadic_contraction}} \label{app:lemma:sporadic_contraction}

\ref{lemma:sporadic_contraction:A} We start by expanding the norm as
\begin{equation} \label{eqn:spod_contract_A_halfway}
	\begin{aligned}
		\left\| \bs{\hat{A}}^{(k)} \bs{X}^{(k)} - \bs{1} \bs{\hat{\phi}}^T \bs{X}^{(k)} \right\|_{\bs{\hat{\phi}}}^2 & = \sum_{i=1}^m{\hat{\phi}_i \left\| (\bs{\hat{a}}_i^{(k)})^T \bs{X}^{(k)} - \bs{\hat{\phi}}^T \bs{X}^{(k)} \right\|^2} = \sum_{i=1}^m{\left\| \sqrt{\hat{\phi}_i} \left( (\bs{\hat{a}}_i^{(k)})^T \bs{X}^{(k)} - \bs{\hat{\phi}}^T \bs{X}^{(k)} \right) \right\|^2}
		\\
		& = \left\| \bs{\Lambda}_{\bs{\hat{\phi}}}^{1/2} \left( \bs{\hat{A}}^{(k)} \bs{X}^{(k)} - \bs{1} \bs{\hat{\phi}}^T \bs{X}^{(k)} \right) \right\|^2 = \left\| \bs{\Lambda}_{\bs{\hat{\phi}}}^{1/2} \left( \bs{\hat{A}}^{(k)} \bs{X}^{(k)} - \bs{\hat{A}}^{(k)} \bs{1} \bs{\hat{\phi}}^T \bs{X}^{(k)} \right) \right\|^2
		\\
		& = \left\| \bs{\Lambda}_{\bs{\hat{\phi}}}^{1/2} \bs{\hat{A}}^{(k)} \left( \bs{I} - \bs{1} \bs{\hat{\phi}}^T \right) \bs{X}^{(k)} \right\|^2 = \sum_{d=1}^n{\left\| \bs{\Lambda}_{\bs{\hat{\phi}}}^{1/2} \bs{\hat{A}}^{(k)} \left( \bs{I} - \bs{1} \bs{\hat{\phi}}^T \right) [\bs{X}^{(k)}]_{[:,d]} \right\|^2}
		\\
		& = \sum_{d=1}^n{[\bs{X}^{(k)}]_{[:,d]}^T \left( \bs{I} - \bs{\hat{\phi}} \bs{1}^T \right) (\bs{\hat{A}}^{(k)})^T \bs{\Lambda}_{\bs{\hat{\phi}}} \bs{\hat{A}}^{(k)} \left( \bs{I} - \bs{1} \bs{\hat{\phi}}^T \right) [\bs{X}^{(k)}]_{[:,d]}}.
	\end{aligned}
\end{equation}
Noting that when taking the expected value of the above expression we can push the expectation operation inside to $\mathbb{E}[(\bs{\hat{A}}^{(k)})^T \bs{\Lambda}_{\bs{\hat{\phi}}} \bs{\hat{A}}^{(k)}]$, we first determine the values of this product. We have
\begin{equation}
	\begin{aligned}
		& (\bs{\hat{A}}^{(k)})^T \bs{\Lambda}_{\bs{\hat{\phi}}} \bs{\hat{A}}^{(k)} = \left[ \sum_{\ell=1}^m{\hat{\phi}_\ell \hat{a}_{\ell i}^{(k)} \hat{a}_{\ell j}^{(k)}} \right]_{1 \le i,j \le m}
		\\
		& = \begin{cases}
			\sum_{\substack{\ell=1 \\ \ell \neq i}}^m{\hat{\phi}_\ell a_{\ell i}^2 \hat{v}_{\ell i}^{(k)}} + \hat{\phi}_i \left( 1 - \sum_{\substack{\ell=1 \\ \ell \neq i}}^m{a_{i\ell} \hat{v}_{i\ell}^{(k)}} \right)^2 & i = j,
			\\
			\sum_{\substack{\ell=1 \\ \ell \neq i \\ \ell \neq j}}^m{\hat{\phi}_\ell a_{\ell i} a_{\ell j} \hat{v}_{\ell i}^{(k)} \hat{v}_{\ell j}^{(k)}} + \hat{\phi}_i \left( 1 - \sum_{\substack{\ell=1 \\ \ell \neq i}}^m{a_{i\ell} \hat{v}_{i\ell}^{(k)}} \right) a_{ij} \hat{v}_{ij}^{(k)} + \hat{\phi}_j a_{ji} \hat{v}_{ji}^{(k)} \left( 1 - \sum_{\substack{\ell=1 \\ \ell \neq j}}^m{a_{j\ell} \hat{v}_{j\ell}^{(k)}} \right) & i \neq j
		\end{cases}
	\end{aligned}
\end{equation}
\begin{equation*}
    \begin{aligned}
        & = \begin{cases}
            \sum_{\substack{\ell=1 \\ \ell \neq i}}^m{\hat{\phi}_\ell a_{\ell i}^2 \hat{v}_{\ell i}^{(k)}} + \hat{\phi}_i \Bigg( 1 - 2 \sum_{\substack{\ell=1 \\ \ell \neq i}}^m{a_{i\ell} \hat{v}_{i\ell}^{(k)}} + \sum_{\substack{\ell=1 \\ \ell \neq i}}^m{\sum_{\substack{\ell'=1 \\ \ell' \neq i \\ \ell' \neq \ell}}^m{a_{i\ell} a_{i\ell'} \hat{v}_{i\ell}^{(k)} \hat{v}_{i \ell'}^{(k)}}} + \sum_{\substack{\ell=1 \\ \ell \neq i}}^m{a_{i\ell}^2 \hat{v}_{i\ell}^{(k)}} \Bigg) & i = j,
            \\
            \begin{aligned}
                \sum_{\substack{\ell=1 \\ \ell \neq i \\ \ell \neq j}}^m{\hat{\phi}_\ell a_{\ell i} a_{\ell j} \hat{v}_{\ell i}^{(k)} \hat{v}_{\ell j}^{(k)}} + \hat{\phi}_i \Bigg( 1 - \sum_{\substack{\ell=1 \\ \ell \neq i \\ \ell \neq j}}^m{a_{i\ell} \hat{v}_{i\ell}^{(k)}} \Bigg) a_{ij} \hat{v}_{ij}^{(k)} - \hat{\phi}_i a_{ij}^2 \hat{v}_{ij}^{(k)} & + \hat{\phi}_j a_{ji} \hat{v}_{ji}^{(k)} \Bigg( 1 - \sum_{\substack{\ell=1 \\ \ell \neq j \\ \ell \neq i}}^m{a_{j\ell} \hat{v}_{j\ell}^{(k)}} \Bigg)
                \\
                & - \hat{\phi}_j a_{ji}^2 \hat{v}_{ji}^{(k)}
            \end{aligned} & i \neq j,
        \end{cases}
    \end{aligned}
\end{equation*}
where we considered the terms that are statistically dependent on each other together. Now, taking the expected value of this product and using Assumption~\ref{assump:bernoulli}, we get
\begin{equation} \label{eqn:spod_contract_A0}
	\begin{aligned}
		& \mathbb{E}\left[ (\bs{\hat{A}}^{(k)})^T \bs{\Lambda}_{\bs{\hat{\phi}}} \bs{\hat{A}}^{(k)} \right]
		\\
		& = \begin{cases}
			\sum_{\substack{\ell=1 \\ \ell \neq i}}^m{\hat{\phi}_\ell a_{\ell i}^2 \hat{p}_{\ell i}} + \hat{\phi}_i \Bigg( 1 - 2 \sum_{\substack{\ell=1 \\ \ell \neq i}}^m{a_{i\ell} \hat{p}_{i\ell}} + \sum_{\substack{\ell=1 \\ \ell \neq i}}^m{\sum_{\substack{\ell'=1 \\ \ell' \neq i \\ \ell' \neq \ell}}^m{a_{i\ell} a_{i\ell'} \hat{p}_{i\ell} \hat{p}_{i \ell'}}} + \sum_{\substack{\ell=1 \\ \ell \neq i}}^m{a_{i\ell}^2 \hat{p}_{i\ell}} \Bigg) & i = j,
			\\
			\begin{aligned}
			    \sum_{\substack{\ell=1 \\ \ell \neq i \\ \ell \neq j}}^m{\hat{\phi}_\ell a_{\ell i} a_{\ell j} \hat{p}_{\ell i} \hat{p}_{\ell j}} + \hat{\phi}_i \Bigg( 1 - \sum_{\substack{\ell=1 \\ \ell \neq i \\ \ell \neq j}}^m{a_{i\ell} \hat{p}_{i\ell}} \Bigg) a_{ij} \hat{p}_{ij} - \hat{\phi}_i a_{ij}^2 \hat{p}_{ij} & + \hat{\phi}_j a_{ji} \hat{p}_{ji} \Bigg( 1 - \sum_{\substack{\ell=1 \\ \ell \neq j \\ \ell \neq i}}^m{a_{j\ell} \hat{p}_{j\ell}} \Bigg)
                \\
                & - \hat{\phi}_j a_{ji}^2 \hat{p}_{ji}
			\end{aligned} & i \neq j
		\end{cases}
		\\
		& = \bs{\hat{A}}^T \bs{\Lambda}_{\bs{\hat{\phi}}} \bs{\hat{A}} + \begin{cases}
			\sum_{\substack{\ell=1 \\ \ell \neq i}}^m{\hat{\phi}_\ell a_{\ell i}^2 \hat{p}_{\ell i} (1 - \hat{p}_{\ell i})} + \hat{\phi}_i \sum_{\substack{\ell=1 \\ \ell \neq i}}^m{a_{i\ell}^2 \hat{p}_{i\ell} (1 - \hat{p}_{i\ell})} & i = j,
			\\
			- \hat{\phi}_i a_{ij}^2 \hat{p}_{ij} (1 - \hat{p}_{ij}) - \hat{\phi}_j a_{ji}^2 \hat{p}_{ji} (1 - \hat{p}_{ji}) & i \neq j
		\end{cases}
        \\
        & = \bs{\hat{A}}^T \bs{\Lambda}_{\bs{\hat{\phi}}} \bs{\hat{A}} + \begin{cases}
			(m - 1) \hat{\phi}_i \left( \overline{\hat{\phi} \hat{\phi}_i^{-1} a_{:i}^2 \hat{p}_{:i} (1 - \hat{p}_{:i})} + \overline{a_{i:}^2 \hat{p}_{i:} (1 - \hat{p}_{i:})} \right) & i = j,
			\\
			- \sqrt{\hat{\phi}_i \hat{\phi}_j} \left( \sqrt{\frac{\hat{\phi}_i}{\hat{\phi}_j}} a_{ij}^2 \hat{p}_{ij} (1 - \hat{p}_{ij}) + \sqrt{\frac{\hat{\phi}_j}{\hat{\phi}_i}} a_{ji}^2 \hat{p}_{ji} (1 - \hat{p}_{ji}) \right) & i \neq j
		\end{cases} = \bs{\hat{A}}^T \bs{\Lambda}_{\bs{\hat{\phi}}} \bs{\hat{A}} + \bs{\Lambda}_{\bs{\hat{\phi}}}^{1/2} \bs{\hat{A}}_0 \bs{\Lambda}_{\bs{\hat{\phi}}}^{1/2}.
	\end{aligned}
\end{equation}
Note that the matrix $\bs{\hat{A}}_0$ will be an all-zeroes matrix if $\hat{p}_{ij} = 1$ for all links $(i, j) \in \mathcal{E}$. Plugging the result in Eq.~\eqref{eqn:spod_contract_A0} back into Eq.~\eqref{eqn:spod_contract_A_halfway}, we get
\begin{equation} \label{eqn:spod_contract_A_almost}
	\mathbb{E}\left[ \left\| \bs{\hat{A}}^{(k)} \bs{X}^{(k)} - \bs{1} \bs{\hat{\phi}}^T \bs{X}^{(k)} \right\|_{\bs{\hat{\phi}}}^2 \right] \le \left( \hat{\rho}_A + \rho(\bs{\hat{A}}_0) \right) \mathbb{E}\left[ \left\| \bs{X}^{(k)} - \bs{1} \bs{\hat{\phi}}^T \bs{X}^{(k)} \right\|_{\bs{\hat{\phi}}}^2 \right],
\end{equation}
in which $\rho(\bs{\hat{A}}_0) \le (m - 1) \max( \overline{(\hat{\phi} \hat{\phi}_i^{-1} + \sqrt{\hat{\phi} \hat{\phi}_i^{-1}}) a_{:i}^2 \hat{p}_{:i} (1 - \hat{p}_{:i})} + \overline{(1 + \sqrt{\hat{\phi}_i \hat{\phi}^{-1}}) a_{i:}^2 \hat{p}_{i:} (1 - \hat{p}_{i:})})$. To ensure a contractive property for the upper bound in Eq.~\eqref{eqn:spod_contract_A_almost}, we need to ensure that $\rho(\bs{\hat{A}}_0) < \frac{1 - \hat{\rho}_A}2$. However, we enforce the stronger constraint $\rho(\bs{\hat{A}}_0) < \frac{1 - \hat{\rho}_A}4$ due to what will be needed later to prove Lemma~\ref{lemma:xdispersion}. Noting that $\hat{\rho}_A = 1 - \frac{\min(\hat{\phi}_i) \min(a_{ij}^+ \hat{p}_{ij})^2}{\max(\hat{\phi}_i)^2 D(\mathcal{G}) K(\mathcal{G})}$ from Lemma~\ref{lemma:expected_contraction}-\ref{lemma:expected_contraction:A}, and that we have the bounds $\frac{\min(a_{ij}^+ \hat{p}_{ij})^m}{m} \le \min(\hat{\phi}_i) \le \frac1m \le \max(\hat{\phi}_i) < 1$ for the eigenvector entries (see \cite{nguyen2025distributed}), we need to solve for
\begin{equation}
    \begin{gathered}
        \max(\hat{p}_{ij} (1 - \hat{p}_{ij})) \le \frac{\frac{\min(\hat{\phi}_i) \min(a_{ij}^+ \hat{p}_{ij})^2}{\max(\hat{\phi}_i)^2 D(\mathcal{G}) K(\mathcal{G})}}{16 (m - 1) \max(a_{ij}^+)^2} \frac{\min(\hat{\phi}_i)}{\max(\hat{\phi}_i)}
        \\
        \Rightarrow \begin{cases}
            \min(\hat{p}_{ij})^2 \geq \frac{16 (m - 1) \max(a_{ij}^+)^2 D(\mathcal{G}) K(\mathcal{G})}{m \min(a_{ij}^+)^2} (1 - \min(\hat{p}_{ij})) & \text{if $\bs{\hat{A}}$ admits a doubly-stochastic matrix},
            \\
            \min(\hat{p}_{ij})^{2(m+1)} \geq \frac{16 m^2 (m - 1) \max(a_{ij}^+)^2 D(\mathcal{G}) K(\mathcal{G})}{\min(a_{ij}^+)^{2(m+1)}} (1 - \min(\hat{p}_{ij})) & \text{otherwise}.
        \end{cases}
	\end{gathered}
\end{equation}
Note that we separated the first case for a doubly-stochastic $\bs{\hat{A}}$ to discuss the behavior in edge cases, however, in general we deal with the second case. Solving for a range of unknown values $x = \hat{p}_{ij}$ and $\hat{\tau}_A = \frac{16 m^2 (m - 1) \max(a_{ij}^+)^2 D(\mathcal{G}) K(\mathcal{G})}{\min(a_{ij}^+)^{2(m+1)}}$, we write the equation $f_A(x) = x^{2(m+1)} + \hat{\tau}_A x - \hat{\tau}_A$. Noting that $f_A(0) = - \hat{\tau}_A < 0$ and $f_A(1) = 1 > 0$, there exists a root $\hat{r}_A \in (0, 1)$ such that $f_A(\hat{r}_A) = 0$. Since $f_A(x)$ is an increasing function in $x$ for $x > 0$, which is the interval we are interested in analyzing, the solution to $f_A(x) \geq 0$ will be $x \in [\hat{r}_A, 1]$.

\ref{lemma:sporadic_contraction:B} We start by expanding the norm as
\begin{equation} \label{eqn:spod_contract_B_halfway}
    \begin{aligned}
        \left\| \bs{\Lambda}_{\bs{\hat{\pi}}}^{-1} \bs{\hat{B}}^{(k)} \bs{Y}^{(k)} - m \bs{1} \bs{\bar{y}}^{(k)} \right\|_{\bs{\hat{\pi}}}^2 & = \left\| \bs{\Lambda}_{\bs{\hat{\pi}}}^{1/2} \left( \bs{\Lambda}_{\bs{\hat{\pi}}}^{-1} \bs{\hat{B}}^{(k)} \bs{Y}^{(k)} - \bs{1} \bs{1}^T \bs{Y}^{(k)} \right) \right\|^2 = \left\| \bs{\Lambda}_{\bs{\hat{\pi}}}^{-1/2} \left( \bs{\hat{B}}^{(k)} \bs{Y}^{(k)} - \bs{\hat{\pi}} \bs{1}^T \bs{Y}^{(k)} \right) \right\|^2
        \\
        & = \left\| \bs{\Lambda}_{\bs{\hat{\pi}}}^{-1/2} \left( \bs{\hat{B}}^{(k)} \bs{Y}^{(k)} - \bs{\hat{\pi}} \bs{1}^T \bs{\hat{B}}^{(k)} \bs{Y}^{(k)} \right) \right\|^2 = \left\| \bs{\Lambda}_{\bs{\hat{\pi}}}^{-1/2} \left( \bs{I} - \bs{\hat{\pi}} \bs{1}^T \right) \bs{\hat{B}}^{(k)} \bs{Y}^{(k)} \right\|^2
        \\
        & = \sum_{d=1}^n{\left\| \bs{\Lambda}_{\bs{\hat{\pi}}}^{-1/2} \left( \bs{I} - \bs{\hat{\pi}} \bs{1}^T \right) \bs{\hat{B}}^{(k)} [\bs{Y}^{(k)}]_{[:,d]} \right\|^2}
        \\
        & = \sum_{d=1}^n{[\bs{Y}^{(k)}]_{[:,d]}^T (\bs{\hat{B}}^{(k)})^T \left( \bs{I} - \bs{1} \bs{\hat{\pi}}^T \right) \bs{\Lambda}_{\bs{\hat{\pi}}}^{-1} \left( \bs{I} - \bs{\hat{\pi}} \bs{1}^T \right) \bs{\hat{B}}^{(k)} [\bs{Y}^{(k)}]_{[:,d]}}
        \\
        & = \sum_{d=1}^n{[\bs{Y}^{(k)}]_{[:,d]}^T (\bs{\hat{B}}^{(k)})^T \left( \bs{\Lambda}_{\bs{\hat{\pi}}}^{-1} - \bs{1} \bs{1}^T \right) \bs{\hat{B}}^{(k)} [\bs{Y}^{(k)}]_{[:,d]}}
        \\
        & = \sum_{d=1}^n{[\bs{Y}^{(k)}]_{[:,d]}^T \left( (\bs{\hat{B}}^{(k)})^T \bs{\Lambda}_{\bs{\hat{\pi}}}^{-1} \bs{\hat{B}}^{(k)} - \bs{1} \bs{1}^T \right) [\bs{Y}^{(k)}]_{[:,d]}}.
    \end{aligned}
\end{equation}
Noting that when taking the expected value of the above expression we can push the expectation operation inside to $\mathbb{E}[(\bs{\hat{B}}^{(k)})^T \bs{\Lambda}_{\bs{\hat{\pi}}}^{-1} \bs{\hat{B}}^{(k)} - \bs{1} \bs{1}^T]$, we first determine the values of this product. We have
\begin{equation}
	\begin{aligned}
		& (\bs{\hat{B}}^{(k)})^T \bs{\Lambda}_{\bs{\hat{\pi}}}^{-1} \bs{\hat{B}}^{(k)} = \left[ \sum_{\ell=1}^m{\hat{\pi}_\ell^{-1} \hat{b}_{\ell i}^{(k)} \hat{b}_{\ell j}^{(k)}} \right]_{1 \le i,j \le m}
		\\
		& = \begin{cases}
			\sum_{\substack{\ell=1 \\ \ell \neq i}}^m{\hat{\pi}_\ell^{-1} b_{\ell i}^2 \hat{v}_{\ell i}^{(k)}} + \hat{\pi}_i^{-1} \left( 1 - \sum_{\substack{\ell=1 \\ \ell \neq i}}^m{b_{\ell i} \hat{v}_{\ell i}^{(k)}} \right)^2 & i = j,
			\\
			\sum_{\substack{\ell=1 \\ \ell \neq i \\ \ell \neq j}}^m{\hat{\pi}_\ell^{-1} b_{\ell i} b_{\ell j} \hat{v}_{\ell i}^{(k)} \hat{v}_{\ell j}^{(k)}} + \hat{\pi}_i^{-1} \left( 1 - \sum_{\substack{\ell=1 \\ \ell \neq i}}^m{b_{\ell i} \hat{v}_{\ell i}^{(k)}} \right) b_{ij} \hat{v}_{ij}^{(k)} + \hat{\pi}_j^{-1} b_{ji} \hat{v}_{ji}^{(k)} \left( 1 - \sum_{\substack{\ell'=1 \\ \ell' \neq j}}^m{b_{\ell' j} \hat{v}_{\ell' j}^{(k)}} \right) & i \neq j
		\end{cases}
		\\
		& = \begin{cases}
			\sum_{\substack{\ell=1 \\ \ell \neq i}}^m{\hat{\pi}_\ell^{-1} b_{\ell i}^2 \hat{v}_{\ell i}^{(k)}} + \hat{\pi}_i^{-1} \bigg( 1 - 2 \sum_{\substack{\ell=1 \\ \ell \neq i}}^m{b_{\ell i} \hat{v}_{\ell i}^{(k)}} + \sum_{\substack{\ell=1 \\ \ell \neq i}}^m{\sum_{\substack{\ell'=1 \\ \ell' \neq i \\ \ell' \neq \ell}}^m{b_{\ell i} b_{\ell' i} \hat{v}_{\ell i}^{(k)} \hat{v}_{\ell' i}^{(k)}}} + \sum_{\substack{\ell=1 \\ \ell \neq i}}^m{b_{\ell i}^2 \hat{v}_{\ell i}^{(k)}} \bigg) & i = j,
			\\
			\sum_{\substack{\ell=1 \\ \ell \neq i \\ \ell \neq j}}^m{\hat{\pi}_\ell^{-1} b_{\ell i} b_{\ell j} \hat{v}_{\ell i}^{(k)} \hat{v}_{\ell j}^{(k)}} + \hat{\pi}_i^{-1} \left( 1 - \sum_{\substack{\ell=1 \\ \ell \neq i}}^m{b_{\ell i} \hat{v}_{\ell i}^{(k)}} \right) b_{ij} \hat{v}_{ij}^{(k)} + \hat{\pi}_j^{-1} b_{ji} \hat{v}_{ji}^{(k)} \left( 1 - \sum_{\substack{\ell'=1 \\ \ell' \neq j}}^m{b_{\ell' j} \hat{v}_{\ell' j}^{(k)}} \right) & i \neq j,
		\end{cases}
	\end{aligned}
\end{equation}
where we considered the terms that are statistically dependent on each other together. Now, taking the expected value of this product and using Assumption~\ref{assump:bernoulli}, we get
\begin{equation} \label{eqn:spod_contract_B0}
	\begin{aligned}
		& \mathbb{E}\left[ (\bs{\hat{B}}^{(k)})^T \bs{\Lambda}_{\bs{\hat{\pi}}}^{-1} \bs{\hat{B}}^{(k)} \right]
		\\
		& = \begin{cases}
			\sum_{\substack{\ell=1 \\ \ell \neq i}}^m{\hat{\pi}_\ell^{-1} b_{\ell i}^2 \hat{p}_{\ell i}} + \hat{\pi}_i^{-1} \bigg( 1 - 2 \sum_{\substack{\ell=1 \\ \ell \neq i}}^m{b_{\ell i} \hat{p}_{\ell i}} + \sum_{\substack{\ell=1 \\ \ell \neq i}}^m{\sum_{\substack{\ell'=1 \\ \ell' \neq i \\ \ell' \neq \ell}}^m{b_{\ell i} b_{\ell' i} \hat{p}_{\ell i} \hat{p}_{\ell' i}}} + \sum_{\substack{\ell=1 \\ \ell \neq i}}^m{b_{\ell i}^2 \hat{p}_{\ell i}} \bigg) & i = j,
			\\
			\sum_{\substack{\ell=1 \\ \ell \neq i \\ \ell \neq j}}^m{\hat{\pi}_\ell^{-1} b_{\ell i} b_{\ell j} \hat{p}_{\ell i} \hat{p}_{\ell j}} + \hat{\pi}_i^{-1} \left( 1 - \sum_{\substack{\ell=1 \\ \ell \neq i}}^m{b_{\ell i} \hat{p}_{\ell i}} \right) b_{ij} \hat{p}_{ij} + \hat{\pi}_j^{-1} b_{ji} \hat{p}_{ji} \left( 1 - \sum_{\substack{\ell'=1 \\ \ell' \neq j}}^m{b_{\ell' j} \hat{p}_{\ell' j}} \right) & i \neq j
		\end{cases}
		\\
		& = \bs{\hat{B}}^T \bs{\Lambda}_{\bs{\hat{\pi}}}^{-1} \bs{\hat{B}} + \begin{cases}
			\sum_{\substack{\ell=1 \\ \ell \neq i}}^m{(\hat{\pi}_\ell^{-1} + \hat{\pi}_i^{-1}) b_{\ell i}^2 \hat{p}_{\ell i} (1 - \hat{p}_{\ell i})} & i = j,
			\\
			0 & i \neq j
		\end{cases}
        \\
        & = \bs{\hat{B}}^T \bs{\Lambda}_{\bs{\hat{\pi}}}^{-1} \bs{\hat{B}} + \begin{cases}
			\frac{m - 1}{\hat{\pi}_i} \overline{( \hat{\pi}_i \hat{\pi}^{-1} + 1 ) b_{:i}^2 \hat{p}_{:i} (1 - \hat{p}_{:i})} & i = j,
			\\
			0 & i \neq j
		\end{cases}
        = \bs{\hat{B}}^T \bs{\Lambda}_{\bs{\hat{\pi}}}^{-1} \bs{\hat{B}} + \bs{\Lambda}_{\bs{\hat{\pi}}}^{-1/2} \bs{\Lambda}_{\bs{\hat{b}}_0} \bs{\Lambda}_{\bs{\hat{\pi}}}^{-1/2}.
	\end{aligned}
\end{equation}
Note that the diagonal matrix $\bs{\Lambda}_{\bs{\hat{b}}_0}$ will be an all-zeroes matrix if $\hat{p}_{ij} = 1$ for all links $(i, j) \in \mathcal{E}$. Plugging the result in Eq.~\eqref{eqn:spod_contract_B0} back into Eq.~\eqref{eqn:spod_contract_B_halfway}, we get
\begin{equation} \label{eqn:spod_contract_B_almost}
	\begin{gathered}
		\begin{aligned}
		    \mathbb{E}\left[ \left\| \bs{\Lambda}_{\bs{\hat{\pi}}}^{-1} \bs{\hat{B}}^{(k)} \bs{Y}^{(k)} - m \bs{1} \bs{\bar{y}}^{(k)} \right\|_{\bs{\hat{\pi}}}^2 \right] & \le \hat{\rho}_B \mathbb{E}\left[ \left\| \bs{\Lambda}_{\bs{\hat{\pi}}}^{-1} \bs{Y}^{(k)} - m \bs{1} \bs{\bar{y}}^{(k)} \right\|_{\bs{\hat{\pi}}}^2 \right] + \rho(\bs{\Lambda}_{\bs{\hat{b}}_0}) \mathbb{E}\left[ \left\| \bs{Y}^{(k)} \right\|_{\bs{\hat{\pi}}^{-1}}^2 \right]
            \\
            & \le \left( \hat{\rho}_B + \rho(\bs{\Lambda}_{\bs{\hat{b}}_0}) \right) \mathbb{E}\left[ \left\| \bs{\Lambda}_{\bs{\hat{\pi}}}^{-1} \bs{Y}^{(k)} - m \bs{1} \bs{\bar{y}}^{(k)} \right\|_{\bs{\hat{\pi}}}^2 \right] + m^2 \rho(\bs{\Lambda}_{\bs{\hat{b}}_0}) \mathbb{E}\left[ \left\| \bs{\bar{y}}^{(k)} \right\|^2 \right],
		\end{aligned}
	\end{gathered}
\end{equation}
in which $\rho(\bs{\Lambda}_{\bs{\hat{b}}_0}) \le (m - 1) \max( \overline{( \hat{\pi}_i \hat{\pi}^{-1} + 1 ) b_{:i}^2 \hat{p}_{:i} (1 - \hat{p}_{:i})} )$. To ensure a contractive property for the upper bound in Eq.~\eqref{eqn:spod_contract_B_almost}, we need to ensure that $\rho(\bs{\Lambda}_{\bs{\hat{b}}_0}) \le \frac{1 - \hat{\rho}_B}2$. Noting that $\hat{\rho}_B = 1 - \frac{\min(\hat{\pi}_i)^2 \min((b_{ij} \hat{p}_{ij})^+)^2}{\max(\hat{\pi}_i)^3 D(\mathcal{G}) K(\mathcal{G})}$ from Lemma~\ref{lemma:expected_contraction}-\ref{lemma:expected_contraction:B}, and that we have the bounds $\frac{\min(a_{ij}^+ \hat{p}_{ij})^m}{m} \le \min(\hat{\pi}_i) \le \frac1m \le \max(\hat{\pi}_i) < 1$ for the eigenvector entries (see \cite{nguyen2025distributed}), we need to solve for
\begin{equation}
    \begin{gathered}
        \max(\hat{p}_{ij} (1 - \hat{p}_{ij})) \le \frac{\frac{\min(\hat{\pi}_i)^2 \min((b_{ij} \hat{p}_{ij})^+)^2}{\max(\hat{\pi}_i)^3 D(\mathcal{G}) K(\mathcal{G})}}{4 (m - 1) \max(b_{ij}^+)^2} \frac{\min(\hat{\pi}_i)}{\max(\hat{\pi}_i)}
        \\
        \Rightarrow \begin{cases}
            \min(\hat{p}_{ij})^2 \geq \frac{4 (m - 1) \max(b_{ij}^+)^2 D(\mathcal{G}) K(\mathcal{G})}{m \min(b_{ij}^+)^2} (1 - \min(\hat{p}_{ij})) & \text{if $\bs{\hat{B}}$ admits a doubly-stochastic matrix},
            \\
            \min(\hat{p}_{ij})^{2(m+1)} \geq \frac{4 m^3 (m - 1) \max(b_{ij}^+)^2 D(\mathcal{G}) K(\mathcal{G})}{\min(b_{ij}^+)^{3m+2}} (1 - \min(\hat{p}_{ij})) & \text{otherwise}.
        \end{cases}
    \end{gathered}
\end{equation}
Note that we separated the first case for a doubly-stochastic $\bs{\hat{B}}$ to discuss the behavior in edge cases, however, in general we deal with the second case. Solving for a range of unknown values $x = \hat{p}_{ij}$ and $\hat{\tau}_B = \frac{4 m^3 (m - 1) \max(b_{ij}^+)^2 D(\mathcal{G}) K(\mathcal{G})}{\min(b_{ij}^+)^{3m+2}}$, we write the equation $f_B(x) = x^{3m+2} + \hat{\tau}_B x - \hat{\tau}_B$. Noting that $f_B(0) = - \hat{\tau}_B < 0$ and $f_B(1) = 1 > 0$, there exists a root $\hat{r}_B \in (0, 1)$ such that $f_B(\hat{r}_B) = 0$. Since $f(x)$ is an increasing function in $x$ for $x > 0$, which is the interval we are interested in analyzing, the solution to $f_B(x) \geq 0$ will be $x \in [\hat{r}_B, 1]$.

\subsection{Proof of Lemma~\ref{lemma:trackingeffect}} \label{app:lemma:trackingeffect}
\ref{lemma:trackingeffect:sporadic_modeldiv} We have
\begin{equation}
    \begin{aligned}
        & \left\| \left( \bs{\hat{A}}^{(k)} - \bs{\hat{A}} \right) \bs{X}^{(k)} \right\|_{\bs{\hat{\phi}}}^2 = \left\| \bs{\hat{A}}^{(k)} \bs{X}^{(k)} \mp \bs{1} \bs{\bar{x}}_{\bs{\hat{\phi}}}^{(k)} - \bs{\hat{A}} \bs{X}^{(k)} \right\|_{\bs{\hat{\phi}}}^2
        \\
        & = \left\| \bs{\hat{A}}^{(k)} \bs{X}^{(k)} - \bs{1} \bs{\bar{x}}_{\bs{\hat{\phi}}}^{(k)} \right\|_{\bs{\hat{\phi}}}^2 + \left\| \bs{\hat{A}} \bs{X}^{(k)} - \bs{1} \bs{\bar{x}}_{\bs{\hat{\phi}}}^{(k)} \right\|_{\bs{\hat{\phi}}}^2 - 2 \left\langle \bs{\hat{A}}^{(k)} \bs{X}^{(k)} - \bs{1} \bs{\bar{x}}_{\bs{\hat{\phi}}}^{(k)}, \bs{\hat{A}} \bs{X}^{(k)} - \bs{1} \bs{\bar{x}}_{\bs{\hat{\phi}}}^{(k)} \right\rangle_{\bs{\hat{\phi}}},
    \end{aligned}
\end{equation}
where we separated the inner-product term as its expected value will be zero according to Assumption~\ref{assump:bernoulli} (see Eq.~\eqref{eqn:expected_mat}). Next, we take the expected value of the result and employ Assumption~\ref{assump:bernoulli} to write
\begin{equation}
    \begin{aligned}
        \mathbb{E}\left[ \left\| \left( \bs{\hat{A}}^{(k)} - \bs{\hat{A}} \right) \bs{X}^{(k)} \right\|_{\bs{\hat{\phi}}}^2 \right] & = \mathbb{E}\left[ \left\| \bs{\hat{A}}^{(k)} \bs{X}^{(k)} - \bs{1} \bs{\bar{x}}_{\bs{\hat{\phi}}}^{(k)} \right\|_{\bs{\hat{\phi}}}^2 \right] - \mathbb{E}\left[ \left\| \bs{\hat{A}} \bs{X}^{(k)} - \bs{1} \bs{\bar{x}}_{\bs{\hat{\phi}}}^{(k)} \right\|_{\bs{\hat{\phi}}}^2 \right]
        \\
        & \le \rho(\bs{\hat{A}}_0) \mathbb{E}\left[ \left\| \bs{X}^{(k)} - \bs{1} \bs{\bar{x}}_{\bs{\hat{\phi}}}^{(k)} \right\|_{\bs{\hat{\phi}}}^2 \right].
    \end{aligned}
\end{equation}

\ref{lemma:trackingeffect:modelonestep} We use Eq.~\eqref{eqn:averages} to write
\begin{equation}
    \begin{aligned}
        \left\| \bs{\bar{x}}_{\bs{\hat{\phi}}}^{(k+1)} - \bs{\bar{x}}_{\bs{\hat{\phi}}}^{(k)} \right\|^2 & = \left\| \bs{\hat{\phi}}^T \bs{\hat{A}}^{(k)} \bs{X}^{(k)} - \bs{\hat{\phi}}^T \bs{\Lambda}_{\bs{\eta}}^{(k)} \bs{\hat{B}}^{(k)} \bs{Y}^{(k)} - \bs{\bar{x}}_{\bs{\hat{\phi}}}^{(k)} \right\|^2
        \\
        & \le 2 \left\| \bs{\hat{\phi}}^T \left( \bs{\hat{A}}^{(k)} - \bs{\hat{A}} \right) \bs{X}^{(k)} \right\|^2 + 2 \left\| \bs{\hat{\phi}}^T \bs{\Lambda}_{\bs{\eta}}^{(k)} \bs{\hat{B}}^{(k)} \bs{Y}^{(k)} \right\|^2
        \\
        & = 2 \left\| \sum_{i=1}^m{\hat{\phi}_i \sum_{j=1}^m{(\hat{a}_{ij}^{(k)} - \hat{a}_{ij}) \bs{x}_j^{(k)}}} \right\|^2 + 2 \left\| \sum_{i=1}^m{\hat{\phi}_i \eta_i^{(k)} \sum_{j=1}^m{\hat{b}_{ij}^{(k)} \bs{y}_j^{(k)}}} \right\|^2
        \\
        & \le 2 \sum_{i=1}^m{\hat{\phi}_i \left\| \sum_{j=1}^m{(\hat{a}_{ij}^{(k)} - \hat{a}_{ij}) \bs{x}_j^{(k)}} \right\|^2} + 2 \sum_{i=1}^m{\hat{\phi}_i \left\| \eta_i^{(k)} \sum_{j=1}^m{\hat{b}_{ij}^{(k)} \bs{y}_j^{(k)}} \right\|^2}
    \end{aligned}
\end{equation}
\begin{equation*}
    \begin{aligned}
        & = 2 \left\| \left( \bs{\hat{A}}^{(k)} - \bs{\hat{A}} \right) \bs{X}^{(k)} \right\|_{\bs{\hat{\phi}}}^2 + 2 \left\| \bs{\Lambda}_{\bs{\eta}}^{(k)} \bs{\hat{B}}^{(k)} \bs{Y}^{(k)} \right\|_{\bs{\hat{\phi}}}^2,
    \end{aligned}
\end{equation*}
where we used Young's inequality and Jensen's inequality, respectively. Next, we take the expected value of the result and employ Lemmas~\ref{lemma:descenteffect}-\ref{lemma:descenteffect:yconsphi} and \ref{lemma:trackingeffect}-\ref{lemma:trackingeffect:sporadic_modeldiv} to conclude the proof.

\ref{lemma:trackingeffect:sporadic_sgddiv}
We have
\begin{equation} \label{eqn:graddiff_half}
	\begin{aligned}
		\left\| \bs{\Lambda}_{\bs{v}}^{(k+1)} \bs{G}^{(k+1)} - \bs{\Lambda}_{\bs{v}}^{(k)} \bs{G}^{(k)} \right\|^2 & = \left\| \bs{\Lambda}_{\bs{v}}^{(k+1)} \bs{G}^{(k+1)} \mp \bs{\Lambda}_{\bs{v}}^{(k+1)} \nabla{\bs{F}}(\bs{X}^{(k+1)}) \mp \bs{\Lambda}_{\bs{v}}^{(k)} \nabla{\bs{F}}(\bs{X}^{(k)}) - \bs{\Lambda}_{\bs{v}}^{(k)} \bs{G}^{(k)} \right\|^2
		\\
		& \begin{aligned}
			\,\, = \Big\| \bs{\Lambda}_{\bs{v}}^{(k+1)} & \left( \bs{G}^{(k+1)} - \nabla{\bs{F}}(\bs{X}^{(k+1)}) \right) \Big\|^2 + \left\| \bs{\Lambda}_{\bs{v}}^{(k)} \left( \nabla{\bs{F}}(\bs{X}^{(k)}) - \bs{G}^{(k)} \right) \right\|^2
            \\
            & + \left\| \bs{\Lambda}_{\bs{v}}^{(k+1)} \nabla{\bs{F}}(\bs{X}^{(k+1)}) - \bs{\Lambda}_{\bs{v}}^{(k)} \nabla{\bs{F}}(\bs{X}^{(k)}) \right\|^2
			\\
			& \begin{aligned}
			    \,\, + 2 \bigg\langle & \bs{\Lambda}_{\bs{v}}^{(k+1)} \left( \bs{G}^{(k+1)} - \nabla{\bs{F}}(\bs{X}^{(k+1)}) \right) + \bs{\Lambda}_{\bs{v}}^{(k)} \left( \nabla{\bs{F}}(\bs{X}^{(k)}) - \bs{G}^{(k)} \right),
                \\
                & \bs{\Lambda}_{\bs{v}}^{(k+1)} \nabla{\bs{F}}(\bs{X}^{(k+1)}) - \bs{\Lambda}_{\bs{v}}^{(k)} \nabla{\bs{F}}(\bs{X}^{(k)}) \bigg\rangle.
			\end{aligned}
		\end{aligned}
	\end{aligned}
\end{equation}
where we separated the inner product term because it will be equal to $0$ when taking its expected value, since $\bs{g}_i^{(k)}$ is an unbiased estimate of $\nabla{F}_i(\bs{x}_i^{(k)})$ according to the Assumption~\ref{assump:sgd}. We further expand the bound as
\begin{equation} \label{eqn:graddiff_vv}
	\begin{aligned}
		& \left\| \bs{\Lambda}_{\bs{v}}^{(k+1)} \nabla{\bs{F}}(\bs{X}^{(k+1)}) - \bs{\Lambda}_{\bs{v}}^{(k)} \nabla{\bs{F}}(\bs{X}^{(k)}) \right\|^2
		\\
		& \begin{aligned}
		    \,\, = \bigg\| \bs{\Lambda}_{\bs{v}}^{(k+1)} \nabla{\bs{F}}(\bs{X}^{(k+1)}) & \mp \bs{\Lambda}_{\bs{v}}^{(k+1)} \nabla{\bs{F}}(\bs{1} \bs{\bar{x}}_{\bs{\hat{\phi}}}^{(k+1)}) \mp \bs{\Lambda}_{\bs{v}}^{(k+1)} \bs{\Lambda}_{\bs{v}}^{(k)} \nabla{\bs{F}}(\bs{1} \bs{\bar{x}}_{\bs{\hat{\phi}}}^{(k+1)}) \mp \bs{\Lambda}_{\bs{v}}^{(k+1)} \bs{\Lambda}_{\bs{v}}^{(k)} \nabla{\bs{F}}(\bs{1} \bs{\bar{x}}_{\bs{\hat{\phi}}}^{(k)})
            \\
            & \mp \bs{\Lambda}_{\bs{v}}^{(k)} \nabla{\bs{F}}(\bs{1} \bs{\bar{x}}_{\bs{\hat{\phi}}}^{(k)}) - \bs{\Lambda}_{\bs{v}}^{(k)} \nabla{\bs{F}}(\bs{X}^{(k)}) \bigg\|^2
		\end{aligned}
		\\
		& \begin{aligned}
		    \,\, \le 5 \bigg( & \left\| \bs{\Lambda}_{\bs{v}}^{(k+1)} \left( \nabla{\bs{F}}(\bs{X}^{(k+1)}) - \nabla{\bs{F}}(\bs{1} \bs{\bar{x}}_{\bs{\hat{\phi}}}^{(k+1)}) \right) \right\|^2 + \left\| \bs{\Lambda}_{\bs{v}}^{(k+1)} \left( \bs{I} - \bs{\Lambda}_{\bs{v}}^{(k)} \right) \nabla{\bs{F}}(\bs{1} \bs{\bar{x}}_{\bs{\hat{\phi}}}^{(k+1)}) \right\|^2
            \\
            & + \left\| \bs{\Lambda}_{\bs{v}}^{(k+1)} \bs{\Lambda}_{\bs{v}}^{(k)} \left( \nabla{\bs{F}}(\bs{1} \bs{\bar{x}}_{\bs{\hat{\phi}}}^{(k+1)}) - \nabla{\bs{F}}(\bs{1} \bs{\bar{x}}_{\bs{\hat{\phi}}}^{(k)}) \right) \right\|^2 + \left\| \left( \bs{\Lambda}_{\bs{v}}^{(k+1)} - \bs{I} \right) \bs{\Lambda}_{\bs{v}}^{(k)} \nabla{\bs{F}}(\bs{1} \bs{\bar{x}}_{\bs{\hat{\phi}}}^{(k)}) \right\|^2
            \\
            & + \left\| \bs{\Lambda}_{\bs{v}}^{(k)} \left( \nabla{\bs{F}}(\bs{1} \bs{\bar{x}}_{\bs{\hat{\phi}}}^{(k)}) - \nabla{\bs{F}}(\bs{X}^{(k)}) \right) \right\|^2 \bigg)
		\end{aligned}
	\end{aligned}
\end{equation}
\begin{equation*}
    \begin{aligned}
        & \begin{aligned}
		    \,\, = 5 \sum_{i=1}^m \bigg( & \left\| \nabla{F}_i(\bs{x}_i^{(k+1)}) - \nabla{F}_i(\bs{\bar{x}}_{\bs{\hat{\phi}}}^{(k+1)}) \right\|^2 v_i^{(k+1)} + \left\| \nabla{F}_i(\bs{\bar{x}}_{\bs{\hat{\phi}}}^{(k+1)}) \right\|^2 v_i^{(k+1)} (1 - v_i^{(k)})
            \\
            & + \left\| \nabla{F}_i(\bs{\bar{x}}_{\bs{\hat{\phi}}}^{(k+1)}) - \nabla{F}_i(\bs{\bar{x}}_{\bs{\hat{\phi}}}^{(k)}) \right\|^2 v_i^{(k+1)} v_i^{(k)} + \left\| \nabla{F}_i(\bs{\bar{x}}_{\bs{\hat{\phi}}}^{(k)}) \right\|^2 (1 - v_i^{(k+1)}) v_i^{(k)}
            \\
            & + \left\| \nabla{F}_i(\bs{\bar{x}}_{\bs{\hat{\phi}}}^{(k)}) - \nabla{F}_i(\bs{x}_i^{(k)}) \right\|^2  v_i^{(k)} \bigg),
		\end{aligned}
        \\
        & \begin{aligned}
            \,\, \le 5 \sum_{i=1}^m \bigg( & L_i^2 \left\| \bs{x}_i^{(k+1)} - \bs{\bar{x}}_{\bs{\hat{\phi}}}^{(k+1)} \right\|^2 v_i^{(k+1)} + \left( \delta_{0,i}^2 + \delta_{1,i}^2 \left\| \nabla{F}(\bs{\bar{x}}_{\bs{\hat{\phi}}}^{(k+1)}) \right\|^2 \right) v_i^{(k+1)} (1 - v_i^{(k)})
            \\
            & + L_i^2 \left\| \bs{\bar{x}}_{\bs{\hat{\phi}}}^{(k+1)} - \bs{\bar{x}}_{\bs{\hat{\phi}}}^{(k)} \right\|^2 v_i^{(k+1)} v_i^{(k)} + \left( \delta_{0,i}^2 + \delta_{1,i}^2 \left\| \nabla{F}(\bs{\bar{x}}_{\bs{\hat{\phi}}}^{(k)}) \right\|^2 \right) (1 - v_i^{(k+1)}) v_i^{(k)}
            \\
            & + L_i^2 \left\| \bs{\bar{x}}_{\bs{\hat{\phi}}}^{(k)} - \bs{x}_i^{(k)} \right\|^2 v_i^{(k)} \bigg)
        \end{aligned}
    \end{aligned}
\end{equation*}
in which we first used Young's inequality, and then invoked Assumptions~\ref{assump:lipschitz} and \ref{assump:graddiv}. Plugging Eq.~\eqref{eqn:graddiff_vv} back into Eq.~\eqref{eqn:graddiff_half} and taking the expected value of the result, we get
\begin{equation}
    \begin{aligned}
		& \mathbb{E}\left[ \left\| \bs{\Lambda}_{\bs{v}}^{(k+1)} \bs{G}^{(k+1)} - \bs{\Lambda}_{\bs{v}}^{(k)} \bs{G}^{(k)} \right\|^2 \right]
        \\
        & \begin{aligned}
            \le \sum_{i=1}^m p_i \bigg( & \left\| \bs{g}_i^{(k+1)} - \nabla{F}_i(\bs{x}_i^{(k+1)}) \right\|^2 + \left\| \nabla{F}_i(\bs{x}_i^{(k)}) - \bs{g}_i^{(k)} \right\|^2 + 5 \bigg( L_i^2 \left\| \bs{x}_i^{(k+1)} - \bs{\bar{x}}_{\bs{\hat{\phi}}}^{(k+1)} \right\|^2
            \\
            & + \left( \delta_{0,i}^2 + \delta_{1,i}^2 \left\| \nabla{F}(\bs{\bar{x}}_{\bs{\hat{\phi}}}^{(k+1)}) \right\|^2 \right) (1 - p_i) + L_i^2 \left\| \bs{\bar{x}}_{\bs{\hat{\phi}}}^{(k+1)} - \bs{\bar{x}}_{\bs{\hat{\phi}}}^{(k)} \right\|^2 p_i + \left( \delta_{0,i}^2 + \delta_{1,i}^2 \left\| \nabla{F}(\bs{\bar{x}}_{\bs{\hat{\phi}}}^{(k)}) \right\|^2 \right) (1 - p_i)
            \\
            & + L_i^2 \left\| \bs{\bar{x}}_{\bs{\hat{\phi}}}^{(k)} - \bs{x}_i^{(k)} \right\|^2 \bigg) \bigg)
        \end{aligned}
        \\
        & \begin{aligned}
            \le 2 m \overline{\sigma_0^2 p B^{-1} (1 - B/D)} & + 2\bar{L}^2 \max\left( \frac{\sigma_{1,i}^2 p_i (1 - B_i / D_i)}{B_i \hat{\phi}_i} \right) \left( \mathbb{E}\left[ \left\| \bs{X}^{(k+1)} - \bs{1} \bs{\bar{x}}_{\bs{\hat{\phi}}}^{(k+1)} \right\|_{\bs{\hat{\phi}}}^2 \right] + \mathbb{E}\left[ \left\| \bs{X}^{(k)} - \bs{1} \bs{\bar{x}}_{\bs{\hat{\phi}}}^{(k)} \right\|_{\bs{\hat{\phi}}}^2 \right] \right)
            \\
            & + 2m \overline{\sigma_1^2 p B^{-1} (1 - B/D)} \left( \mathbb{E}\left[ \left\| \nabla{F}(\bs{\bar{x}}_{\bs{\hat{\phi}}}^{(k+1)}) \right\|^2 \right] + \mathbb{E}\left[ \left\| \nabla{F}(\bs{\bar{x}}_{\bs{\hat{\phi}}}^{(k)}) \right\|^2 \right] \right)
            \\
            & + 5 \max(p_i L_i^2) \left( \mathbb{E}\left[ \left\| \bs{X}^{(k+1)} - \bs{1} \bs{\bar{x}}_{\bs{\hat{\phi}}}^{(k+1)} \right\|^2 \right] + \mathbb{E}\left[ \left\| \bs{X}^{(k)} - \bs{1} \bs{\bar{x}}_{\bs{\hat{\phi}}}^{(k)} \right\|^2 \right] \right)
            \\
            & + 10m \overline{p (1 - p) \delta_0^2} + 5m \overline{p (1 - p) \delta_1^2} \left( \mathbb{E}\left[ \left\| \nabla{F}(\bs{\bar{x}}_{\bs{\hat{\phi}}}^{(k+1)}) \right\|^2 \right] + \mathbb{E}\left[ \left\| \nabla{F}(\bs{\bar{x}}_{\bs{\hat{\phi}}}^{(k)}) \right\|^2 \right] \right)
            \\
            & + 5m \overline{p^2 L^2} \mathbb{E}\left[ \left\| \bs{\bar{x}}_{\bs{\hat{\phi}}}^{(k+1)} - \bs{\bar{x}}_{\bs{\hat{\phi}}}^{(k)} \right\|^2 \right],
        \end{aligned}
        \\
        & \begin{aligned}
            \le 2m & \overline{p \left( \sigma_0^2 B^{-1} (1 - B/D) + 5 (1 - p) \delta_0^2 \right)}
            \\
            & + \max\left( \frac{p_i}{\hat{\phi}_i} \left( 2 \frac{(1 - B_i / D_i) \sigma_{1,i}^2}{B_i} \bar{L}^2 + 5 L_i^2 \right) \right) \left( \mathbb{E}\left[ \left\| \bs{X}^{(k+1)} - \bs{1} \bs{\bar{x}}_{\bs{\hat{\phi}}}^{(k+1)} \right\|_{\bs{\hat{\phi}}}^2 \right] + \mathbb{E}\left[ \left\| \bs{X}^{(k)} - \bs{1} \bs{\bar{x}}_{\bs{\hat{\phi}}}^{(k)} \right\|_{\bs{\hat{\phi}}}^2 \right] \right)
            \\
            & + m \overline{p \left( 2 \sigma_1^2 B^{-1} (1 - B/D) + 5 (1 - p) \delta_1^2 \right)} \left( \mathbb{E}\left[ \left\| \nabla{F}(\bs{\bar{x}}_{\bs{\hat{\phi}}}^{(k+1)}) \right\|^2 \right] + \mathbb{E}\left[ \left\| \nabla{F}(\bs{\bar{x}}_{\bs{\hat{\phi}}}^{(k)}) \right\|^2 \right] \right)
            \\
            & + 5m \overline{p^2 L^2} \mathbb{E}\left[ \left\| \bs{\bar{x}}_{\bs{\hat{\phi}}}^{(k+1)} - \bs{\bar{x}}_{\bs{\hat{\phi}}}^{(k)} \right\|^2 \right],
        \end{aligned}
	\end{aligned}
\end{equation}
where we employed Assumption~\ref{assump:bernoulli} and Lemma~\ref{lemma:graderror}-\ref{lemma:graderror:sgd}. Noting that $\| \nabla{F}(\bs{\bar{x}}_{\bs{\hat{\phi}}}^{(k+1)}) \|^2 \le 2 \| \nabla{F}(\bs{\bar{x}}_{\bs{\hat{\phi}}}^{(k+1)}) - \nabla{F}(\bs{\bar{x}}_{\bs{\hat{\phi}}}^{(k)}) \|^2 + 2 \| \nabla{F}(\bs{\bar{x}}_{\bs{\hat{\phi}}}^{(k)}) \|^2 \le 2 \bar{L}^2 \| \bs{\bar{x}}_{\bs{\hat{\phi}}}^{(k+1)} - \bs{\bar{x}}_{\bs{\hat{\phi}}}^{(k)} \|^2 + 2 \| \nabla{F}(\bs{\bar{x}}_{\bs{\hat{\phi}}}^{(k)}) \|^2$  concludes the proof.

\subsection{Proof of Lemma~\ref{lemma:gradconsensus}} \label{app:lemma:gradconsensus}
According to Young's inequality, we have
\begin{equation} \label{eqn:fulldiscrepancy}
    \begin{aligned}
        & \left\| \bs{\Lambda}_{\bs{\hat{\pi}}}^{-1} \bs{\hat{B}}^{(k)} \bs{Y}^{(k)} - m \bs{1} \nabla{F}(\bs{\bar{x}}_{\bs{\hat{\phi}}}^{(k)}) \right\|_{\bs{\hat{\pi}}}^2
        \\
        & = \left\| \bs{\Lambda}_{\bs{\hat{\pi}}}^{-1} \bs{\hat{B}}^{(k)} \bs{Y}^{(k)} + m \bs{1} \left( \mp \bs{\bar{y}}^{(k)} \mp \bs{\bar{g}}_{\bs{v}}^{(k)} \mp \overline{\nabla{\bs{F}(\bs{X}^{(k)})}}_{\bs{v}^{(k)}} \mp \overline{\nabla{\bs{F}(\bs{1} \bs{\bar{x}}_{\bs{\hat{\phi}}}^{(k)})}}_{\bs{v}^{(k)}} -\nabla{F}(\bs{\bar{x}}_{\bs{\hat{\phi}}}^{(k)}) \right) \right\|_{\bs{\hat{\pi}}}^2
        \\
        & \begin{aligned}
            \,\, \le 3 & \left\| \bs{\Lambda}_{\bs{\hat{\pi}}}^{-1} \bs{\hat{B}}^{(k)} \bs{Y}^{(k)} - m \bs{1} \bs{\bar{y}}^{(k)} \right\|_{\bs{\hat{\pi}}}^2 + m^2 \left\| \bs{\bar{g}}_{\bs{v}}^{(k)} - \overline{\nabla{\bs{F}(\bs{X}^{(k)})}}_{\bs{v}^{(k)}} \right\|^2 + 3m^2 \left\| \overline{\nabla{\bs{F}(\bs{X}^{(k)})}}_{\bs{v}^{(k)}} - \overline{\nabla{\bs{F}(\bs{1} \bs{\bar{x}}_{\bs{\hat{\phi}}}^{(k)})}}_{\bs{v}^{(k)}} \right\|^2
            \\
            & + 3m^2 \left\| \overline{\nabla{\bs{F}(\bs{1} \bs{\bar{x}}_{\bs{\hat{\phi}}}^{(k)})}}_{\bs{v}^{(k)}} - \nabla{F}(\bs{\bar{x}}_{\bs{\hat{\phi}}}^{(k)}) \right\|^2
            \\
            & + \sum_{i=1}^m{m \hat{\pi}_i \left\langle \bs{\bar{g}}_{\bs{v}}^{(k)} - \overline{\nabla{\bs{F}(\bs{X}^{(k)})}}_{\bs{v}^{(k)}}, \frac1{\hat{\pi}_i} \sum_{j=1}^m{\hat{b}_{ij}^{(k)} \bs{y}_j^{(k)}} - m \bs{\bar{y}}^{(k)} + m \overline{\nabla{\bs{F}(\bs{X}^{(k)})}}_{\bs{v}^{(k)}} - m \nabla{F}(\bs{\bar{x}}_{\bs{\hat{\phi}}}^{(k)}) \right\rangle},
        \end{aligned}
    \end{aligned}
\end{equation}
where we separated the inner product term from the others because it will be equal to $0$ when taking its expected value, since $\bs{g}_i^{(k)}$ is an unbiased estimate of $\nabla{F}_i(\bs{x}_i^{(k)})$ according to the Assumption~\ref{assump:sgd}.
Also, note that $\bs{\bar{y}}^{(k)} = \bs{\bar{g}}_v^{(k)}$ according to Eq.~\eqref{eqn:ybar}.
Taking the expected value of Eq.~\eqref{eqn:fulldiscrepancy} and invoking Lemmas~\ref{lemma:sporadic_contraction}-\ref{lemma:sporadic_contraction:B}, we get
\begin{equation}
    \begin{aligned}
        \mathbb{E} & \left[ \left\| \bs{\Lambda}_{\bs{\hat{\pi}}}^{-1} \bs{\hat{B}}^{(k)} \bs{Y}^{(k)} - m \bs{1} \nabla{F}(\bs{\bar{x}}_{\bs{\hat{\phi}}}^{(k)}) \right\|_{\bs{\hat{\pi}}}^2 \right] \le 3 (\hat{\rho}_B + \hat{\rho}_{0,B}) \mathbb{E}\left[ \left\| \bs{\Lambda}_{\bs{\hat{\pi}}}^{-1} \bs{Y}^{(k)} - m \bs{1} \bs{\bar{y}}^{(k)} \right\|_{\bs{\hat{\pi}}}^2 \right] + 3 m^2 \hat{\rho}_{0,B} \mathbb{E}\left[ \| \bs{\bar{y}}^{(k)} \|^2 \right]
        \\
        & + m^2 \mathbb{E}\left[ \left\| \bs{\bar{g}}_{\bs{v}}^{(k)} - \overline{\nabla{\bs{F}(\bs{X}^{(k)})}}_{\bs{v}^{(k)}} \right\|^2 \right] + 3m^2 \mathbb{E}\left[ \left\| \overline{\nabla{\bs{F}(\bs{X}^{(k)})}}_{\bs{v}^{(k)}} - \overline{\nabla{\bs{F}(\bs{1} \bs{\bar{x}}_{\bs{\hat{\phi}}}^{(k)})}}_{\bs{v}^{(k)}} \right\|^2 \right]
        \\
        & + 3m^2 \mathbb{E}\left[ \left\| \overline{\nabla{\bs{F}(\bs{1} \bs{\bar{x}}_{\bs{\hat{\phi}}}^{(k)})}}_{\bs{v}^{(k)}} - \nabla{F}(\bs{\bar{x}}_{\bs{\hat{\phi}}}^{(k)}) \right\|^2 \right].
    \end{aligned}
\end{equation}
Applying Lemmas~\ref{lemma:graderror} and \ref{lemma:descenteffect}-\ref{lemma:descenteffect:ybar} to the expression above completes the proof. 

\section{Proofs of Main Lemmas}

\subsection{Proof of Lemma~\ref{lemma:xdispersion}} \label{app:lemma:xdispersion}
According to Young's inequality with $\varepsilon > 0$ to be tuned later, we have
\begin{equation}
    \begin{aligned}
        & \left\| \bs{X}^{(k+1)} - \bs{1} \bs{\bar{x}}_{\bs{\hat{\phi}}}^{(k+1)} \right\|_{\bs{\hat{\phi}}}^2 = \left\| \bs{\hat{A}}^{(k)} \bs{X}^{(k)} - \bs{\Lambda}_{\bs{\eta}}^{(k)} \bs{\hat{B}}^{(k)} \bs{Y}^{(k)} - \bs{1} (\bs{\hat{\phi}}^T \bs{\hat{A}}^{(k)} \bs{X}^{(k)} - \bs{\hat{\phi}}^T \bs{\Lambda}_{\bs{\eta}}^{(k)} \bs{\hat{B}}^{(k)} \bs{Y}^{(k)}) \right\|_{\bs{\hat{\phi}}}^2
        \\
        & = \left\| \left( \bs{\hat{A}}^{(k)} \bs{X}^{(k)} - \bs{1} \bs{\bar{x}}_{\bs{\hat{\phi}}}^{(k)} \right) - \bs{1} \left( \bs{\hat{\phi}}^T \bs{\hat{A}}^{(k)} \bs{X}^{(k)} -\bs{\bar{x}}_{\bs{\hat{\phi}}}^{(k)} \right) - \left( \bs{\Lambda}_{\bs{\eta}}^{(k)} \bs{\hat{B}}^{(k)} \bs{Y}^{(k)} - \bs{1} \bs{\hat{\phi}}^T \bs{\Lambda}_{\bs{\eta}}^{(k)} \bs{\hat{B}}^{(k)} \bs{Y}^{(k)} \right) \right\|_{\bs{\hat{\phi}}}^2
        \\
        & \begin{aligned}
            \,\, \le \left( 1 + \varepsilon \right) & \left\| \bs{\hat{A}}^{(k)} \bs{X}^{(k)} - \bs{1} \bs{\bar{x}}_{\bs{\hat{\phi}}}^{(k)} \right\|_{\bs{\hat{\phi}}}^2 + \left\| \bs{\hat{\phi}}^T \bs{\hat{A}}^{(k)} \bs{X}^{(k)} - \bs{\bar{x}}_{\bs{\hat{\phi}}}^{(k)} \right\|^2 + \left( 1 + \varepsilon^{-1} \right) \left\| \bs{\Lambda}_{\bs{\eta}}^{(k)} \bs{\hat{B}}^{(k)} \bs{Y}^{(k)} - \bs{1} \bs{\hat{\phi}}^T \bs{\Lambda}_{\bs{\eta}}^{(k)} \bs{\hat{B}}^{(k)} \bs{Y}^{(k)} \right\|_{\bs{\hat{\phi}}}^2
            \\
            & - 2 \left\langle \bs{\hat{A}}^{(k)} \bs{X}^{(k)} - \bs{1} \bs{\bar{x}}_{\bs{\hat{\phi}}}^{(k)} - \bs{\Lambda}_{\bs{\eta}}^{(k)} \bs{\hat{B}}^{(k)} \bs{Y}^{(k)} - \bs{1} \bs{\hat{\phi}}^T \bs{\Lambda}_{\bs{\eta}}^{(k)} \bs{\hat{B}}^{(k)} \bs{Y}^{(k)} , \bs{1} \left( \bs{\hat{\phi}}^T \bs{\hat{A}}^{(k)} \bs{X}^{(k)} - \bs{\bar{x}}_{\bs{\hat{\phi}}}^{(k)} \right) \right\rangle
        \end{aligned}
        \\
        & \begin{aligned}
            \,\, \le \left( 1 + \varepsilon \right) & \left\| \bs{\hat{A}}^{(k)} \bs{X}^{(k)} - \bs{1} \bs{\bar{x}}_{\bs{\hat{\phi}}}^{(k)} \right\|_{\bs{\hat{\phi}}}^2 + \left\| \left( \bs{\hat{A}}^{(k)} - \bs{\hat{A}} \right) \bs{X}^{(k)} \right\|_{\bs{\hat{\phi}}}^2 + \left( 1 + \varepsilon^{-1} \right) \left\| \bs{\Lambda}_{\bs{\eta}}^{(k)} \bs{\hat{B}}^{(k)} \bs{Y}^{(k)} - \bs{1} \bs{\hat{\phi}}^T \bs{\Lambda}_{\bs{\eta}}^{(k)} \bs{\hat{B}}^{(k)} \bs{Y}^{(k)} \right\|_{\bs{\hat{\phi}}}^2
            \\
            & - 2 \left\langle \bs{\hat{A}}^{(k)} \bs{X}^{(k)} - \bs{1} \bs{\bar{x}}_{\bs{\hat{\phi}}}^{(k)} - \bs{\Lambda}_{\bs{\eta}}^{(k)} \bs{\hat{B}}^{(k)} \bs{Y}^{(k)} - \bs{1} \bs{\hat{\phi}}^T \bs{\Lambda}_{\bs{\eta}}^{(k)} \bs{\hat{B}}^{(k)} \bs{Y}^{(k)} , \bs{1} \bs{\hat{\phi}}^T \left( \bs{\hat{A}}^{(k)} - \bs{\hat{A}} \right) \bs{X}^{(k)} \right\rangle,
        \end{aligned}
    \end{aligned}
\end{equation}
where we separated the inner-product term since it's expected value will be zero according to Assumption~\ref{assump:bernoulli}. Taking the expected value of this expression and employing Lemmas~\ref{lemma:sporadic_contraction}-\ref{lemma:sporadic_contraction:A}, \ref{lemma:trackingeffect}-\ref{lemma:trackingeffect:sporadic_modeldiv} and \ref{lemma:descenteffect}-\ref{lemma:descenteffect:yconsphi}, we get
\begin{equation}
    \begin{aligned}
        \mathbb{E}\left[ \left\| \bs{X}^{(k+1)} - \bs{1} \bs{\bar{x}}_{\bs{\hat{\phi}}}^{(k+1)} \right\|_{\bs{\hat{\phi}}}^2 \right] \le \left( (1 + \varepsilon) (\hat{\rho}_A + \hat{\rho}_{0,A}) + \hat{\rho}_{0,A} \right) & \mathbb{E}\left[ \left\| \bs{X}^{(k)} - \bs{1} \bs{\bar{x}}_{\bs{\hat{\phi}}}^{(k)} \right\|_{\bs{\hat{\phi}}}^2 \right]
        \\
        & + (1 + \varepsilon^{-1}) \kappa_3 \max(\eta_i^{(k)})^2 \mathbb{E}\left[ \left\| \bs{Y}^{(k)} \right\|_{\bs{\hat{\pi}}^{-1}}^2 \right].
    \end{aligned}
\end{equation}
Next, employing Lemma~\ref{lemma:descenteffect}-\ref{lemma:descenteffect:ypiinv}, we get
\begin{equation}
    \begin{aligned}
        \mathbb{E} & \left[ \left\| \bs{X}^{(k+1)} - \bs{1} \bs{\bar{x}}_{\bs{\hat{\phi}}}^{(k+1)} \right\|_{\bs{\hat{\phi}}}^2 \right] \le \left( \hat{\rho}_A + 2 \hat{\rho}_{0,A} + \varepsilon (\hat{\rho}_A + \hat{\rho}_{0,A}) \right) \mathbb{E}\left[ \left\| \bs{X}^{(k)} - \bs{1} \bs{\bar{x}}_{\bs{\hat{\phi}}}^{(k)} \right\|_{\bs{\hat{\phi}}}^2 \right]
        \\
        & + (1 + \varepsilon^{-1}) \kappa_3 \max(\eta_i^{(k)})^2 \mathbb{E}\left[ \left\| \bs{Y}^{(k)} \right\|_{\bs{\hat{\pi}}^{-1}}^2 \right] + (1 + \varepsilon^{-1}) \kappa_3 m^2 \max(\eta_i^{(k)})^2 \mathbb{E}\left[ \left\| \bs{\bar{y}}^{(k)} \right\|^2 \right].
    \end{aligned}
\end{equation}
Using Lemma~\ref{lemma:descenteffect}-\ref{lemma:descenteffect:ybar} on the last expression and letting $\varepsilon = \frac{1 - (\hat{\rho}_A + 2 \hat{\rho}_{0,A})}{2 (\hat{\rho}_A + \hat{\rho}_{0,A})}$ conclude the proof.

\subsection{Proof of Lemma~\ref{lemma:ydispersion}} \label{app:lemma:ydispersion}
According to Young's inequality, we have
\begin{equation}
    \begin{aligned}
        & \left\| \bs{\Lambda}_{\bs{\hat{\pi}}}^{-1} \bs{Y}^{(k+1)} - m \bs{1} \bs{\bar{y}}^{(k+1)} \right\|_{\bs{\hat{\pi}}}^2
        \\
        & = \left\| \bs{\Lambda}_{\bs{\hat{\pi}}}^{-1} \left( \bs{\hat{B}}^{(k)} \bs{Y}^{(k)} + \bs{\Lambda}_{\bs{v}}^{(k+1)} \bs{G}^{(k+1)} - \bs{\Lambda}_{\bs{v}}^{(k)} \bs{G}^{(k)} \right) - m \bs{1} \left( \bs{\bar{y}}^{(k)} + \bs{\bar{g}}_{\bs{v}}^{(k+1)} - \bs{\bar{g}}_{\bs{v}}^{(k)} \right) \right\|_{\bs{\hat{\pi}}}^2
        \\
        & = \left\| \left( \bs{\Lambda}_{\bs{\hat{\pi}}}^{-1} \bs{\hat{B}}^{(k)} \bs{Y}^{(k)} - m \bs{1} \bs{\bar{y}}^{(k)} \right) + \bs{\Lambda}_{\bs{\hat{\pi}}}^{-1} \left( \bs{\Lambda}_{\bs{v}}^{(k+1)} \bs{G}^{(k+1)} - \bs{\Lambda}_{\bs{v}}^{(k)} \bs{G}^{(k)} \right) - m \bs{1} \left( \bs{\bar{g}}_{\bs{v}}^{(k+1)} - \bs{\bar{g}}_{\bs{v}}^{(k)} \right) \right\|_{\bs{\hat{\pi}}}^2
    \end{aligned}
\end{equation}
\begin{equation*}
    \begin{aligned}
        & \begin{aligned}
            \,\, \le \left( 1 + 2 \frac{1 - (\hat{\rho}_B + \hat{\rho}_{0,B})}{4 (\hat{\rho}_B + \hat{\rho}_{0,B})} \right) & \left\| \bs{\Lambda}_{\bs{\hat{\pi}}}^{-1} \bs{\hat{B}}^{(k)} \bs{Y}^{(k)} - m \bs{1} \bs{\bar{y}}^{(k)} \right\|_{\bs{\hat{\pi}}}^2
            \\
            & + \left( 2 + \frac{4 (\hat{\rho}_B + \hat{\rho}_{0,B})}{1 - (\hat{\rho}_B + \hat{\rho}_{0,B})} \right) \left\| \bs{\Lambda}_{\bs{\hat{\pi}}}^{-1} \left( \bs{\Lambda}_{\bs{v}}^{(k+1)} \bs{G}^{(k+1)} - \bs{\Lambda}_{\bs{v}}^{(k)} \bs{G}^{(k)} \right) \right\|_{\bs{\hat{\pi}}}^2
            \\
            & + \left( 2 + \frac{4 (\hat{\rho}_B + \hat{\rho}_{0,B})}{1 - (\hat{\rho}_B + \hat{\rho}_{0,B})} \right) \left\| m \bs{1} \left( \bs{\bar{g}}_{\bs{v}}^{(k+1)} - \bs{\bar{g}}_{\bs{v}}^{(k)} \right) \right\|_{\bs{\hat{\pi}}}^2
        \end{aligned}
        \\
        & \begin{aligned}
            \,\, \le \frac{1 + \hat{\rho}_B + \hat{\rho}_{0,B}}{2 (\hat{\rho}_B + \hat{\rho}_{0,B})} & \left\| \bs{\Lambda}_{\bs{\hat{\pi}}}^{-1} \bs{\hat{B}}^{(k)} \bs{Y}^{(k)} - m \bs{1} \bs{\bar{y}}^{(k)} \right\|_{\bs{\hat{\pi}}}^2
            \\
            & + \frac{2 (1 + \hat{\rho}_B + \hat{\rho}_{0,B})}{1 - (\hat{\rho}_B + \hat{\rho}_{0,B})} \left( \left\| \bs{\Lambda}_{\bs{v}}^{(k+1)} \bs{G}^{(k+1)} - \bs{\Lambda}_{\bs{v}}^{(k)} \bs{G}^{(k)} \right\|^2 + m^2 \left\| \bs{\bar{g}}_{\bs{v}}^{(k+1)} - \bs{\bar{g}}_{\bs{v}}^{(k)} \right\|^2 \right)
        \end{aligned}
    \end{aligned}
\end{equation*}
where we used Lemma~\ref{lemma:sporadic_contraction}-\ref{lemma:sporadic_contraction:B} in the last inequality. Noting that $m^2 \| \bs{\bar{g}}_{\bs{v}}^{(k+1)} - \bs{\bar{g}}_{\bs{v}}^{(k)} \| \le m \| \bs{\Lambda}_{\bs{v}}^{(k+1)} \bs{G}^{(k+1)} - \bs{\Lambda}_{\bs{v}}^{(k)} \bs{G}^{(k)} \|$, we can take the expected value of the above expression and use Lemma~\ref{lemma:trackingeffect}-\ref{lemma:trackingeffect:sporadic_sgddiv} to get
\begin{equation} \label{eqn:ydispersion_halfway}
    \begin{aligned}
        \mathbb{E} & \left[ \left\| \bs{\Lambda}_{\bs{\hat{\pi}}}^{-1} \bs{Y}^{(k+1)} - m \bs{1} \bs{\bar{y}}^{(k+1)} \right\|_{\bs{\hat{\pi}}}^2 \right] \le \frac{1 + \hat{\rho}_B + \hat{\rho}_{0,B}}2 \mathbb{E}\left[ \left\| \bs{\Lambda}_{\bs{\hat{\pi}}}^{-1} \bs{Y}^{(k)} - m \bs{1} \bs{\bar{y}}^{(k)} \right\|_{\bs{\hat{\pi}}}^2 \right]
        \\
        & \begin{aligned}
            & + \frac{2 (m + 1) (1 + \hat{\rho}_B + \hat{\rho}_{0,B})}{1 - (\hat{\rho}_B + \hat{\rho}_{0,B})} \Bigg[ 2m  \overline{p \left( \sigma_0^2 B^{-1} (1 - B/D) + 5 (1 - p) \delta_0^2 \right)}
            \\
            & + \kappa_4 \left( \mathbb{E}\left[ \left\| \bs{X}^{(k+1)} - \bs{1} \bs{\bar{x}}_{\bs{\hat{\phi}}}^{(k+1)} \right\|_{\bs{\hat{\phi}}}^2 \right] + \mathbb{E}\left[ \left\| \bs{X}^{(k)} - \bs{1} \bs{\bar{x}}_{\bs{\hat{\phi}}}^{(k)} \right\|_{\bs{\hat{\phi}}}^2 \right] \right)
            \\
            & + 3m \overline{p \left( 2 \sigma_1^2 B^{-1} (1 - B/D) + 5 (1 - p) \delta_1^2 \right)} \mathbb{E}\left[ \left\| \nabla{F}(\bs{\bar{x}}_{\bs{\hat{\phi}}}^{(k)}) \right\|^2 \right] + m \kappa_5 \mathbb{E}\left[ \left\| \bs{\bar{x}}_{\bs{\hat{\phi}}}^{(k+1)} - \bs{\bar{x}}_{\bs{\hat{\phi}}}^{(k)} \right\|^2 \right] \Bigg].
        \end{aligned}
    \end{aligned}
\end{equation}
Finally, using Lemma~\ref{lemma:xdispersion} to upper bound $\mathbb{E}[ \| \bs{X}^{(k+1)} - \bs{1} \bs{\bar{x}}_{\bs{\hat{\phi}}}^{(k+1)} \|_{\bs{\hat{\phi}}}^2 ]$ and Lemmas~\ref{lemma:trackingeffect}-\ref{lemma:trackingeffect:modelonestep} and \ref{lemma:descenteffect}-\ref{lemma:descenteffect:ypiinv} to upper bound $\mathbb{E}[ \| \bs{\bar{x}}_{\bs{\hat{\phi}}}^{(k+1)} - \bs{\bar{x}}_{\bs{\hat{\phi}}}^{(k)} \|^2 ]$ concludes the proof.

\subsection{Proof of Lemma~\ref{lemma:loss}} \label{app:lemma:loss}

Let $\bs{x} = \bs{\bar{x}}_{\bs{\hat{\phi}}}^{(k+1)}$ and $\bs{x}' = \bs{\bar{x}}_{\bs{\hat{\phi}}}^{(k)}$ in $F(\bs{x}) \le F(\bs{x}') + \left\langle \nabla{F}(\bs{x}'), \bs{x} - \bs{x}' \right\rangle + \frac12 \bar{L} \| \bs{x} - \bs{x}' \|^2$, which is a consequence of Assumption~\ref{assump:lipschitz}. We get
\begin{equation} \label{eqn:descent_halfway}
    F(\bs{\bar{x}}_{\bs{\hat{\phi}}}^{(k+1)}) \le F(\bs{\bar{x}}_{\bs{\hat{\phi}}}^{(k)}) + \left\langle \nabla{F}(\bs{\bar{x}}_{\bs{\hat{\phi}}}^{(k)}), \bs{\bar{x}}_{\bs{\hat{\phi}}}^{(k+1)} - \bs{\bar{x}}_{\bs{\hat{\phi}}}^{(k)} \right\rangle + \frac12 \bar{L} \left\| \bs{\bar{x}}_{\bs{\hat{\phi}}}^{(k+1)} - \bs{\bar{x}}_{\bs{\hat{\phi}}}^{(k)} \right\|^2.
\end{equation}
First, we use Eq.~\eqref{eqn:averages} (see Definition~\ref{def:avg_vec} for the notation) to write
\begin{equation}
    \bs{\bar{x}}_{\bs{\hat{\phi}}}^{(k+1)} - \bs{\bar{x}}_{\bs{\hat{\phi}}}^{(k)} = \bs{\hat{\phi}}^T \bs{\hat{A}}^{(k)} \bs{X}^{(k)} - \bs{\bar{x}}_{\bs{\hat{\phi}}}^{(k)} - \bs{\hat{\phi}}^T \bs{\Lambda}_{\bs{\eta}}^{(k)} \bs{\hat{B}}^{(k)} \bs{Y}^{(k)}.
\end{equation}
Therefore, we need to obtain upper bounds for (i) $\langle F(\bs{\bar{x}}_{\bs{\hat{\phi}}}^{(k)}), - \bs{\hat{\phi}}^T \bs{\Lambda}_{\bs{\eta}}^{(k)} \bs{\hat{B}}^{(k)} \bs{Y}^{(k)} \rangle$, (ii) $\langle F(\bs{\bar{x}}_{\bs{\hat{\phi}}}^{(k)}), \bs{\hat{\phi}}^T \bs{\hat{A}}^{(k)} \bs{X}^{(k)} - \bs{\bar{x}}_{\bs{\hat{\phi}}}^{(k)} \rangle$ and (iii) $\| \bs{\bar{x}}_{\bs{\hat{\phi}}}^{(k+1)} - \bs{\bar{x}}_{\bs{\hat{\phi}}}^{(k)} \|^2$. We already obtained an upper bound for (iii) in Lemma~\ref{lemma:trackingeffect}-\ref{lemma:trackingeffect:modelonestep}, and the expected value of (ii) will be zero according to Assumption~\ref{assump:bernoulli} (See Eq.~\eqref{eqn:expected_mat}).
We proceed with a bound for (i) as
\begin{equation} \label{eqn:descent_innerprod}
    \begin{aligned}
        & \left\langle \nabla{F}(\bs{\bar{x}}_{\bs{\hat{\phi}}}^{(k)}), - \bs{\hat{\phi}}^T \bs{\Lambda}_{\bs{\eta}}^{(k)} \bs{\hat{B}}^{(k)} \bs{Y}^{(k)} \right\rangle = \sum_{i=1}^m{\hat{\phi}_i \eta_i^{(k)} \left\langle \nabla{F}(\bs{\bar{x}}_{\bs{\hat{\phi}}}^{(k)}), - \sum_{j=1}^m{\hat{b}_{ij}^{(k)} \bs{y}_j^{(k)}} \right\rangle}
        \\
        & = \sum_{i=1}^m{\hat{\phi}_i \eta_i^{(k)} \left\langle \nabla{F}(\bs{\bar{x}}_{\bs{\hat{\phi}}}^{(k)}), \mp m \hat{\pi}_i \nabla{F}(\bs{\bar{x}}_{\bs{\hat{\phi}}}^{(k)}) - \sum_{j=1}^m{\hat{b}_{ij}^{(k)} \bs{y}_j^{(k)}} \right\rangle}
        \\
        & = \sum_{i=1}^m{\hat{\phi}_i \hat{\pi}_i \eta_i^{(k)} \left\langle \nabla{F}(\bs{\bar{x}}_{\bs{\hat{\phi}}}^{(k)}), \mp m \nabla{F}(\bs{\bar{x}}_{\bs{\hat{\phi}}}^{(k)}) - \frac1{\hat{\pi}_i} \sum_{j=1}^m{\hat{b}_{ij}^{(k)} \bs{y}_j^{(k)}} \right\rangle}
        \\
        & = \sum_{i=1}^m \hat{\phi}_i \hat{\pi}_i \eta_i^{(k)} \left( - m \left\| \nabla{F}(\bs{\bar{x}}_{\bs{\hat{\phi}}}^{(k)}) \right\|^2 + \left\langle \nabla{F}(\bs{\bar{x}}_{\bs{\hat{\phi}}}^{(k)}), m \nabla{F}(\bs{\bar{x}}_{\bs{\hat{\phi}}}^{(k)}) - \frac1{\hat{\pi}_i} \sum_{j=1}^m{\hat{b}_{ij}^{(k)} \bs{y}_j^{(k)}} \right\rangle \right)
    \end{aligned}
\end{equation}
\begin{equation*}
    \begin{aligned}
        & \le \sum_{i=1}^m \hat{\phi}_i \hat{\pi}_i \eta_i^{(k)} \left( -\frac{m}2 \left\| \nabla{F}(\bs{\bar{x}}_{\bs{\hat{\phi}}}^{(k)}) \right\|^2 + \frac1{2m} \left\| m \nabla{F}(\bs{\bar{x}}_{\bs{\hat{\phi}}}^{(k)}) - \frac1{\hat{\pi}_i} \sum_{j=1}^m{\hat{b}_{ij}^{(k)} \bs{y}_j^{(k)}} \right\|^2 \right)
        \\
        & \le - \frac{m^2}2 \overline{\hat{\phi} \hat{\pi} \eta^{(k)}} \left\| \nabla{F}(\bs{\bar{x}}_{\bs{\hat{\phi}}}^{(k)}) \right\|^2 + \frac{\max(\hat{\phi}_i \eta_i^{(k)})}{2m} \left\| \bs{\Lambda}_{\bs{\hat{\pi}}}^{-1} \bs{\hat{B}}^{(k)} \bs{Y}^{(k)} - m \bs{1} \nabla{F}(\bs{\bar{x}}_{\bs{\hat{\phi}}}^{(k)}) \right\|_{\bs{\hat{\pi}}}^2.
    \end{aligned}
\end{equation*}
Note that we obtained an upper bound for $\mathbb{E}[ \| \bs{\Lambda}_{\bs{\hat{\pi}}}^{-1} \bs{\hat{B}}^{(k)} \bs{Y}^{(k)} - m \bs{1} \nabla{F}(\bs{\bar{x}}_{\bs{\hat{\phi}}}^{(k)}) \|_{\bs{\hat{\pi}}}^2 ]$ in Lemma~\ref{lemma:gradconsensus}. Substituting Eq.~\eqref{eqn:descent_innerprod} back into Eq.~\eqref{eqn:descent_halfway} and taking the expected value of the result concludes the proof.


\section{Proofs of Propositions}

\subsection{Proof of Proposition~\ref{proposition:lr}} \label{app:proposition:lr}
We have that $\rho(\bs{\Psi}^{(k)}) = \max(|\lambda_1|, |\lambda_2|)$, where $\lambda_1$ and $\lambda_2$ are the eigenvalues of the matrix $\bs{\Psi}^{(k)}$. Since $\tr(\bs{\Psi}^{(k)}) = \lambda_1 + \lambda_2$ and $\det(\bs{\Psi}^{(k)}) = \lambda_1 \lambda_2$, we have that $\max(|\lambda_1|, |\lambda_2|) = \max(\frac12 |\tr(\bs{\Psi}^{(k)}) \pm (\tr(\bs{\Psi}^{(k)})^2 - 4 \det(\bs{\Psi}^{(k)}))^{1/2}|)$. Therefore, by noting that $\tr(\bs{\Psi}^{(k)}) = \psi_{11}^{(k)} + \psi_{22}^{(k)} > 0$ due to both $\psi_{11}^{(k)} > 0$ and $\psi_{22}^{(k)} > 0$, we will have that $\rho(\bs{\Psi}^{(k)}) = \frac12 (\tr(\bs{\Psi}^{(k)}) + (\tr(\bs{\Psi}^{(k)})^2 - 4 \det(\bs{\Psi}^{(k)}))^{1/2})$. Solving this equation, we get $\rho(\bs{\Psi}^{(k)}) = \frac12 (\psi_{11}^{(k)} + \psi_{22}^{(k)} + ((\psi_{11}^{(k)} - \psi_{22}^{(k)})^2 + 4 \psi_{12}^{(k)} \psi_{21}^{(k)})^{1/2})$, and to have $\rho(\bs{\Psi}^{(k)}) < 1$, the following must hold.
\begin{equation} \label{eqn:spectral_tr_det}
    \begin{gathered}
        (\psi_{11}^{(k)} - \psi_{22}^{(k)})^2 + 4 \psi_{12}^{(k)} \psi_{21}^{(k)} < 4 + (\psi_{11}^{(k)})^2 + (\psi_{22}^{(k)})^2 - 4 \psi_{11}^{(k)} - 4 \psi_{22}^{(k)} + 2 \psi_{11}^{(k)} \psi_{22}^{(k)}
        \\
        \Rightarrow \psi_{12}^{(k)} \psi_{21}^{(k)} < 1 - \psi_{11}^{(k)} - \psi_{22}^{(k)} + \psi_{11}^{(k)} \psi_{22}^{(k)} = (1 - \psi_{11}^{(k)}) (1 - \psi_{22}^{(k)})
    \end{gathered}
\end{equation}
Since we have both $\psi_{12}^{(k)} > 0$ and $\psi_{21}^{(k)} > 0$, we need to ensure that both $\psi_{11}^{(k)} < 1$ and $\psi_{22}^{(k)} < 1$ to be able to solve for Eq.~\eqref{eqn:spectral_tr_det}. We will enforce these two constraints by solving for (i) $\psi_{11}^{(k)} < \frac{3 + \tilde{\rho}_A}4$ and (ii) $\psi_{22}^{(k)} < \frac{7 + \tilde{\rho}_B}8$. Furthermore, noting that $\psi_{21}^{(k)}$ and $\psi_{22}^{(k)}$ are comprised of the terms $\kappa_7 \kappa_4 (1 + \psi_{11}^{(k)}) + m \kappa_7 \kappa_5 \hat{\rho}_{0,A}$ and $\kappa_7 \kappa_4 \psi_{12}^{(k)}$ according to Lemma~\ref{lemma:ydispersion}, respectively, we will also enforce $\psi_{21}^{(k)} < 2 \kappa_7 \kappa_4 + m \kappa_7 \kappa_5 \hat{\rho}_{0,A}$ and $\psi_{12}^{(k)} < \frac{1 - \tilde{\rho}_B}{4\kappa_7 \kappa_4}$. However, plugging the constraints (i) and (ii) into Eq.~\eqref{eqn:spectral_tr_det} also requires us to satisfy $\psi_{12}^{(k)} \psi_{21}^{(k)} < \frac{1 - \tilde{\rho}_A}4 \frac{1 - \tilde{\rho}_B}8$. Due to more flexibility satisfying stricter conditions over $\psi_{12}^{(k)}$, we choose the last two constraints as (iii) $\psi_{12}^{(k)} < \frac{1 - \tilde{\rho}_B}{8 (2 \kappa_7 \kappa_4 + m \kappa_7 \kappa_5 \hat{\rho}_{0,A})} \frac{1 - \tilde{\rho}_A}4$ and $\psi_{21}^{(k)} < 2 \kappa_7 \kappa_4 + m \kappa_7 \kappa_5 \hat{\rho}_{0,A}$. Using the values $\psi_{ij}^{(k)}$ given in Lemmas~\ref{lemma:xdispersion} and \ref{lemma:ydispersion}, we have



\begin{itemize}
    \item $\psi_{11}^{(k)} < \frac{3 + \tilde{\rho}_A}4$:
    \begin{equation} \label{eqn:lr_cond1}
        \max(\eta_i^{(k)}) < \frac1{2\sqrt{m}} \frac{1 - \tilde{\rho}_A}{\sqrt{1 + \tilde{\rho}_A}} \frac1{\sqrt{\kappa_1 \kappa_3}}.
    \end{equation}

    \item $\psi_{12}^{(k)} < \frac{1 - \tilde{\rho}_B}{8 \kappa_7 (2 \kappa_4 + m \kappa_5 \hat{\rho}_{0,A})} \frac{1 - \tilde{\rho}_A}4$:
    \begin{equation} \label{eqn:lr_cond2}
        \max(\eta_i^{(k)}) < \frac1{8 \sqrt{m+1}} \frac{1 - \tilde{\rho}_A}{\sqrt{1 + \hat{\rho}_A}} \frac{1 - \tilde{\rho}_B}{\sqrt{1 + \tilde{\rho}_B}} \frac1{\sqrt{\kappa_3 (2 \kappa_4 + m \kappa_5 \hat{\rho}_{0,A})}}.
    \end{equation}

    \item $\psi_{21}^{(k)} < 2 \kappa_7 \kappa_4 + m \kappa_7 \kappa_5 \hat{\rho}_{0,A}$:
    \begin{equation}
        \max(\eta_i^{(k)}) < \frac12 \sqrt{1 - \tilde{\rho}_A} \frac1{ \sqrt{\kappa_3 \kappa_5}},
    \end{equation}
    where we used the fact that $\kappa_1 \le \kappa_4$.

    \item $\psi_{22}^{(k)} < \frac{7 + \tilde{\rho}_B}8$:
    \begin{equation}
        \max(\eta_i^{(k)}) < \frac1{4 \sqrt{m (m + 1)}} \frac{1 - \tilde{\rho}_B}{\sqrt{1 + \tilde{\rho}_B}} \frac1{\sqrt{\kappa_3 \kappa_5}},
    \end{equation}
    where we used the looser upper bound $\psi_{12}^{(k)} < \frac{1 - \tilde{\rho}_B}{4\kappa_7 \kappa_4}$ here to obtain cleaner results, although a stricter upper bound $\psi_{12}^{(k)} < \frac{1 - \tilde{\rho}_B}{8 (2 \kappa_7 \kappa_4 + m \kappa_7 \kappa_5 \hat{\rho}_{0,A})} \frac{1 - \tilde{\rho}_A}4$ was guaranteed.
\end{itemize}
We obtained four constraints on the maximum learning rate $\max(\eta_i^{(k)})$ that need to be satisfied to have $\rho(\bs{\Psi}^{(k)}) < 1$. Noting that the learning rates satisfying Eq.~\eqref{eqn:lr_cond2} are a subset of Eq.~\eqref{eqn:lr_cond1}  (due to $\kappa_1 \le \kappa_4$) concludes the proof.

\subsection{Proof of Proposition~\ref{proposition:loss}} \label{app:proposition:loss}
We must ensure that the coefficients for $\mathbb{E}[\| \nabla{F}(\bs{\bar{x}}_{\bs{\hat{\phi}}}^{(r)}) \|^2]$ in Eq.~\eqref{eqn:loss_expanded} are negative for all $r = 0, ..., k$, i.e., $\gamma_0^{(k)} > 0$ and $\gamma_0^{(r)} - \left( \sum_{s=r+1}^k{(\bs{\psi}_0^{(s)})^T \bs{\Psi}^{(s-2:r)}} \right) \bs{\gamma}^{(r)} > 0$ for $r = 0, ..., k-1$. Using the definitions of $\gamma_0^{(k)}$ and $\bs{\psi}_0^{(k)}$ from Lemma~\ref{lemma:loss} and $\bs{\gamma}^{(k)}$ from Lemmas~\ref{lemma:xdispersion} and \ref{lemma:ydispersion}, we have
\begin{itemize}
    \item $\gamma_0^{(k)} > 0$: We need to obtain the conditions under which we have
    \begin{equation}
        m \hat{\pi}_i - \frac{2 \sigma_{1,i}^2 p_i (1 - B_i / D_i)}{B_i} - 3 (1 - p_i) \delta_{1,i}^2 - 3 \hat{\rho}_{0,B} \kappa_2 > 0.
    \end{equation}
    Let us enforce $\frac{2 \sigma_{1,i}^2 p_i (1 - B_i / D_i)}{B_i} + 3 (1 - p_i) \delta_{1,i}^2 + 3 \hat{\rho}_{0,B} \kappa_2 \le \frac{m \hat{\pi}_i}{\Gamma_1}$ for a scalar $\Gamma_1 > 1$.
    which holds if $3 (1 - p_i) \delta_{1,i}^2 \le \frac{m \hat{\pi}_i}{3 \Gamma_1}$, $\frac{2 \sigma_{1,i}^2 p_i (1 - B_i / D_i)}{B_i} \le \frac{m \hat{\pi}_i}{3 \Gamma_1}$ and $3 \hat{\rho}_{0,B} \kappa_2 \le \frac{m \hat{\pi}_i}{3 \Gamma_1}$ are satisfied. Thus, we need to enforce
    \begin{equation} \label{eqn:p_B_cond}
        \begin{gathered}
            p_i \geq 1 - \frac{m \hat{\pi}_i}{9 \Gamma_1 \delta_{1,i}^2},
            \\
            B_i \geq \frac{D_i}{1 + \frac{m \hat{\pi}_i}{6 \Gamma_1 \sigma_{1,i}^2 p_i} D_i} \qquad \Rightarrow B_i \geq \frac{D_i}{1 + \frac{m \hat{\pi}_i}{6 \Gamma_1 \sigma_{1,i}^2} D_i}
            \\
            \hat{p}_{ij} (1 - \hat{p}_{ij}) \le \frac{m}{18 (m - 1) \kappa_2 \Gamma_1 b_{ij}^2} \frac{\min(\hat{\pi}_i)^2}{\max(\hat{\pi}_i)} \qquad \Rightarrow \hat{p}_{ij} \in (0, 1 -\hat{r}'_B] \cup [\hat{r}'_B, 1]
        \end{gathered}
    \end{equation}
    where we used the fact that $\hat{\rho}_{0,B} = 2 (m - 1) \frac{\max(\hat{\pi}_i)}{\min(\hat{\pi}_i)} \max( b_{ij}^2 \hat{p}_{ij} (1 - \hat{p}_{ij}) )$ from Lemma~\ref{lemma:sporadic_contraction}-\ref{lemma:sporadic_contraction:B}, and $\hat{r}'_B = \frac12 (1 + \sqrt{\max(0, 1 - \hat{\tau}'_B)})$ with $\hat{\tau}'_B = \frac{2m}{9 (m - 1) \kappa_2 \Gamma_1 b_{ij}^2} \frac{\min(\hat{\pi}_i)^2}{\max(\hat{\pi}_i)}$.

    \item $\gamma_0^{(r)} - ( \sum_{s=r+1}^k{(\bs{\psi}_0^{(s)})^T \bs{\Psi}^{(s-2:r)}} ) \bs{\gamma}^{(r)} > 0$:
    To simplify this constraint, we further strengthen it to $\gamma_0^{(r)} - \sum_{s=r+1}^k{(\bs{\psi}_0^{(s)})^T (\prod_{q=r}^{s-2}{\rho(\bs{\Psi}^{(q)})}) \bs{\gamma}^{(r)}} > 0$.
    Since the values of $\bs{\psi}_0^{(k)}$ are proportional to the non-increasing learning rate $\max(\eta_i^{(k)})$ according to the Lemma~\ref{lemma:loss}, the values of $\bs{\psi}_0^{(k)}$ will also be non-increasing, i.e., $\bs{\psi}_0^{(k+1)} \le \bs{\psi}_0^{(k)}$. We can use this fact to ensure that the constraint is valid for all $s = r+1, ..., k$ by strengthening it to $\gamma_0^{(r)} - (\bs{\psi}_0^{(r+1)})^T ( \sum_{s=r+1}^k{(\prod_{q=r}^{s-2}{\rho(\bs{\Psi}^{(q)})})} ) \bs{\gamma}^{(r)} > 0$, or $\gamma_0^{(r)} - (\bs{\psi}_0^{(r)})^T ( \sum_{s=r+1}^k{(\prod_{q=r}^{s-2}{\rho(\bs{\Psi}^{(q)})})} ) \bs{\gamma}^{(r)} > 0$ to make the superscripts the same. Since we are using a constant learning rate $\eta_i^{(k)} = \eta_i^{(0)}$, we will have $\rho(\bs{\Psi}^{(k)}) = \rho_\Psi$ for all $k \geq 0$, and thus the constraint can be simplified to $\gamma_0^{(r)} - (\bs{\psi}_0^{(r)})^T ( \sum_{s=r+1}^k{\rho_\Psi^{s-1-r}} ) \bs{\gamma}^{(r)} > 0$. We strengthen this bound further by letting the sum run to infinity, and by employing the fact that $0 < \rho_\Psi < 1$, we get
    the final constraint to 
    be
    \begin{equation}
        \gamma_0^{(r)} - \frac{(\bs{\psi}_0^{(r)})^T \bs{\gamma}^{(r)}}{1 - \rho_\Psi} > 0.
    \end{equation}
    Let us enforce a stronger constraint $\frac{(\bs{\psi}_0^{(r)})^T \bs{\gamma}^{(r)}}{1 - \rho_\Psi} < \frac{\gamma_0^{(r)}}{\Gamma_2}$ with $\Gamma_2 > 1$. Plugging in the values of these parameters from Lemmas~\ref{lemma:xdispersion}-\ref{lemma:loss} and doing some algebra, we get
    \begin{equation} \label{eqn:lr_pow3}
        \begin{aligned}
            & \left( \left( \frac{m (1 + 3 \hat{\rho}_{0,B})}{3 \kappa_7 \tilde{\rho}_B} + 1 \right) \kappa_4 \kappa_6 + \frac{\kappa_5}{m} \right) \kappa_2 \kappa_3 \max(\eta_i^{(r)})^3
            \\
            & < \frac{m (1 - \rho_\Psi)}{3 \Gamma_2 \kappa_7 \tilde{\rho}_B} \left( 1 - \frac1{\Gamma_1} \right) \frac{\overline{\hat{\phi} \hat{\pi} \eta^{(r)}}}{\max(\hat{\phi}_i)} - \frac{3 \max(\eta_i^{(r)})}{m} \overline{p (2 \sigma_1^2 B^{-1} (1 - B/D) + 5 (1 - p) \delta_1^2)}
        \end{aligned}
    \end{equation}
    where we used $m \hat{\pi}_i - \frac{2 (1 - B_i / D_i) \sigma_{1,i}^2 p_i}{B_i} - 3 (1 - p_i) \delta_{1,i}^2 - 3 \hat{\rho}_{0,B} \kappa_2 \geq m \hat{\pi}_i (1 - \frac1{\Gamma_1})$ as a result of the first part of this proof, and $\kappa_8 \le \max(\hat{\phi}_i) \kappa_1 \le \max(\hat{\phi}_i) \kappa_4$. To be able to satisfy this, we first need to ensure that the right-hand side of the inequality is positive. We have
    \begin{equation} \label{eqn:lr_het}
        \frac{\overline{\eta^{(r)}}}{\max(\eta_i^{(r)})} > \frac9{m^2 \left( 1 - \frac1{\Gamma_1} \right)} \frac{\max(\hat{\phi}_i)}{\min(\hat{\phi}_i \hat{\pi}_i)} \frac{\Gamma_2 \kappa_7 \tilde{\rho}_B}{1 - \rho_\Psi} \overline{p (2 \sigma_1^2 B^{-1} (1 - B/D) + 5 (1 - p) \delta_1^2)}.
    \end{equation}
    Let us enforce $\frac{\bar{\eta}^{(0)}}{\max(\eta_i^{(0)})} > \frac{9 \Gamma_3}{m^2 \left( 1 - \frac1{\Gamma_1} \right)} \frac{\max(\hat{\phi}_i)}{\min(\hat{\phi}_i \hat{\pi}_i)} \frac{\Gamma_2 \kappa_7 \tilde{\rho}_B}{1 - \rho_\Psi} \overline{p (2 \sigma_1^2 B^{-1} (1 - B/D) + 5 (1 - p) \delta_1^2)}$ for $\Gamma_3 > 1$. Since $\frac{\bar{\eta}^{(0)}}{\max(\eta_i^{(0)})} \le 1$ holds by definition, we need to ensure that the right-hand side of this inequality is less that one to be able to satisfy the constraint. Therefore, noting that $p_i \geq 1 - \frac{m \hat{\pi}_i}{9 \Gamma_1 \delta_{1,i}^2}$ and $B_i \geq \frac{D_i}{1 + \frac{m \hat{\pi}_i}{6 \Gamma_1 \sigma_{1,i}^2} D_i}$ from Eq.~\eqref{eqn:p_B_cond}, we have
    \begin{equation}
        \begin{gathered}
            p_i \left( 2 \frac{(1 - B_i / D_i) \sigma_{1,i}^2}{B_i} + 5 (1 - p_i) \delta_{1,i}^2 \right) \le \frac{m^2 \left( 1 - \frac1{\Gamma_1} \right)}{9 \Gamma_3} \frac{\min(\hat{\phi}_i \hat{\pi}_i)}{\max(\hat{\phi}_i)} \frac{1 - \rho_\Psi}{\Gamma_2 \kappa_7 \tilde{\rho}_B}
            \\
            \Rightarrow p_i \le \frac{m \left( \Gamma_1 - 1 \right)}{8 \Gamma_3} \frac{\min(\hat{\phi}_i \hat{\pi}_i)}{\max(\hat{\phi}_i \hat{\pi}_i)} \frac{1 - \rho_\Psi}{\Gamma_2 \kappa_7 \tilde{\rho}_B}.
        \end{gathered}
    \end{equation}
    So far, we have obtained the allowed range $p_i \in (1 - \frac{m \hat{\pi}_i}{9 \Gamma_1 \delta_{1,i}^2}, \frac{m \left( \Gamma_1 - 1 \right)}{8 \Gamma_3} \frac{\min(\hat{\phi}_i \hat{\pi}_i)}{\max(\hat{\phi}_i \hat{\pi}_i)} \frac{1 - \rho_\Psi}{\Gamma_2 \kappa_7 \tilde{\rho}_B}]$ for gradient probabilities. However, by tuning $\Gamma_1$, we can make the maximum value of this range to be one as follows.
    \begin{equation} \label{eqn:Gamma1_val}
        \Gamma_1 = 1 + \frac{8 \Gamma_3}m \frac{\max(\hat{\phi}_i \hat{\pi}_i)}{\min(\hat{\phi}_i \hat{\pi}_i)} \frac{\Gamma_2 \kappa_7 \tilde{\rho}_B}{1 - \rho_\Psi}.
    \end{equation}
    Plugging this back into Eq.~\eqref{eqn:lr_het} gives us
    \begin{equation} \label{eqn:lr_het_Gamma2}
        \frac{\bar{\eta}^{(0)}}{\max(\eta_i^{(0)})} \geq \frac1{\Gamma_3}.
    \end{equation}
    Finally, using the lower bounds obtained in Eqs.~\eqref{eqn:lr_het}, \eqref{eqn:Gamma1_val} and \eqref{eqn:lr_het_Gamma2} in Eq.~\eqref{eqn:lr_pow3}, we get
    \begin{equation}
        \left( \left( \frac{m (1 + 3 \hat{\rho}_{0,B})}{3 \kappa_7 \tilde{\rho}_B} + 1 \right) \kappa_4 \kappa_6 + \frac{\kappa_5}{m} \right) \max(\eta_i^{(0)})^2 < \frac8{3 \kappa_2 \kappa_3} \frac1{\Gamma_1} \left( 1 - \frac1{\Gamma_3} \right) \frac{\max(\hat{\phi}_i \hat{\pi}_i)}{\max(\hat{\phi}_i)},
    \end{equation}
    concluding the proof.
\end{itemize}

\section{Proof of Theorem~\ref{theorem:constant}} \label{app:theorem:constant}
We start from Eq.~\eqref{eqn:loss_expanded} and use Corollary~\ref{corollary:constant} to write
\begin{equation}
    \begin{aligned}
        \mathbb{E}\left[ F(\bs{\bar{x}}_{\bs{\hat{\phi}}}^{(k+1)}) \right] \le F(\bs{\bar{x}}_{\bs{\hat{\phi}}}^{(0)}) & - \gamma_0^{(0)} \mathbb{E}\left[ \left\| \nabla{F}(\bs{\bar{x}}_{\bs{\hat{\phi}}}^{(k)}) \right\|^2 \right]
        \\
        & - \sum_{r=0}^{k-1}{\left( \gamma_0^{(0)} - (\bs{\psi}_0^{(0)})^T \left( \sum_{s=r+1}^k{\bs{\Psi}^{s-1-r}} \right) \bs{\gamma}^{(0)} \right) \mathbb{E}\left[ \left\| \nabla{F}(\bs{\bar{x}}_{\bs{\hat{\phi}}}^{(r)}) \right\|^2 \right]}
        \\
        & + (\bs{\psi}_0^{(0)})^T \left( \sum_{r=0}^k{\bs{\Psi}^r} \right) \bs{\varsigma}^{(0)} + (\bs{\psi}_0^{(0)})^T \sum_{r=0}^{k-1}{\left( \sum_{s=r+1}^k{\bs{\Psi}^{s-1-r}} \right) \bs{\omega}^{(0)}} + \omega_0^{(0)} (k + 1).
    \end{aligned}
\end{equation}
In the proof of Proposition~\ref{proposition:loss} in Appendix~\ref{app:proposition:loss}, we showed that when using a constant learning rate $\eta_i^{(0)}$ and $0 < \rho_\Psi < 1$, we get $\sum_{s=r+1}^k{\bs{\Psi}^{s-1-r}} \le \frac1{1 - \rho_\Psi}$. Using this result, and noting that we also similarly have $\sum_{r=0}^k{\bs{\Psi}^r} \le \frac1{1 - \rho_\Psi}$, we get
\begin{equation}
    \begin{aligned}
        \mathbb{E}\left[ F(\bs{\bar{x}}_{\bs{\hat{\phi}}}^{(k+1)}) \right] \le F(\bs{\bar{x}}_{\bs{\hat{\phi}}}^{(0)}) & - \left( \gamma_0^{(0)} - \frac{(\bs{\psi}_0^{(0)})^T \bs{\gamma}^{(0)}}{1 - \rho_\Psi} \right) \sum_{r=0}^k{\mathbb{E}\left[ \left\| \nabla{F}(\bs{\bar{x}}_{\bs{\hat{\phi}}}^{(r)}) \right\|^2 \right]} + \frac{(\bs{\psi}_0^{(0)})^T \bs{\varsigma}^{(0)}}{1 - \rho_\Psi}
        \\
        & + \frac{(\bs{\psi}_0^{(0)})^T \bs{\omega}^{(0)}}{1 - \rho_\Psi} (k + 1) + \omega_0^{(0)} (k + 1).
    \end{aligned}
\end{equation}
Note that the equation above is a compact reformulation of Eq.~\eqref{eqn:loss_expanded}. Rearranging the above expression and plugging in the values of $\bs{\omega}^{(0)}$, $\bs{\psi}_0^{(0)}$ and $\omega_0^{(0)}$ from Lemmas~\ref{lemma:xdispersion}-\ref{lemma:loss}, we get
\begin{equation}
    \begin{aligned}
        \frac1{k + 1} & \sum_{r=0}^k{\mathbb{E}\left[ \left\| \nabla{F}(\bs{\bar{x}}_{\bs{\hat{\phi}}}^{(r)}) \right\|^2 \right]} \le \frac{F(\bs{\bar{x}}_{\bs{\hat{\phi}}}^{(0)}) - F^\star}{q \max(\eta_i^{(0)}) (k + 1)} + \frac{(1 + 3 \hat{\rho}_{0,B}) \kappa_8}{2q (1 - \rho_\Psi)} \frac{\mathbb{E}\left[ \left\| \bs{X}^{(0)} - \bs{1} \bs{\bar{x}}_{\bs{\hat{\phi}}}^{(0)} \right\|_{\bs{\hat{\phi}}}^2 \right]}{k + 1}
        \\
        & + \frac{3 \max(\hat{\phi}_i) \tilde{\rho}_B}{2m q (1 - \rho_\Psi)} \frac{\mathbb{E}\left[ \left\| \bs{\Lambda}_{\bs{\hat{\pi}}}^{-1} \bs{Y}^{(0)} - m \bs{1} \bs{\bar{y}}^{(0)} \right\|_{\bs{\hat{\pi}}}^2 \right]}{k + 1}
        \\
        & + \frac{\max(\hat{\phi}_i)}{q} \left(\frac{m (1 + 3 \hat{\rho}_{0,B})}{2} + \frac{3 \kappa_7 \tilde{\rho}_B}{1 - \rho_\Psi} \right) \left( \overline{\sigma_0^2 p B^{-1} (1 - B/D)} + 5 \overline{(1 - p) \delta_0^2} \right)
        \\
        & + \frac{3 m \max(\hat{\phi}_i)}{2q} \kappa_9 \kappa_3 \kappa_7 \tilde{\rho}_B \left( \overline{\sigma_0^2 p B^{-1} (1 - B/D)} + 3 \overline{(1 - p) \delta_0^2} \right) \max(\eta_i^{(0)})^2,
    \end{aligned}
\end{equation}
where
$q$ was defined in
Proposition~\ref{proposition:loss}.
    

\section{Additional Experiments} \label{app:additional_exps}

We carry out an additional ablation study of system parameters in our paper on the CIFAR-10 dataset, which is given in Fig.~\ref{fig:cifar10_ablations}. In Fig.~\ref{fig:resnet18_data_dist}, we vary the number of labels each client holds, with $1$ corresponding to the extreme non-IID case, and $10$ corresponding to the completely IID case. In Fig.~\ref{fig:resnet18_graph_conn}, we vary the radius of the RGG, with higher radius indicating a denser connectivity in the network. In Fig.~\ref{fig:resnet18_num_clients}, we vary the number of clients $m$ from $10$ to $50$ to validate that our methodology maintains its superiority for larger network sizes. Finally, in Fig.~\ref{fig:vgg11_learning_rate}, we do a hyperparameter analysis of the learning rate $\eta$ from $10^{-4}$ to $10^{-1}$, which shows that the choice of $\eta = 0.01$ is the optimal choice for both datasets. In all of these studies, we observe that our \texttt{Spod-GT} algorithm is capable of consistently outperforming all baselines at different levels of data heterogeneity, confirming the benefit of integrating the notion of sporadicity in both communications and computations.

\begin{figure*}[h]
    \centering
    \begin{subfigure}[t]{\textwidth}
        \centering
        \includegraphics[width=0.75\textwidth]{images/legend.png}
    \end{subfigure}
    \begin{subfigure}[t]{0.24\textwidth}
        \centering
        \includegraphics[width=\textwidth]{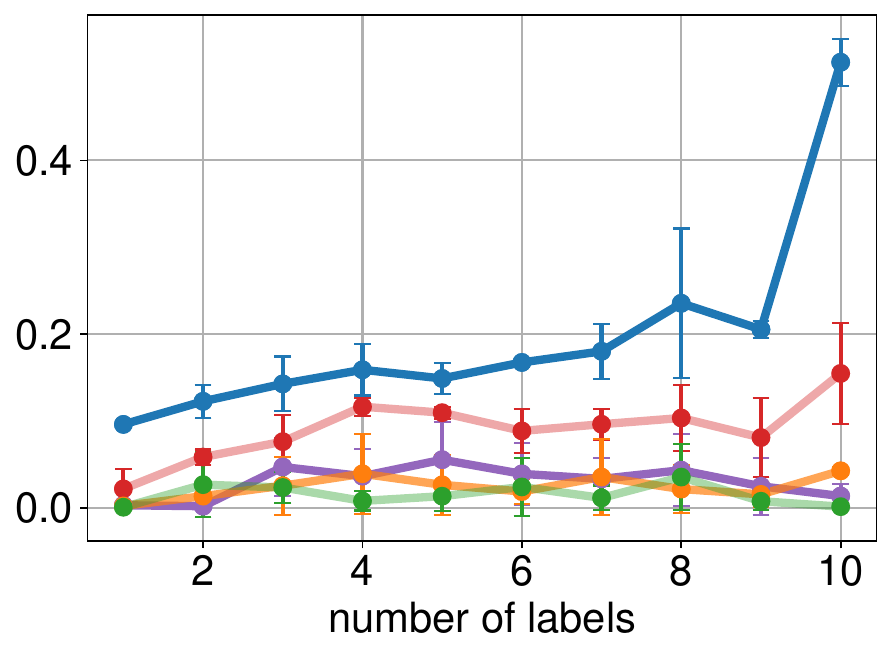}
        \caption{Varying number of labels per client.}
        \label{fig:resnet18_data_dist}
    \end{subfigure}
    \hfill
    \begin{subfigure}[t]{0.24\textwidth}
        \centering
        \includegraphics[width=\textwidth]{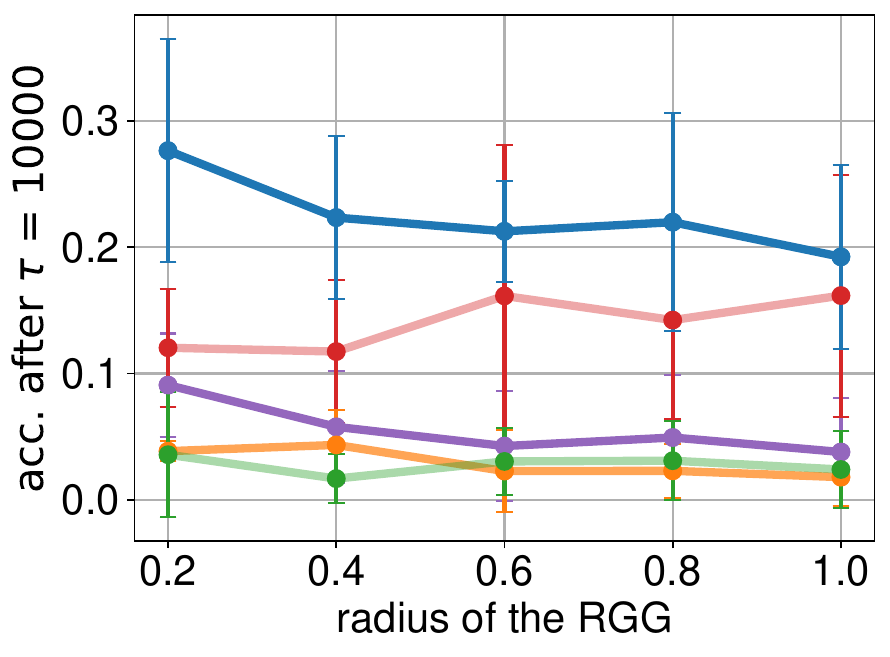}
        \caption{Varying the radius of random geometric  graph.}
        \label{fig:resnet18_graph_conn}
    \end{subfigure}
    \hfill
    \begin{subfigure}[t]{0.24\textwidth}
        \centering
        \includegraphics[width=\textwidth]{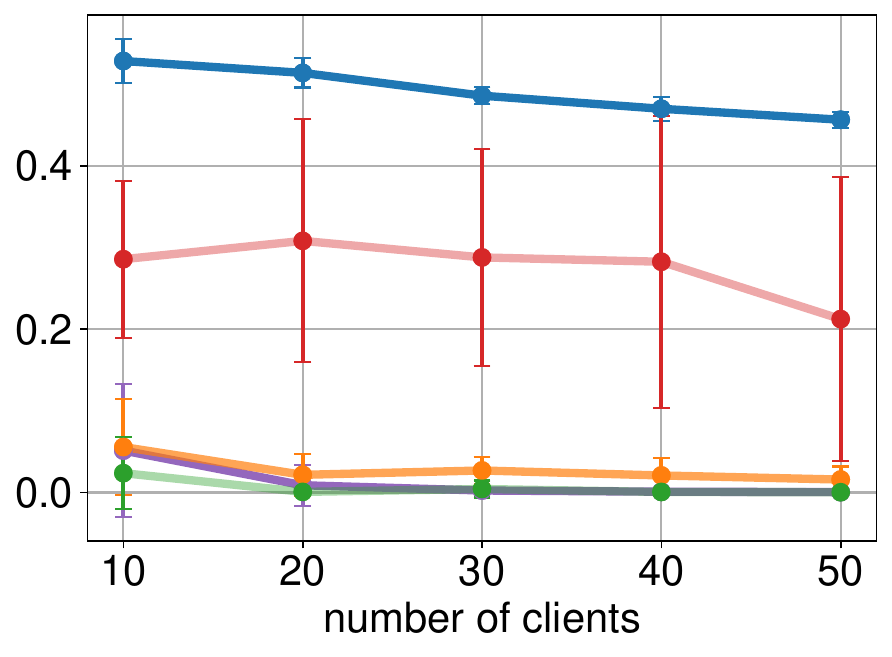}
        \caption{Varying the number of clients $m$ in the network.}
        \label{fig:resnet18_num_clients}
    \end{subfigure}
    \hfill
    \begin{subfigure}[t]{0.24\textwidth}
        \centering
        \includegraphics[width=\textwidth]{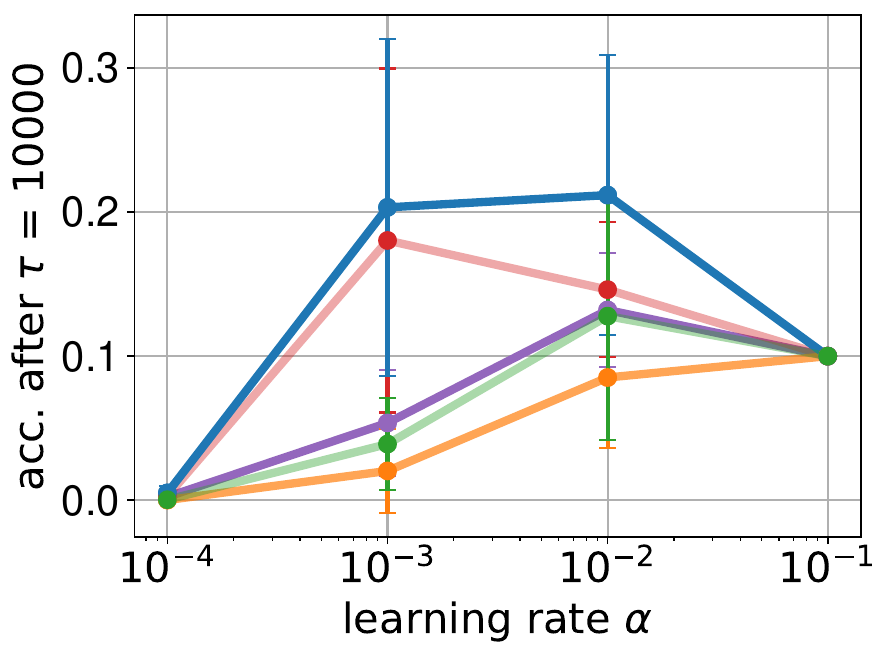}
        \caption{Varying the learning rate $\eta$ for all clients.}
        \label{fig:vgg11_learning_rate}
    \end{subfigure}
    \caption{\small Effects of system parameters on CIFAR-10. The overall results confirm the advantage of \texttt{Spod-GT}.}
    \label{fig:cifar10_ablations}
\end{figure*}

\end{document}